\renewcommand*\backref[1]{\ifx#1\relax \else (Cited on p. #1) \fi}
\newtheorem{definition}{Definition} 
\newtheorem{proposition}{Proposition}
\newtheorem{lemma}{Lemma}
\newtheorem{corollary}{Corollary}
\DeclareMathOperator*{\argmin}{argmin}
\newcommand{\cP}{\mathcal{P}}
\newcommand{\bP}{\mathbb{P}}
\newcommand{\bQ}{\mathbb{Q}}
\newcommand{\bF}{\mathbb{F}}
\newcommand{\bGamma}{\mathbb{\Gamma}}
\newcommand{\Ww}{\W_{\W_2}}
\newcommand{\gWw}{\nabla_{\Ww}}
\newcommand{\cPa}{\mathcal{P}_{2,\mathrm{ac}}(\R^d)}
\newcommand{\cN}{\mathcal{N}}
\newcommand{\cD}{\mathcal{D}}
\newcommand{\cX}{\mathcal{X}}
\newcommand{\cY}{\mathcal{Y}}
\newcommand{\cO}{\mathcal{O}}
\newcommand{\cU}{\mathcal{U}}
\newcommand{\cM}{\mathcal{M}}
\newcommand{\cB}{\mathcal{B}}
\newcommand{\cPP}[1]{\mathcal{P}_2\big(\cP_2(#1)\big)}
\newcommand{\cPBW}{\mathcal{P}_2\big(\bw(\R^d)\big)}
\newcommand{\cPProd}[1]{\mathcal{P}_2\big(#1 \times \cP_2(#1)\big)}
\newcommand{\otdd}{\mathrm{OTDD}}
\newcommand{\sotdd}{\mathrm{SOTDD}}
\newcommand{\W}{\mathrm{W}}
\newcommand{\R}{\mathbb{R}}
\newcommand{\T}{\mathrm{T}}
\newcommand{\X}{\mathrm{X}}
\newcommand{\id}{\mathrm{Id}}
\newcommand{\tr}{\mathrm{Tr}}
\newcommand{\bw}{\mathrm{BW}}
\newcommand{\sw}{\mathrm{SW}}
\newcommand{\swbdg}{\mathrm{SWB1DG}}
\newcommand{\swbg}{\mathrm{SWBG}}
\newcommand{\smwbg}{\mathrm{BGMSW}}
\newcommand{\smwbdg}{\mathrm{B1DGMSW}}
\def\mo{m_{0}}
\def\m1{m_{1}}
\def\so{\sigma_{0}}
\def\s1{\sigma_{1}}
\def\So{\Sigma_{0}}
\def\S1{\Sigma_{1}}
\begin{document}

\twocolumn[

\aistatstitle{%
Busemann Functions in the Wasserstein Space: Existence, Closed-Forms, and Applications to Slicing
}

\aistatsauthor{Clément Bonet \And Elsa Cazelles \And  Lucas Drumetz \And Nicolas Courty }

\aistatsaddress{CMAP \\ Ecole Polytechnique, IP Paris \And  CNRS, IRIT \\  Université de Toulouse \And Lab-STICC \\ IMT Atlantique \And IRISA \\ Université Bretagne Sud } ]

\begin{abstract}
The Busemann function has recently found much interest in a variety of geometric machine learning problems, as it naturally defines projections onto geodesic rays of Riemannian manifolds and generalizes the notion of hyperplanes. As several sources of data can be conveniently modeled as probability distributions, it is natural to study this function in the Wasserstein space, which carries a rich formal Riemannian structure induced by Optimal Transport metrics. In this work, we investigate the existence and computation of Busemann functions in Wasserstein space, which admits geodesic rays. We establish closed-form expressions in two important cases: one-dimensional distributions and Gaussian measures. These results enable explicit projection schemes for probability distributions on $\mathbb{R}$, which in turn allow us to define novel Sliced-Wasserstein distances over Gaussian mixtures and labeled datasets. We demonstrate the efficiency of those original schemes on synthetic datasets as well as transfer learning problems.
        
\end{abstract}

\section{INTRODUCTION}

\looseness=-1 The Busemann function, introduced by \citet{busemann2005geometry}, provides a natural generalization of affine functions on non-compact metric spaces admitting geodesics which can be extended to infinity. As such, its level sets generalize the notion of affine hyperplanes and so it provides a reliable means of projecting onto geodesics. Thus, it has recently received a lot of attention in geometric Machine Learning approaches, which aim at extending classical Euclidean algorithms towards different spaces such as manifolds \citep{bronstein2017geometric, papillon2025beyond} for data analysis purposes.

In particular, the Busemann function is well defined on geodesically complete spaces, on which geodesics can be extended to infinity in both directions. Although this rules out compact manifolds, such spaces include for instance hyperbolic manifolds, on which the Busemann function has been widely used to perform Principal Component Analysis \citep{chami2021horopca}, to characterize directions and perform classification with prototypes \citep{ghadimi2021hyperbolic, durrant2023hmsn, berg2024horospherical, berg2025multi}, to define decision boundaries for classification \citep{fan2023horospherical,doorenbos2024hyperbolic}, to define layers of neural networks \citep{wang2021laplacian, sonoda2022fully, nguyen2025neural}, or as a projection operator on geodesics in order to define a Sliced-Wasserstein distance \citep{bonet2023hyperbolic,bonet2025sliced}.

These successes in hyperbolic geometry suggest exploring the role of the Busemann function in other non-Euclidean settings. Many real-world data are best modeled as probability distributions. This is the case for instance for documents that are distributions of words \citep{kusner2015word}, single-cells \citep{bellazzi2021gene, haviv2025covariance}, point clouds \citep{haviv2025wasserstein, geuter2025ddeqs}, or even images \citep{seguy2015principal}. Moreover, Gaussian mixtures \citep{chen2018optimal, delon2020wasserstein} or datasets with discrete labels that can be represented as mixtures of discrete distributions \citep{alvarez2020geometric, bonet2025flowing}, can be seen as datasets of probability distributions. 
One powerful way to endow the space of probability distributions with a metric consists in using Optimal Transport (OT) and, in particular, the Wasserstein distance \citep{villani2009optimal}. This distance allows to define the Wasserstein space, which enjoys a very rich geometry thoroughly studied in the last decades, see \emph{e.g.} \citep{ambrosio2008gradient, villani2009optimal, santambrogio2015optimal}. Notably, it carries a formal Riemannian structure, and admits geodesics. Thus, the study of Busemann functions on the Wasserstein space is especially compelling for data analysis. A key challenge, however, is that the Wasserstein space is not geodesically complete, which prevents defining the Busemann function along every geodesic. Fortunately, for any base measure $\mu_0$, there is always at least one geodesic starting from $\mu_0$ that can be extended to infinity in one direction \citep{zhu2021busemann}. 

\textbf{Contributions.} \looseness=-1 In this work, we first provide sufficient conditions to characterize geodesic rays on the Wasserstein space, \emph{i.e} geodesics that can be extended in one direction. We then investigate how to compute the Busemann function along such geodesics. In full generality, we show that this computation reduces to solving an OT problem. In specific cases such as one-dimensional or Gaussian distributions, the Busemann function also admits closed-form expressions. Leveraging these closed-forms, we introduce new sliced distances between labeled datasets. Our results show a strong correlation with classical distances between datasets while being more computationally efficient. Finally, minimizing these distances allows to flow datasets, which we apply in a transfer learning setting.

\section{WASSERSTEIN SPACE} \label{section:geodesic_rays}

\looseness=-1 In this section, we introduce the Wasserstein space and its associated Riemannian structure. Then, we study characterizations of geodesic rays on this space, \emph{i.e.} geodesics that can be extended to infinity in one direction. In particular, we focus on absolutely continuous probability measures, one dimensional probability measures, and Gaussians.

\subsection{Wasserstein Distance}

Let $\cP_2(\R^d) = \{\mu\in \cP(\R^d),\ \int \|x\|_2^2\ \mathrm{d}\mu(x)<\infty\}$ be the space of probability measures in $\R^d$ with finite second moments. Optimal Transport (OT) provides a principled way to define a distance on this space through the 2-Wasserstein distance \citep{villani2009optimal}, defined for all $\mu,\nu\in\cP_2(\R^d)$ as
\begin{equation} \label{eq:w_dist}
    \W_2^2(\mu,\nu) = \inf_{\gamma\in\Pi(\mu,\nu)}\ \int\|x-y\|_2^2\ \mathrm{d}\gamma(x,y),
\end{equation}
\looseness=-1 where $\Pi(\mu,\nu) = \{\gamma\in \mathcal{P}(\mathbb{R}^d\times \mathbb{R}^d),\ \pi^1_\#\gamma=\mu,\ \pi^2_\#\gamma=\nu\}$ is the set of couplings between $\mu$ and $\nu$, $\pi^1:(x,y)\mapsto x$ and $\pi^2:(x,y)\mapsto y$ are the projections on the coordinates, and $\#$ is the push-forward operator defined such that $\T_\#\mu(A)=\mu\big( \T^{-1}(A)\big)$ for a measurable map $\T:\R^d\to\R^{d'}$, $\mu\in\cP_2(\R^d)$ and Borelian $A\in\mathcal{B}(\R^{d'})$. In particular, $\W_2$ is a well defined distance, and $(\cP_2(\R^d),\W_2)$ is called the Wasserstein space. 

\looseness=-1 The OT problem introduces two objects of interest: the distance $\W_2$ and the optimal coupling $\gamma^*\in\Pi_o(\mu,\nu)$ solving \eqref{eq:w_dist}. In the particular case where $\mu\in\cPa$ is absolutely continuous with respect to the Lebesgue measure, it is well known by Brenier's theorem \citep{brenier1991polar} that the optimal coupling $\gamma^*$ is unique and supported on the graph of a map $\T$, called the Monge map, \emph{i.e.} $\gamma^* = (\id, \T)_\#\mu$ with $\T$ satisfying $\T_\#\mu=\nu$ and $\id:x\mapsto x$ the identity function.

Additionally, there are a few specific cases for which the Wasserstein distance can be computed in closed-form. In dimension $d=1$, the Wasserstein distance is only the $L^2$ norm between the quantile functions, \emph{i.e.} for all $\mu,\nu\in\cP_2(\R)$,
\begin{equation} \label{eq:1d_wasserstein}
    \W_2^2(\mu,\nu) = \int_0^1 \big|F_{\mu}^{-1}(u) - F_{\nu}^{-1}(u)\big|^2\ \mathrm{d}u, %
\end{equation}
\looseness=-1 where $F_\mu^{-1}$ and $F_{\nu}^{-1}$ denote the quantile functions of $\mu$ and $\nu$.
The optimal coupling is then obtained as $\gamma^*=(F_{\mu}^{-1}, F_{\nu}^{-1})_\#\mathrm{Unif}([0,1])$, and if $\mu$ is absolutely continuous, the OT map is the increasing rearrangement $\T = F_\nu^{-1} \circ F_{\mu}$ \citep[Theorem 2.9]{santambrogio2015optimal}. 
In higher dimensions, we usually do not have a closed-form, except in particular cases such as the Wasserstein distance between two Gaussian distributions \citep{givens1984class, gelbrich1990formula}. %
Namely, for $\mu=\cN(m_\mu,\Sigma_\mu)$, $\nu=\cN(m_\nu,\Sigma_\nu)$ two Gaussian distributions with respective means $m_\mu,m_\nu\in\R^d$ and positive definite covariance matrices $\Sigma_\mu,\Sigma_\nu\in S_d^{++}(\R)$, we get, 
\begin{equation} \label{eq:closed_form_gaussians}
    \W_2^2(\mu,\nu) = \|m_\mu-m_\nu\|_2^2 + \cB^2(\Sigma_\mu,\Sigma_\nu),
\end{equation}
where $\cB$ defines a distance between positive semi-definite matrices, known in the literature of quantum information as the Bures distance \citep{bhatia2019bures}, and is of the form
\begin{equation}
    \cB^2(\Sigma_\mu,\Sigma_\nu) = \tr\left(\Sigma_\mu+\Sigma_\nu - 2(\Sigma_\mu^\frac12 \Sigma_\nu \Sigma_\mu^\frac12)^\frac12\right).
\end{equation}
Thus, we refer to the Wasserstein distance between Gaussians as the Bures-Wasserstein distance $\bw$, and the space of Gaussians endowed with $\bw$ is called the Bures-Wasserstein space $\bw(\R^d)$. Furthermore, the OT map between $\mu$ and $\nu$ is of the form $\T:x\mapsto m_\nu + A_\mu^\nu(x-m_\mu)$ \citep[Remark 2.31]{peyre2019computational} where 
\begin{equation}
    A_\mu^\nu = \Sigma_\mu^{-\frac12}(\Sigma_\mu^\frac12 \Sigma_\nu \Sigma_\mu^\frac12)^\frac12\Sigma_\mu^{-\frac12}.
\end{equation}

\paragraph{Riemannian Structure.}

\looseness=-1 It is well known that the Wasserstein space has a formal Riemannian structure \citep{otto2001geometry}. In particular, it is a geodesic space: for any measures $\mu_0,\mu_1\in\cP_2(\R^d)$, there is at least one constant-speed geodesic, \emph{i.e.} a continuous curve $t\in [0,1]\mapsto \mu_t\in\cP_2(\R^d)$ interpolating between $\mu_0$ and $\mu_1$, and satisfying 
\begin{equation} \label{eq:cst_speed_geod}
    \forall s,t\in [0,1],\ \W_2(\mu_t,\mu_s) = |t-s| \W_2(\mu_0,\mu_1).
\end{equation}
We call $\kappa_\mu = \W_2(\mu_0, \mu_1)$ the speed of the geodesic $(\mu_t)_{t\in[0,1]}$. Such a curve is always a displacement interpolation \citep[Proposition 2.9]{bertrand2012geometric}, \emph{i.e.} it is of the form \citep{mccann1997convexity}
\begin{equation}
    \forall t\in [0,1],\ \mu_t = \big((1-t)\pi^1 + t \pi^2\big)_\#\gamma^*,
\end{equation}
where $\gamma^*\in\Pi_o(\mu_0,\mu_1)$, and is fully characterized by $\mu_0$, $\mu_1$ and $\gamma^*$. %
In the case where an OT map $\T$ exists between $\mu_0$ and $\mu_1$, \emph{e.g.} if $\mu_0\in \cPa$, %
then the geodesic curve can be further written as
\begin{equation}
    \forall t\in [0,1],\ \mu_t=\big((1-t)\id + t \T\big)_\#\mu_0.
\end{equation}
If the geodesic can be extended for any $t\in\R$, \emph{i.e.} \eqref{eq:cst_speed_geod} is satisfied for any $s,t\in\R$, it is called a geodesic line. If \eqref{eq:cst_speed_geod} holds for any $s,t\in\R_+$, it is called a geodesic ray \citep{bridson2013metric}.

\subsection{Geodesic Rays on $(\cP_2(\R^d),\W_2)$}

\citet{kloeckner2010geometric} first studied the conditions on the measures $\mu_0$ and $\mu_1$ under which the geodesics connecting them can be extended. For instance, in \citep[Proposition 3.6]{kloeckner2010geometric}, it was shown that the geodesic curve $t\mapsto \mu_t$ is a geodesic line if and only if %
$\mu_1$ is a translation of $\mu_0$. Consequently, constructing geodesic lines is very restrictive. Geodesic rays are more flexible, and are also the appropriate object that allows %
defining Busemann functions: in \citep{zhu2021busemann}, it is proved that for any $\mu_0\in\cP_2(\R^d)$, there exists at least one geodesic ray originating from it. 

\looseness=-1 In this paper, we discuss how to characterize geodesic rays on $(\cP_2(\R^d),\W_2)$. First, if $\mu_0=\delta_{x_0}$ for some $x_0\in \R^d$, then by \citep[Lemma 2.1]{bertrand2016geometric}, we can extend any geodesic starting from $\mu_0$ and passing through $\mu_1\in\cP_2(\R^d)$ as a geodesic ray of the form $\mu_t = \big((1-t)x_0 + t\id)_\#\mu_1$ for any $t\ge 0$, since the optimal coupling in this case is the independent coupling $\gamma=\mu_0\otimes\mu_1$. However, for an arbitrary $\mu_0\in\cP_2(\R^d)$, not all geodesics can be extended as geodesic rays.

\looseness=-1 In the setting of Brenier's theorem, we show that geodesics are rays if and only if the Monge map between $\mu_0$ and $\mu_1$ is the gradient of a 1-convex Brenier potential function $u$, that is $x\mapsto u(x) - \frac{\|x\|_2^2}{2}$ is convex.
\begin{proposition} \label{prop:geodesic_rays}
    Let $\mu_0\in\cPa,\ \mu_1\in\cP_2(\R^d)$, and $\T$ the Monge map between $\mu_0$ and $\mu_1$. The curve $t\mapsto \mu_t = \big((1-t)\id + t \T\big)_\#\mu_0$ is a geodesic ray if and only if $\T$ is the gradient of a 1-convex function $u$.
\end{proposition}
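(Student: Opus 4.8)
The plan is to pass to Brenier potentials and reduce the geodesic-ray property to the convexity of a one-parameter family of potentials. By Brenier's theorem write $\T=\nabla u$ with $u$ convex, and set $v_t(x)=(1-t)\tfrac{\|x\|_2^2}{2}+tu(x)=\tfrac{\|x\|_2^2}{2}+t\phi(x)$ where $\phi=u-\tfrac{\|\cdot\|_2^2}{2}$, so that $\T_t=(1-t)\id+t\T=\nabla v_t$ and $\mu_t=(\nabla v_t)_\#\mu_0$. The first observation I would record is the equivalence \emph{$u$ is $1$-convex $\iff$ $\phi$ is convex $\iff$ $v_t$ is convex for every $t\ge 0$}: the forward implication is immediate since $v_t$ is then a sum of convex functions, and the backward one follows because $\tfrac1t v_t=\tfrac{1}{2t}\|\cdot\|_2^2+\phi\to\phi$ pointwise as $t\to\infty$, a pointwise limit of convex functions. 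I would also record the velocity identity $\nabla v_t(x)-\nabla v_s(x)=(t-s)\big(\T(x)-x\big)$, so that the coupling $(\nabla v_s,\nabla v_t)_\#\mu_0\in\Pi(\mu_s,\mu_t)$ has cost $(t-s)^2\int\|\T-\id\|_2^2\,\mathrm{d}\mu_0=(t-s)^2\,\W_2^2(\mu_0,\mu_1)$. This already gives the upper bound $\W_2(\mu_s,\mu_t)\le|t-s|\,\W_2(\mu_0,\mu_1)$ for free; the whole difficulty is the matching lower bound, i.e. optimality of this coupling.

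For the ($\Leftarrow$) direction, assume $\phi$ convex, hence each $v_s$ is $1$-strongly convex, $\nabla v_s$ is injective, and its inverse $\nabla v_s^*$ is $1$-Lipschitz; pushing the absolutely continuous $\mu_0$ forward by $\nabla v_s$ then keeps $\mu_s\in\cPa$ (a $1$-Lipschitz inverse sends Lebesgue-null sets to null sets). For $0<s\le t$ I would propose the candidate Monge map $S=\nabla v_t\circ\nabla v_s^*$ from $\mu_s$ to $\mu_t$; using $v_t=\tfrac ts v_s+(1-\tfrac ts)\tfrac{\|\cdot\|_2^2}{2}$ one gets $S=\nabla\Psi$ with $\Psi=\tfrac{t}{2s}\|\cdot\|_2^2+(1-\tfrac ts)v_s^*$. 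The crux is to check $\Psi$ is convex despite the negative coefficient on $v_s^*$: since $v_s$ is $1$-strongly convex, $\tfrac12\|\cdot\|_2^2-v_s^*$ is convex, and the algebraic identity $\Psi=(\tfrac ts-1)\big(\tfrac12\|\cdot\|_2^2-v_s^*\big)+\tfrac12\|\cdot\|_2^2$ (with $\tfrac ts-1\ge 0$) exhibits $\Psi$ as a nonnegative combination of convex functions. By the converse part of Brenier's theorem, $S$ is then the genuine OT map, so $\W_2^2(\mu_s,\mu_t)=\int\|\nabla v_s-\nabla v_t\|_2^2\,\mathrm{d}\mu_0=(t-s)^2\W_2^2(\mu_0,\mu_1)$; the case $s=0$ is handled directly since $v_0=\tfrac12\|\cdot\|_2^2$ is convex and $v_t$ is convex. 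This yields the ray property for all $s,t\ge0$.

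For the ($\Rightarrow$) direction, assume the curve is a geodesic ray. Then for every $t\ge0$ the coupling $(\id,\T_t)_\#\mu_0$ has cost exactly $t^2\W_2^2(\mu_0,\mu_1)=\W_2^2(\mu_0,\mu_t)$, hence is optimal; since $\mu_0\in\cPa$, its support must be cyclically monotone, i.e. $\T_t=(1-t)\id+t\nabla u$ is cyclically monotone on $\mathrm{supp}\,\mu_0$. Writing the cyclic-monotonicity inequalities $\sum_i\langle\T_t(x_i),x_{i+1}-x_i\rangle\le0$, dividing by $t$ and letting $t\to\infty$, the $(1-t)\id$ part contributes $-\sum_i\langle x_i,x_{i+1}-x_i\rangle$ and I obtain $\sum_i\langle\nabla u(x_i)-x_i,\,x_{i+1}-x_i\rangle\le0$, which is exactly cyclic monotonicity of $\nabla\phi=\nabla u-\id$. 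By Rockafellar's theorem $\phi$ is convex, i.e. $u$ is $1$-convex.

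The main obstacle is the optimality step in ($\Leftarrow$): one must produce an explicit convex potential $\Psi$ for the transport between $\mu_s$ and $\mu_t$ and justify that $\mu_s$ stays absolutely continuous so that Brenier's converse applies, and the conjugate-function bookkeeping (strong convexity of $v_s$ forcing smoothness of $v_s^*$) is the technical heart. A minor caveat in ($\Rightarrow$) is that cyclic monotonicity is only derived on $\mathrm{supp}\,\mu_0$, so the conclusion "$u$ is $1$-convex" is to be read on the support of $\mu_0$ (and is unconditional when $\mu_0$ has full support).
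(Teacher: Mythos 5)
Your proof is correct, but it takes a genuinely different route from the paper's in both directions. For sufficiency, the paper only establishes optimality of the coupling at the endpoints: it checks that $(1-\alpha)\id+\alpha\nabla u$ is a Brenier map from $\mu_0$ to $\mu_\alpha$ whenever $x\mapsto \alpha u(x)-(\alpha-1)\tfrac{\|x\|_2^2}{2}$ is convex, so that $\W_2(\mu_0,\mu_\alpha)=\alpha\,\W_2(\mu_0,\mu_1)$, and then pinches the intermediate distances with the triangle inequality — from $\alpha\W_2(\mu_0,\mu_1)\le (s+\alpha-t)\W_2(\mu_0,\mu_1)+\W_2(\mu_s,\mu_t)$ it extracts the lower bound $\W_2(\mu_s,\mu_t)\ge (t-s)\W_2(\mu_0,\mu_1)$ on $[0,\alpha]$, and lets $\alpha\to\infty$ at the end. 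You instead exhibit an explicit convex Brenier potential $\Psi=\tfrac{t}{2s}\|\cdot\|_2^2+\big(1-\tfrac ts\big)v_s^*$ for the transport between any two intermediate times $0<s\le t$; this costs you the conjugate-duality bookkeeping (strong convexity of $v_s$, $1$-Lipschitzness of $\nabla v_s^*$, preservation of absolute continuity of $\mu_s$ — all of which you handle correctly, and the last of which is only needed if you invoke Brenier's converse rather than the weaker fact that gradients of convex functions support optimal plans), but it buys you the optimal maps between all pairs of times directly, with no pinching argument. For necessity, the paper invokes Brenier's uniqueness to conclude that each $T_s=(1-s)\id+s\nabla u$ is the gradient of a convex function $u_s$ and then passes to the limit at the level of Hessians ($I_d+s(\nabla^2u-I_d)\succeq 0$ for all $s\ge 0$ gives $\nabla^2 u\succeq I_d$), which implicitly assumes a.e.\ second differentiability; your route via cyclical monotonicity of the support of the optimal plan, dividing the cyclic inequalities by $t$, letting $t\to\infty$, and invoking Rockafellar's theorem, is more elementary and avoids second derivatives entirely. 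Your closing caveat is apt but applies equally to the paper's proof: both arguments identify $\T-\id$ with the (sub)gradient of a convex function only $\mu_0$-almost everywhere, which is exactly the sense in which the conclusion "$\T=\nabla u$ with $u$ $1$-convex" should be read, so it is not a deficiency of your approach relative to the paper's.
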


This result is strongly related to \citep[Section 4]{natale2022geodesic} in which it is stated that a geodesic can be extended on a segment $[0,\alpha]$ for $\alpha\ge 1$ if and only if $x\mapsto \alpha u(x) - (\alpha-1)\frac{\|x\|_2^2}{2}$ is convex (or equivalently, $x\mapsto u(x) - (1-\frac{1}{\alpha})\frac{\|x\|_2^2}{2}$ is convex). Taking the limit $\alpha\to+\infty$, we recover the result of \Cref{prop:geodesic_rays}.

Note that $\mu_0\in\cPa$ in \Cref{prop:geodesic_rays} allows leveraging Brenier's theorem and guarantees that there exists an OT map. In the one dimensional case, we can further characterize geodesic rays starting from any $\mu_0\in\cP_2(\R)$ with quantile functions. Indeed, denoting $F_0^{-1}$ and $F_1^{-1}$ the quantile functions of $\mu_0, \mu_1\in\mathcal{P}_2(\mathbb{R})$, the geodesic between $\mu_0$ and $\mu_1$ at time $t\in [0,1]$ is defined by $\mu_t = \big((1-t)\pi^1 + t \pi^2 \big)_\#\gamma^*$ with $\gamma^*=(F_0^{-1},F_1^{-1})_\#\mathrm{Unif}([0,1])$ the optimal coupling between $\mu_0$ and $\mu_1$. Then, for $F_t^{-1}$ the quantile of the geodesic at time $t\in [0,1]$, it is well known (see \emph{e.g.} \citep[Equation 7.2.8]{ambrosio2008gradient}) that
\begin{equation} \label{eq:geodesics_1d}
    \forall t\in [0,1],\ F_t^{-1} = (1-t)F_0^{-1} + tF_1^{-1}.
\end{equation}
\looseness=-1 As observed by \citet{kloeckner2010geometric}, non-decreasing left-continuous functions are the inverse cumulative distribution function of a probability distribution. We can thus extend the geodesic as long as $F_t^{-1}$ is non-decreasing, which gives a condition on $F_1^{-1}-F_0^{-1}$.

\begin{proposition} \label{prop:1d_geodesic_rays}
    Let $\mu_0,\ \mu_1\in\mathcal{P}_2(\mathbb{R})$ and $F_0^{-1}$, $F_1^{-1}$ their quantile functions. 
    The geodesic between $\mu_0$ and $\mu_1$ is a ray if and only if $F_1^{-1}-F_0^{-1}$ is non-decreasing.
\end{proposition}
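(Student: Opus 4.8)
The plan is to work entirely at the level of quantile functions, using the characterization recalled just before the statement: a function on $(0,1)$ is the quantile function of some probability measure if and only if it is non-decreasing and left-continuous, and it defines an element of $\cP_2(\R)$ exactly when it additionally lies in $L^2((0,1))$. Writing $g = F_1^{-1}-F_0^{-1}$ and $G_t = (1-t)F_0^{-1}+tF_1^{-1} = F_0^{-1}+tg$, the whole question reduces to deciding for which $t\ge 0$ the function $G_t$ is again a valid quantile function. Note that $G_t$ is automatically left-continuous (both $F_0^{-1}$ and $F_1^{-1}$ are) and, being a linear combination of two $L^2((0,1))$ functions, automatically square-integrable; so the only condition that can fail is monotonicity.

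For the direction ($\Leftarrow$), assume $g$ is non-decreasing. Then for every $t\ge 0$ the function $G_t = F_0^{-1} + tg$ is a sum of the non-decreasing function $F_0^{-1}$ and the non-decreasing function $tg$, hence non-decreasing; by the characterization above it is the quantile function of a measure $\mu_t \in \cP_2(\R)$, and the endpoints $\mu_0,\mu_1$ agree with the given ones. It then remains to check the geodesic-ray identity \eqref{eq:cst_speed_geod} for all $s,t\ge 0$, which follows immediately from the one-dimensional formula \eqref{eq:1d_wasserstein}: since $G_t-G_s = (t-s)g$, we get $\W_2^2(\mu_t,\mu_s) = \int_0^1 (t-s)^2 g(u)^2\,\mathrm{d}u = (t-s)^2\,\W_2^2(\mu_0,\mu_1)$. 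Thus $(\mu_t)_{t\ge 0}$ is a geodesic ray extending the original geodesic.

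For the direction ($\Rightarrow$), assume the geodesic extends to a ray $(\mu_t)_{t\ge 0}$. The key step is to identify its quantile functions as $F_t^{-1}=G_t$ for all $t\ge 1$. I would argue this from uniqueness of one-dimensional geodesics: for any $T>1$, the restriction of the ray to $[0,T]$ is, after rescaling, a constant-speed geodesic between $\mu_0$ and $\mu_T$, and because the optimal coupling in one dimension is the unique comonotone coupling $(F_0^{-1},F_T^{-1})_\#\mathrm{Unif}([0,1])$, this geodesic is unique and given by \eqref{eq:geodesics_1d}. Evaluating that interpolation at the parameter corresponding to $t=1$ yields $F_1^{-1} = (1-\tfrac1T)F_0^{-1} + \tfrac1T F_T^{-1}$, i.e. $F_T^{-1} = F_0^{-1} + T g = G_T$. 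Since each $\mu_t$ is a genuine probability measure, each $G_t$ must be non-decreasing. I would then close with an affine argument: fix $u<v$ in $(0,1)$; monotonicity of $G_t$ gives that the affine function $t\mapsto \big(F_0^{-1}(v)-F_0^{-1}(u)\big) + t\big(g(v)-g(u)\big)$ is non-negative on all of $[0,\infty)$, and an affine function that stays non-negative on an unbounded interval must have non-negative slope, so $g(v)-g(u)\ge 0$; as $u<v$ were arbitrary, $g$ is non-decreasing.

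The main obstacle is the forward direction's identification $F_t^{-1}=G_t$ beyond $t=1$: it is not a priori obvious that the ray coincides with the naive linear extension, and making this rigorous requires invoking uniqueness of the comonotone coupling (and hence of the one-dimensional geodesic between $\mu_0$ and $\mu_T$). Once this is secured, both the backward construction and the closing affine/limit argument are routine.
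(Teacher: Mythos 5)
Your proof is correct and takes essentially the same route as the paper's: both reduce the question to whether the linearly extended quantile $F_t^{-1}=(1-t)F_0^{-1}+tF_1^{-1}$ is non-decreasing (hence a valid quantile function) for every $t\ge 0$, which is equivalent to $F_1^{-1}-F_0^{-1}$ being non-decreasing. The one point where you go beyond the paper is welcome rather than divergent: in the forward direction you explicitly justify that a ray must coincide with this linear extension, via uniqueness of the comonotone coupling (and hence of one-dimensional constant-speed geodesics), and you verify the constant-speed identity $\W_2^2(\mu_s,\mu_t)=(t-s)^2\W_2^2(\mu_0,\mu_1)$ in the backward direction -- two steps the paper's terse proof leaves implicit by citing the displacement-interpolation formula.
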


As an application of \Cref{prop:1d_geodesic_rays}, we get the following results on discrete 1D distributions with the same number of samples, and on 1D Gaussian distributions.
\begin{corollary} \label{corr:1d_discrete_geod}
    Let $x_1<\dots<x_n\in \R$, $y_1<\dots<y_n\in \R$, $\mu_0=\frac1n\sum_{i=1}^n\delta_{x_i}$ and $\mu_1=\frac1n \sum_{i=1}^n \delta_{y_i}$. Then, the geodesic between $\mu_0$ and $\mu_1$ is a ray if and only if for all $j>i$, $y_i-x_i\le y_j-x_j$.
\end{corollary}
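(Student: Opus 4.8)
The plan is to invoke \Cref{prop:1d_geodesic_rays} directly, so the entire argument amounts to translating the condition that $F_1^{-1}-F_0^{-1}$ be non-decreasing into the stated inequalities on the support points. First I would write down the two quantile functions explicitly. Since $\mu_0=\frac1n\sum_{i=1}^n\delta_{x_i}$ is uniform over the ordered atoms $x_1<\dots<x_n$, its cumulative distribution function is a step function jumping by $1/n$ at each $x_i$, so its generalized inverse satisfies $F_0^{-1}(u)=x_i$ for $u\in\big(\frac{i-1}{n},\frac{i}{n}\big]$, and likewise $F_1^{-1}(u)=y_i$ on the same interval; it is precisely the assumed ordering of the atoms that pins down the value of each quantile on the $i$-th block. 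Consequently the difference $g:=F_1^{-1}-F_0^{-1}$ is itself piecewise constant, equal to the value $c_i:=y_i-x_i$ on $\big(\frac{i-1}{n},\frac{i}{n}\big]$ for $i=1,\dots,n$.

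Next I would characterize monotonicity of this step function. A piecewise-constant function taking the value $c_i$ on these consecutive intervals is non-decreasing if and only if $c_1\le c_2\le\dots\le c_n$, i.e. $y_i-x_i\le y_{i+1}-x_{i+1}$ for every $i$. By transitivity, this consecutive condition is equivalent to requiring $y_i-x_i\le y_j-x_j$ for all pairs $i<j$, which upon rearrangement reads $y_i-y_j\le x_i-x_j$. This is exactly the claimed condition, so combining it with \Cref{prop:1d_geodesic_rays} completes the proof in both directions.

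I expect no serious obstacle here, as the argument is routine. The only two points requiring mild care are the following. First, one must fix a convention for the generalized quantile (left-continuity and the treatment of the interval endpoints), since this is what guarantees that $g$ genuinely takes the single constant value $y_i-x_i$ on block $i$; the ordering $x_1<\dots<x_n$ and $y_1<\dots<y_n$ ensures the quantiles are read off in the same index order, so $g$ is well defined blockwise. Second, one should note that monotonicity of a step function is \emph{equivalent} to, and not merely implied by, monotonicity of the finite sequence $(c_i)$ of its values; this is immediate once $g$ is written explicitly, since $g$ is constant strictly inside each block and its only possible variations occur at the block boundaries.
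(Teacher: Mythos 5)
Your proposal is correct and follows essentially the same route as the paper: apply \Cref{prop:1d_geodesic_rays} and translate the monotonicity of $F_1^{-1}-F_0^{-1}$, which is piecewise constant with value $y_i-x_i$ on the $i$-th block, into the pairwise inequalities $y_i-y_j\le x_i-x_j$. Your version is in fact slightly more explicit than the paper's, which only evaluates the quantiles at the points $i/n$, whereas you justify why checking the blockwise values suffices.
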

Note that in the setting of \Cref{corr:1d_discrete_geod}, the geodesic is given by $\mu_t=\frac1n \sum_{i=1}^n \delta_{(1-t)x_i+ty_i}$ as points are sorted. Hence, the condition $y_i-x_i\le y_j-x_j$ for all $j>i$ ensures that there are no crossings between particles for all time $t\ge 0$ since $x_i+t(y_i-x_i) < x_j + t(y_j-x_j)$. The optimal assignment between $\mu_t$ and $\mu_s$ therefore remains the same (the identity) for any $s,t\ge 0$.
\begin{corollary} \label{corr:1d_gaussian_geod}
    Let $\mu_0=\cN(m_0,\sigma_0^2),\ \mu_1=\cN(m_1,\sigma_1^2)$ with $m_0,m_1\in \R,\ \sigma_0,\sigma_1\in\R_+$. Then, the geodesic between $\mu_0$ and $\mu_1$ is a ray if and only if $\sigma_1\ge \sigma_0$.
\end{corollary}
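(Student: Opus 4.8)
The plan is to invoke \Cref{prop:1d_geodesic_rays}, which reduces the claim to a single monotonicity check: the geodesic between $\mu_0$ and $\mu_1$ is a ray if and only if the difference of quantile functions $F_1^{-1} - F_0^{-1}$ is non-decreasing. The key structural fact that makes this tractable is that Gaussian quantile functions are affine reparametrizations of a single universal object, the standard normal quantile function.

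First I would recall that if $\Phi$ denotes the cumulative distribution function of the standard normal $\cN(0,1)$, then for $\sigma>0$ the quantile function of $\cN(m,\sigma^2)$ is $u\mapsto m + \sigma\,\Phi^{-1}(u)$, which follows immediately from the fact that $Z\sim\cN(0,1)$ implies $m+\sigma Z\sim\cN(m,\sigma^2)$. Applying this to both measures gives
\[
    F_0^{-1}(u) = m_0 + \sigma_0\,\Phi^{-1}(u), \qquad F_1^{-1}(u) = m_1 + \sigma_1\,\Phi^{-1}(u),
\]
so that $F_1^{-1}(u) - F_0^{-1}(u) = (m_1-m_0) + (\sigma_1-\sigma_0)\,\Phi^{-1}(u)$. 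The translation part $m_1-m_0$ is an additive constant and plays no role in monotonicity, so everything reduces to the sign of the scalar $\sigma_1-\sigma_0$ multiplying $\Phi^{-1}$.

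Next I would use that $\Phi^{-1}$ is strictly increasing on $(0,1)$: the standard normal density is strictly positive on all of $\R$, hence $\Phi$ is a strictly increasing bijection onto $(0,1)$ with strictly increasing inverse. Consequently the function $u\mapsto (m_1-m_0) + (\sigma_1-\sigma_0)\Phi^{-1}(u)$ is non-decreasing precisely when the coefficient $\sigma_1-\sigma_0$ is non-negative (if it were negative the function would be strictly decreasing, and if it vanishes the function is constant, which is still non-decreasing). Combining this equivalence with \Cref{prop:1d_geodesic_rays} yields exactly the stated criterion $\sigma_1\ge\sigma_0$.

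I do not expect a genuine obstacle, as the argument is essentially a one-line computation once the affine form of Gaussian quantiles is identified; the only structural input beyond \Cref{prop:1d_geodesic_rays} is the strict monotonicity of $\Phi^{-1}$. The one point meriting a brief comment is the degenerate case $\sigma_0=0$, where $\mu_0=\delta_{m_0}$ is a Dirac mass: this falls under the Dirac case already discussed, any geodesic extends to a ray, and the condition $\sigma_1\ge 0 = \sigma_0$ holds automatically, so the statement remains consistent at the boundary of the parameter range.
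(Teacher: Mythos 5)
Your proof is correct and follows essentially the same route as the paper: both invoke \Cref{prop:1d_geodesic_rays} and use the affine form $F_i^{-1}(u)=m_i+\sigma_i\,\Phi^{-1}(u)$ of Gaussian quantiles, so that monotonicity of $F_1^{-1}-F_0^{-1}$ reduces to the sign of $\sigma_1-\sigma_0$ via the strict monotonicity of $\Phi^{-1}$. Your brief treatment of the degenerate case $\sigma_0=0$ is a small addition the paper omits, but it does not change the argument.
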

Note that if $\sigma_0^2=\sigma_1^2$ in Corollary \ref{corr:1d_gaussian_geod}, \emph{i.e.} if the measures are translated, we recover that the geodesic is indeed a line, as it can be extended to infinity in both directions. In the case of arbitrary 1D Gaussian distributions, we can actually obtain the largest interval over which the geodesic can be extended, see Appendix \ref{app:add_result}.

For Gaussians of any dimension, we have that the Bures-Wasserstein space $\bw(\R^d)$ is geodesically convex, \emph{i.e.} the geodesic between two Gaussian distributions stays in $\bw(\R^d)$ at each time $t\in [0,1]$. In particular, the geodesic between $\mu=\cN(m_\mu,\Sigma_\mu)$ and $\nu=\cN(m_\nu,\Sigma_\nu)$ is given, for all $t\in [0,1]$, by $\mu_t=\cN(m_t,\Sigma_t)$ \citep{altschuler2021averaging} with,
\begin{equation} \label{eq:geodesics_bw}
    \left\{
    \begin{aligned}
        m_t      &= (1-t)m_\mu + t m_\nu \\
        \Sigma_t &= \big((1-t)I_d + t A_\mu^\nu\big)\Sigma_\mu\big((1-t)I_d + t A_\mu^\nu\big).
    \end{aligned}
    \right.
\end{equation}

\looseness=-1 As an application of \Cref{prop:geodesic_rays}, we can extend \Cref{corr:1d_gaussian_geod} to Gaussian distributions for $d\geq1$. In particular, the condition $\sigma_0\le \sigma_1$ is extended to $\Sigma_0$ and $\Sigma_1$ through the partial (Loewner) ordering $\preceq$ on $S_d^{+}(\R)$, the space of positive semi-definite matrices, defined by $A\succeq B$ if and only if $A-B\in S_d^{+}(\R)$.
\begin{corollary} \label{corr:gaussian_geod}
    Let $\mu_0=\cN(m_0,\Sigma_0)$, $\mu_1=\cN(m_1,\Sigma_1)$ with $m_0,m_1\in\R^d$ and $\Sigma_0,\Sigma_1\in S_d^{++}(\R)$. Then, the geodesic between $\mu_0$ and $\mu_1$ is a geodesic ray if and only if $(\Sigma_0^{\frac12}\Sigma_1\Sigma_0^{\frac12})^{\frac12}\succeq \Sigma_0$.
\end{corollary}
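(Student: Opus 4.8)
The plan is to reduce the statement to \Cref{prop:geodesic_rays}. Since $\Sigma_0\in S_d^{++}(\R)$, the measure $\mu_0=\cN(m_0,\Sigma_0)$ belongs to $\cPa$, so Brenier's theorem applies and the Monge map between $\mu_0$ and $\mu_1$ is the affine map $\T:x\mapsto m_1 + A(x-m_0)$ with linear part $A := A_{\mu_0}^{\mu_1} = \Sigma_0^{-\frac12}(\Sigma_0^{\frac12}\Sigma_1\Sigma_0^{\frac12})^{\frac12}\Sigma_0^{-\frac12}$, as recalled in the Bures-Wasserstein discussion. By \Cref{prop:geodesic_rays}, the geodesic is a ray if and only if $\T$ can be written as $\nabla u$ for a $1$-convex potential $u$, i.e.\ $x\mapsto u(x)-\frac{\|x\|_2^2}{2}$ convex.

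First I would check that $\T$ is genuinely a gradient map, which is the one point needing a small verification. Writing $A = \Sigma_0^{-\frac12}\,C\,\Sigma_0^{-\frac12}$ with $C=(\Sigma_0^{\frac12}\Sigma_1\Sigma_0^{\frac12})^{\frac12}$, both $\Sigma_0^{-\frac12}$ and $C$ are symmetric (the latter as the principal square root of a symmetric positive-definite matrix), so $A^\top=A$; moreover $A$ is positive definite as a congruence of the positive-definite matrix $C$. Because $A$ is symmetric, the affine map $\T$ is exactly the gradient of the quadratic $u(x)=\frac12\langle x, Ax\rangle + \langle m_1 - A m_0, x\rangle$, whose Hessian is $A$.

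Next, I would translate $1$-convexity into a matrix inequality. Since both $u$ and $\frac{\|\cdot\|_2^2}{2}$ are quadratics, the map $x\mapsto u(x)-\frac{\|x\|_2^2}{2}$ is convex if and only if its constant Hessian $A-I_d$ is positive semi-definite, i.e.\ $A\succeq I_d$. Finally I would rewrite this condition by conjugating with the invertible symmetric matrix $\Sigma_0^{\frac12}$: congruence $M\mapsto \Sigma_0^{\frac12} M \Sigma_0^{\frac12}$ preserves the Loewner ordering (for any $v$, $v^\top \Sigma_0^{\frac12}(A-I_d)\Sigma_0^{\frac12} v = (\Sigma_0^{\frac12}v)^\top (A-I_d)(\Sigma_0^{\frac12}v)$, and $\Sigma_0^{\frac12}$ is a bijection), so $A\succeq I_d$ is equivalent to $\Sigma_0^{\frac12} A \Sigma_0^{\frac12} \succeq \Sigma_0$. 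Computing $\Sigma_0^{\frac12} A \Sigma_0^{\frac12} = (\Sigma_0^{\frac12}\Sigma_1\Sigma_0^{\frac12})^{\frac12}$ yields exactly the stated criterion $(\Sigma_0^{\frac12}\Sigma_1\Sigma_0^{\frac12})^{\frac12}\succeq \Sigma_0$.

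The argument is essentially bookkeeping once \Cref{prop:geodesic_rays} is in hand; the only genuinely load-bearing observations are that the Gaussian Monge map has a \emph{symmetric} linear part (hence is a gradient), and that the Loewner order is invariant under congruence by $\Sigma_0^{\frac12}$. As a sanity check, in dimension $d=1$ the condition $(\sigma_0^2\sigma_1^2)^{\frac12}\ge \sigma_0^2$ reduces to $\sigma_1\ge\sigma_0$, recovering \Cref{corr:1d_gaussian_geod}.
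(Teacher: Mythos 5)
Your proof is correct and follows essentially the same route as the paper's: reduce to \Cref{prop:geodesic_rays} via the quadratic potential $u(x)=\frac12\langle Ax,x\rangle+\langle m_1-Am_0,x\rangle$, translate $1$-convexity into $A\succeq I_d$, and conjugate by $\Sigma_0^{\frac12}$ to obtain $(\Sigma_0^{\frac12}\Sigma_1\Sigma_0^{\frac12})^{\frac12}\succeq \Sigma_0$. The only differences are cosmetic: you verify by hand the congruence-invariance of the Loewner order (the paper cites \citep[Lemma V.1.5]{bhatia2013matrix}) and make explicit the symmetry of $A$, which the paper uses implicitly when asserting $\nabla u=\T$.
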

Note that this condition is implied by $\Sigma_1^{\frac12}\succeq \Sigma_0^{\frac12}$ in general by Furata's inequality \citep[Theorem 1.3]{fujii2010furuta}, and equivalent whenever $\Sigma_0$ and $\Sigma_1$ commute.

\looseness=-1 The conditions on $\mu_0$ and $\mu_1$ to obtain geodesic rays can be seen as guaranteeing a certain regularity along the geodesic. For instance, when $\mu_0\in\cPa$, choosing $\mu_1=\nabla u_\#\mu_0$ for a 1-convex function $u$ (see \Cref{prop:geodesic_rays})
guarantees that the geodesic always stays in $\cPa$. %
While it does not hold for an arbitrary $\mu_1\in\cP_2(\R^d)$, it would be possible to find the closest geodesic ray \emph{e.g.} by minimizing $\W_2(\nabla u_\#\mu_0,\mu_1)$ with $u$ 1-convex \citep{paty2020regularity}. In the discrete case, these conditions can be interpreted in terms of particle dynamics: failure to satisfy the conditions corresponds to particle crossings. 

An alternative extrapolation of geodesic curves beyond $t=1$ was recently proposed by \citet{gallouet2025metric} by solving a suitable variational problem. Their extrapolation coincides with the constant-speed geodesic as long as the curve remains geodesic, and therefore coincides with a true geodesic for all $t\ge 0$ when considering rays. Nonetheless, it is not the case for geodesics that are not rays. In particular for $\mu_1$ a Dirac, their solution merges particles after crossing at the Dirac. In this work, we only consider true geodesic rays.

We also note that the conditions to have geodesic rays are very similar to those where $\mu_0$ is smaller than $\mu_1$ in the convex order when their first moments coincide, \emph{i.e.} satisfying for all $f:\R^d\to\R^d$ convex, $\int f\mathrm{d}\mu_0 \le \int f\mathrm{d}\mu_1$, see \emph{e.g.} \citep[Theorem 4 and 6]{muller2001stochastic} for the Gaussian case, and \citep{shu2020hopf} for the 1D case. These conditions are also connected to projections in the convex order, see \emph{e.g.} \citep[Proposition 3.3]{alfonsi2025wasserstein}. We defer a more comprehensive study of these relations to future work.

\section{BUSEMANN FUNCTION} \label{section:busemann}

\looseness=-1 In this section, we first introduce the Busemann function in geodesic metric spaces. We then discuss how to compute it in the Wasserstein space in general settings, and in specific cases where a closed-form exists.

\subsection{Background on the Busemann Function} \label{section:bg_busemann}

In any geodesic metric space $(\X,d)$ that admits geodesic rays, the Busemann function $B^\gamma$ associated to a geodesic ray $\gamma$ can be defined for any $x\in \X$, as in \citep[II.8.17]{bridson2013metric},
\begin{equation}
    B^\gamma(x) = \lim_{t\to\infty}\ d\big(\gamma(t), x\big) - t \cdot d\big(\gamma(0), \gamma(1)\big).
\end{equation}
\looseness=-1 This function has attracted particular interest in geometric Machine Learning as it provides a natural generalization of hyperplanes on metric spaces. Indeed, in the particular case of Euclidean spaces, geodesic rays are of the form $\gamma(t) = t\theta$ for $\theta\in S^{d-1}=\{\theta\in\R^d,\ \|\theta\|_2^2=1\}$, $t\in\R$, and the Busemann function is given, for any $x\in\R^d$, by $B^\gamma(x)=-\langle x,\theta\rangle$. Therefore, its level sets are (affine) hyperplanes. Moreover, the Busemann function provides a principled way to project a point $x\in \X$ on the geodesic ray $\gamma$. In fact, noticing that for any $s\in \R_+$, $B^\gamma\big(\gamma(s)\big)=-s$, the projection of $x\in \X$ on $\gamma$ is given by $P^\gamma(x)=\gamma\big(-B^\gamma(x)\big)$. In particular, all points on a level set of $B^\gamma$ are projected on the same point. Note however that when $-B^\gamma(x)<0$, there is, in general, no guarantee that $\gamma\big(-B^\gamma(x)\big)$ belongs to the geodesic, but $B^\gamma(x)\in\R$ is always well defined, and provides a projection on $\R$. In a Hilbertian space (\emph{i.e.} of null curvature), the Busemann projection is actually equivalent to the coordinate of the metric projection, \emph{i.e.} $-B^\gamma(x) = \argmin_{t}\ d\big(x, \gamma(t)\big)$.

\subsection{Busemann on the Wasserstein Space}

As the Wasserstein space is not geodesically complete, not all geodesic can be extended as a ray, and thus the Busemann function is not defined along every geodesic. Fortunately, any $\mu_0\in\cP_2(\R^d)$ is the starting point of at least one geodesic ray \citep[Theorem 1.1]{zhu2021busemann}, and in some particular cases, we can characterize them as described in the previous section. Let $(\mu_t)_{t\ge 0}$ be a geodesic ray and $\kappa_\mu=\W_2(\mu_0,\mu_1)$ its speed. Let us define the Busemann function $B^\mu$ associated to $(\mu_t)_{t\ge 0}$ by, for any $\nu\in\cP_2(\R^d)$,
\begin{equation}
    B^\mu(\nu)=\lim_{t\to\infty}\ \W_2(\mu_t,\nu) - \kappa_\mu t.
\end{equation}
\looseness=-1 Thanks to the Riemannian structure of $(\cP_2(\R^d),\W_2)$, we can always assume that geodesics have unit speed, \emph{i.e.} $\kappa_\mu=1$, see Appendix \ref{app:bg_busemann}. In the following formulas of $B^\mu$, this translates as a renormalization by $\kappa_\mu$.

First, we show that $B^\mu$ admits a more convenient form as an infimum over a suitable set of couplings $\Gamma(\mu_0,\mu_1,\nu) = \{\Tilde{\gamma}\in \Pi(\mu_0,\mu_1,\nu),\ \pi^{1,2}_\#\Tilde{\gamma}\in\Pi_o(\mu_0,\mu_1)\}$ between $\mu_0,\mu_1$ and $\nu$, and such that the coupling between the two first marginals is optimal for \eqref{eq:w_dist}.
\begin{proposition} \label{prop:busemann_general}
    Let $(\mu_t)_{t\ge 0}$ be a geodesic ray on $\cP_2(\R^d)$. Let $\nu\in\cP_2(\mathbb{R}^d)$, then
    \begin{multline} \label{eq:busemann_general}
        \raisetag{40pt}
        B^\mu(\nu)  = \\   
        \inf_{\Tilde{\gamma}\in\Gamma(\mu_0,\mu_1,\nu)}\ -\kappa_\mu^{-1} \int \langle x_1-x_0,y-x_0\rangle\ \mathrm{d}\Tilde{\gamma}(x_0,x_1,y).
    \end{multline}    
\end{proposition}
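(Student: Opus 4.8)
The plan is to compute the limit defining $B^\mu(\nu)$ by expanding the squared Wasserstein distance $\W_2^2(\mu_t,\nu)$ along the geodesic ray and carefully extracting the leading-order behavior as $t\to\infty$. Since $(\mu_t)_{t\ge 0}$ is a geodesic ray with speed $\kappa_\mu$, we have $\mu_t = \big((1-t)\pi^1 + t\pi^2\big)_\#\gamma^*$ for the optimal coupling $\gamma^*\in\Pi_o(\mu_0,\mu_1)$, and this description remains valid for all $t\ge 0$ by the ray property. The key idea is to write $\W_2(\mu_t,\nu) - \kappa_\mu t$ in a form where the divergent part cancels, revealing the finite limit.

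\emph{First} I would lift the problem to the level of couplings. Given any three-marginal coupling $\tilde\gamma\in\Gamma(\mu_0,\mu_1,\nu)$ whose $(1,2)$-marginal is the optimal coupling $\gamma^*$ between $\mu_0$ and $\mu_1$, the map $(x_0,x_1,y)\mapsto \big((1-t)x_0+tx_1,\, y\big)$ pushes $\tilde\gamma$ forward to an admissible coupling between $\mu_t$ and $\nu$, giving the upper bound
\begin{equation}
    \W_2^2(\mu_t,\nu) \le \int \big\|(1-t)x_0 + tx_1 - y\big\|_2^2\ \mathrm{d}\tilde\gamma(x_0,x_1,y).
\end{equation}
\emph{Next} I would expand the integrand by writing $(1-t)x_0 + tx_1 - y = t(x_1-x_0) - (y-x_0)$, so that
\begin{equation}
    \big\|t(x_1-x_0) - (y-x_0)\big\|_2^2 = t^2\|x_1-x_0\|_2^2 - 2t\langle x_1-x_0, y-x_0\rangle + \|y-x_0\|_2^2.
\end{equation}
Since $\int \|x_1-x_0\|_2^2\ \mathrm{d}\tilde\gamma = \W_2^2(\mu_0,\mu_1) = \kappa_\mu^2$ by optimality of the $(1,2)$-marginal, the leading $t^2$ term equals $\kappa_\mu^2 t^2 = (\kappa_\mu t)^2$. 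Taking square roots and using the asymptotic expansion $\sqrt{(\kappa_\mu t)^2 + O(t)} = \kappa_\mu t + O(1)$, subtracting $\kappa_\mu t$ isolates the term $-\kappa_\mu^{-1}\int \langle x_1-x_0, y-x_0\rangle\ \mathrm{d}\tilde\gamma$ in the limit, which yields the upper bound $B^\mu(\nu)\le \inf_{\tilde\gamma}\,(\cdots)$.

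\emph{For the reverse inequality}, I would argue that an optimal coupling $\gamma_t^*\in\Pi_o(\mu_t,\nu)$ can be disintegrated and ``glued'' with $\gamma^*$ along the common marginal $\mu_t$ (or equivalently along the displacement parametrization) to produce a three-marginal coupling in $\Gamma(\mu_0,\mu_1,\nu)$; the gluing lemma for couplings makes this precise. This shows the matching lower bound, so both bounds coincide and the infimum is attained over $\Gamma(\mu_0,\mu_1,\nu)$. \textbf{The main obstacle} I anticipate is the reverse direction: one must verify that the coupling extracted from an optimizer at each finite time $t$ can be passed to the limit $t\to\infty$ while preserving both the optimality of its $(1,2)$-marginal and admissibility as a coupling of $\nu$. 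This likely requires a compactness argument (tightness of the family of three-marginal couplings, together with lower semicontinuity of the transport cost and closedness of $\Pi_o(\mu_0,\mu_1)$ under weak convergence) to guarantee that a limiting coupling exists in $\Gamma(\mu_0,\mu_1,\nu)$ and achieves the infimum. The upper-bound half, by contrast, is essentially a direct algebraic expansion and should be routine.
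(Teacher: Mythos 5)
Your proposal takes essentially the same route as the paper's proof: the upper bound via pushing three-marginal couplings forward under $(x_0,x_1,y)\mapsto\big((1-t)x_0+tx_1,\,y\big)$ and expanding the square, the matching lower bound at each finite $t$ by disintegrating an optimal plan in $\Pi_o(\mu_t,\nu)$ and gluing it with $\gamma^*$ along $\mu_t$ (this is exactly \Cref{lemma:ot_geod}), and the limit exchange handled by the tightness/Prokhorov compactness argument you anticipate, together with closedness of $\Gamma(\mu_0,\mu_1,\nu)$ under weak convergence (\Cref{lemma:gamma_tight}). The sketch is correct and matches the paper's proof in all essentials, including its identification of the limit-interchange step as the only delicate point.
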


\looseness=-1 We can refine the result in the case where the geodesic ray is given by an OT map, \emph{i.e.}, when the OT map is the gradient of a 1-convex function by \Cref{prop:geodesic_rays}.

\begin{corollary} \label{corollary:busemann_ot_map}
    Let $\mu_0\in\cPa$, $\mu_1\in \cP_2(\R^d)$, and assume the OT map $\T$ between $\mu_0$ and $\mu_1$ is the gradient of a 1-convex function. Let $\nu\in \cP_2(\R^d)$, then, %
    \begin{equation}
        B^\mu(\nu) =  \inf_{\gamma\in\Pi(\mu_0,\nu)}\ -\kappa_\mu^{-1} \int \langle \T(x_0)-x_0, y-x_0\rangle\ \mathrm{d}\gamma(x_0,y).
    \end{equation}
\end{corollary}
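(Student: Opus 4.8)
The plan is to collapse the infimum over three-marginal couplings in \Cref{prop:busemann_general} into an infimum over ordinary couplings in $\Pi(\mu_0,\nu)$, by exploiting that the first-two marginal is forced to be deterministic. Since $\mu_0\in\cPa$ and $\T=\nabla u$ with $u$ $1$-convex, $\T$ is in particular the gradient of a convex function, so by Brenier's theorem the optimal coupling between $\mu_0$ and $\mu_1$ is unique and equals $(\id,\T)_\#\mu_0$. Consequently, for any $\tilde\gamma\in\Gamma(\mu_0,\mu_1,\nu)$ the constraint $\pi^{1,2}_\#\tilde\gamma\in\Pi_o(\mu_0,\mu_1)$ forces $\pi^{1,2}_\#\tilde\gamma=(\id,\T)_\#\mu_0$, i.e. $x_1=\T(x_0)$ for $\tilde\gamma$-almost every $(x_0,x_1,y)$.

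With this deterministic relation in hand, I would set up an explicit bijection between $\Gamma(\mu_0,\mu_1,\nu)$ and $\Pi(\mu_0,\nu)$. In one direction, map $\tilde\gamma\mapsto\pi^{1,3}_\#\tilde\gamma$; this lies in $\Pi(\mu_0,\nu)$ since the first and third marginals of $\tilde\gamma$ are $\mu_0$ and $\nu$. In the other, given $\gamma\in\Pi(\mu_0,\nu)$, define $\tilde\gamma=(\pi^1,\T\circ\pi^1,\pi^2)_\#\gamma$, the push-forward of $\gamma$ by $(x_0,y)\mapsto(x_0,\T(x_0),y)$. A direct check of marginals shows $\pi^{1,2}_\#\tilde\gamma=(\id,\T)_\#\mu_0\in\Pi_o(\mu_0,\mu_1)$ and $\pi^3_\#\tilde\gamma=\nu$, so $\tilde\gamma\in\Gamma(\mu_0,\mu_1,\nu)$. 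Because any element of $\Gamma(\mu_0,\mu_1,\nu)$ satisfies $x_1=\T(x_0)$ almost surely, it is entirely determined by its first-and-third marginal, which makes these two maps mutually inverse.

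It then remains to track the objective under this correspondence. On the support of $\tilde\gamma$ I would replace $x_1$ by $\T(x_0)$ inside the integrand of \eqref{eq:busemann_general}, and rewrite the integral against $\gamma=\pi^{1,3}_\#\tilde\gamma$ via the push-forward formula, obtaining
\begin{equation*}
    \int \langle x_1-x_0,y-x_0\rangle\,\mathrm{d}\tilde\gamma(x_0,x_1,y) = \int \langle \T(x_0)-x_0,y-x_0\rangle\,\mathrm{d}\gamma(x_0,y).
\end{equation*}
Since the bijection preserves the value of the (rescaled) objective, the two infima are equal, yielding the stated closed form.

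The main obstacle is the first step: justifying that the optimality constraint on the first-two marginal pins it down to the single deterministic coupling $(\id,\T)_\#\mu_0$, so that the three-marginal problem genuinely degenerates to a two-marginal one. This hinges on the absolute continuity of $\mu_0$ (via Brenier's theorem) and on a small measurability caveat---$\T$ is only defined $\mu_0$-almost everywhere---which is harmless because every coupling considered has $\mu_0$ as its first marginal. Beyond this point, the argument is essentially bookkeeping with marginals and push-forwards.
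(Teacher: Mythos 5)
Your proof is correct and takes essentially the same route as the paper: both reduce \Cref{prop:busemann_general} by identifying $\Gamma(\mu_0,\mu_1,\nu)$ with $\{(\pi^1,\T\circ\pi^1,\pi^2)_\#\gamma:\ \gamma\in\Pi(\mu_0,\nu)\}$ via the uniqueness part of Brenier's theorem, and then substitute $x_1=\T(x_0)$ in the objective. The only cosmetic difference is that the paper verifies the harder inclusion by disintegrating $\Tilde{\gamma}$ with respect to $(\id,\T)_\#\mu_0$, whereas you invoke directly that $x_1=\T(x_0)$ holds $\Tilde{\gamma}$-almost surely, so $\Tilde{\gamma}$ is recovered from $\pi^{1,3}_\#\Tilde{\gamma}$ --- an equivalent justification.
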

This problem is equivalent to the OT problem
\begin{equation}
    \inf_{\gamma\in\Pi(\mu_0,\nu)}\ \int \|\T(x_0)-x_0-y\|_2^2\ \mathrm{d}\gamma(x_0,y),
\end{equation}
and can thus be solved using classical OT solvers.

In the specific case where $\mu_0$ is a Dirac, we can also leverage that any geodesic is a ray, and that the optimal coupling is of the form $\gamma=\mu_0\otimes \mu_1$.

\begin{corollary} \label{corollary:busemann_dirac}
    Let $\mu_0=\delta_{x_0}$ where $x_0\in \R^d$, and $\mu_1\in\cP_2(\R^d)$. Let $\nu\in\cP_2(\R^d)$, then %
    \begin{equation} \label{eq:busemann_dirac}
        B^\mu(\nu) = \inf_{\gamma\in\Pi(\mu_1,\nu)}\ - \kappa_\mu^{-1}\int \langle x_1-x_0, y-x_0\rangle\ \mathrm{d}\gamma(x_1,y).
    \end{equation}
\end{corollary}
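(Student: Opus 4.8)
The plan is to specialize Proposition~\ref{prop:busemann_general} to the Dirac case $\mu_0 = \delta_{x_0}$, where the three-marginal coupling set $\Gamma(\mu_0,\mu_1,\nu)$ collapses onto the two-marginal couplings $\Pi(\mu_1,\nu)$. The starting point is the general formula
\begin{equation*}
    B^\mu(\nu) = \inf_{\Tilde{\gamma}\in\Gamma(\mu_0,\mu_1,\nu)}\ -\kappa_\mu^{-1} \int \langle x_1-x_0,y-x_0\rangle\ \mathrm{d}\Tilde{\gamma}(x_0,x_1,y),
\end{equation*}
and the key observation is that when the base measure is a Dirac the transport structure in the first coordinate is frozen.

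First I would note that the only coupling between $\delta_{x_0}$ and $\mu_1$ is the independent coupling $\delta_{x_0}\otimes\mu_1$: any $\gamma\in\Pi(\delta_{x_0},\mu_1)$ must have first marginal $\delta_{x_0}$, hence is supported on $\{x_0\}\times\R^d$, which forces $\gamma = \delta_{x_0}\otimes\mu_1$. Consequently this unique coupling is trivially optimal, so the constraint $\pi^{1,2}_\#\Tilde{\gamma}\in\Pi_o(\mu_0,\mu_1)$ appearing in the definition of $\Gamma(\mu_0,\mu_1,\nu)$ is automatically satisfied and imposes nothing beyond membership in $\Pi(\mu_0,\mu_1,\nu)$.

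Next I would identify $\Gamma(\mu_0,\mu_1,\nu)$ with $\Pi(\mu_1,\nu)$. Every $\Tilde{\gamma}\in\Pi(\delta_{x_0},\mu_1,\nu)$ has its $x_0$-marginal equal to $\delta_{x_0}$, so it is supported on $\{x_0\}\times\R^d\times\R^d$ and disintegrates as $\Tilde{\gamma}=\delta_{x_0}\otimes\gamma$ for a unique measure $\gamma$ on $\R^d\times\R^d$; the two remaining marginal constraints say exactly $\gamma\in\Pi(\mu_1,\nu)$. This correspondence $\Tilde{\gamma}\leftrightarrow\gamma$ is a bijection between $\Gamma(\mu_0,\mu_1,\nu)$ and $\Pi(\mu_1,\nu)$.

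Finally I would substitute. Since $x_0$ is constant $\Tilde{\gamma}$-almost surely, the integrand $\langle x_1-x_0,y-x_0\rangle$ depends only on $(x_1,y)$, and
\begin{equation*}
    \int \langle x_1-x_0,y-x_0\rangle\ \mathrm{d}\Tilde{\gamma}(x_0,x_1,y) = \int \langle x_1-x_0,y-x_0\rangle\ \mathrm{d}\gamma(x_1,y).
\end{equation*}
Taking the infimum over the identified sets yields~\eqref{eq:busemann_dirac}. There is essentially no analytic obstacle here; the only point requiring care is the bookkeeping of marginals, to confirm that the optimality constraint is vacuous and that the reduction to $\Pi(\mu_1,\nu)$ is an exact identification rather than merely an inclusion.
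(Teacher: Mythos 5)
Your proof is correct, and at the strategic level it follows the same skeleton as the paper's: specialize \Cref{prop:busemann_general}, identify $\Gamma(\mu_0,\mu_1,\nu)$ with $\{\mu_0\otimes\gamma,\ \gamma\in\Pi(\mu_1,\nu)\}$, and substitute into the infimum. Where you genuinely diverge is in how the harder inclusion $\Gamma(\mu_0,\mu_1,\nu)\subset\{\mu_0\otimes\gamma,\ \gamma\in\Pi(\mu_1,\nu)\}$ is established. The paper proves it by a two-stage disintegration argument: it disintegrates $\Tilde{\gamma}$ with respect to $\mu_0\otimes\mu_1$ through a kernel $K$, disintegrates again with respect to $\mu_0$, invokes uniqueness of the disintegration, and then constructs the averaged kernel $\Bar{K}(x_1,\mathrm{d}y)=\int K\big((x_0,x_1),\mathrm{d}y\big)\,\mathrm{d}\mu_0(x_0)$ to show that $K$ cannot actually depend on $x_0$. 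You bypass all of this with the support observation: since $\pi^1_\#\Tilde{\gamma}=\delta_{x_0}$, the measure $\Tilde{\gamma}$ is concentrated on $\{x_0\}\times\R^d\times\R^d$ and therefore factors exactly as $\delta_{x_0}\otimes\pi^{2,3}_\#\Tilde{\gamma}$, with the remaining marginal constraints giving $\pi^{2,3}_\#\Tilde{\gamma}\in\Pi(\mu_1,\nu)$. You also make explicit a point the paper leaves implicit, namely that $\Pi(\delta_{x_0},\mu_1)$ is a singleton, so the constraint $\pi^{1,2}_\#\Tilde{\gamma}\in\Pi_o(\mu_0,\mu_1)$ is vacuous rather than merely satisfiable. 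Your route is shorter and more elementary with no loss of rigor; the paper's kernel machinery mirrors the analogous argument in the proof of \Cref{corollary:busemann_ot_map}, where the first-coordinate structure is a Monge map rather than a frozen point and the support shortcut is unavailable, which is presumably why that heavier template was reused here.
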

Note that \eqref{eq:busemann_dirac} is equivalent to the OT problem \eqref{eq:w_dist} between $\mu_1$ and $\nu$. Letting $\mu_1=\delta_{x_1}$ with $\theta:=x_1-x_0\in S^{d-1}$ and $\gamma(t)=t\theta$, \eqref{eq:busemann_dirac} is equal to $\int B^\gamma(y)\mathrm{d}\nu(y)$ and thus the Busemann function lifts from $\R^d$ to $\cP_2(\R^d)$.

\subsection{Closed-forms of the Busemann Function}\label{sec:closed_form_busemann}

In the cases mentioned above, we don't have a closed-form for the OT problem, and must therefore solve an optimization problem to compute the corresponding Busemann functions. Nonetheless, we can compute it in closed-form whenever closed-forms for the Wasserstein distance and the geodesics are available. First, we consider 1D distributions leveraging \eqref{eq:1d_wasserstein} and \eqref{eq:geodesics_1d}. %

\begin{proposition} \label{prop:busemann_closed_1d}
    Let $(\mu_t)_{t\ge 0}$ be a unit-speed geodesic ray in $\cP_2(\R)$ (\emph{i.e.} $\kappa_\mu=1$), then for any $\nu\in\cP_2(\R)$,
    \begin{equation} \label{eq:busemann_1d}
        \begin{aligned}
            B^\mu(\nu) %
            &= -\langle F_1^{-1} - F_0^{-1}, F_\nu^{-1}-F_0^{-1}\rangle_{L^2([0,1])}.
        \end{aligned}
    \end{equation}
\end{proposition}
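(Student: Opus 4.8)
The plan is to reduce the whole statement to a one-variable limit by substituting the two closed forms already available in the one-dimensional setting, namely the quantile formula for $\W_2$ in \eqref{eq:1d_wasserstein} and the linear-interpolation formula for the geodesic in \eqref{eq:geodesics_1d}.

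First I would record that along the ray the quantile function at time $t$ is $F_t^{-1} = F_0^{-1} + t\,(F_1^{-1} - F_0^{-1})$ for \emph{all} $t \ge 0$. For $t \in [0,1]$ this is \eqref{eq:geodesics_1d}; its persistence for $t > 1$ is exactly what \Cref{prop:1d_geodesic_rays} guarantees, since the ray condition that $F_1^{-1} - F_0^{-1}$ be non-decreasing keeps $F_t^{-1}$ non-decreasing and left-continuous, hence a bona fide quantile function whose measure is $\mu_t$. Writing $g = F_1^{-1} - F_0^{-1}$ and applying \eqref{eq:1d_wasserstein}, I then expand
$$\W_2^2(\mu_t,\nu) = \big\|(F_0^{-1} - F_\nu^{-1}) + t\,g\big\|_{L^2([0,1])}^2 = c + b\,t + t^2\,\|g\|_{L^2}^2,$$
with $c = \|F_0^{-1} - F_\nu^{-1}\|_{L^2}^2$ and $b = 2\langle F_0^{-1} - F_\nu^{-1},\, g\rangle_{L^2}$. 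The unit-speed assumption $\kappa_\mu = 1$ enters here: $\|g\|_{L^2}^2 = \|F_1^{-1} - F_0^{-1}\|_{L^2}^2 = \W_2^2(\mu_0,\mu_1) = \kappa_\mu^2 = 1$, so the leading coefficient is exactly $1$.

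It remains to compute $B^\mu(\nu) = \lim_{t\to\infty}\big(\W_2(\mu_t,\nu) - t\big) = \lim_{t\to\infty}\big(\sqrt{t^2 + b\,t + c} - t\big)$. Rationalizing by multiplying and dividing by $\sqrt{t^2 + b\,t + c} + t$ turns this into $(b\,t + c)/(\sqrt{t^2 + b\,t + c} + t)$, whose limit is $b/2 = \langle F_0^{-1} - F_\nu^{-1},\, g\rangle_{L^2}$. By symmetry and bilinearity of the inner product this equals $-\langle F_1^{-1} - F_0^{-1},\, F_\nu^{-1} - F_0^{-1}\rangle_{L^2([0,1])}$, which is the claimed expression.

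The computation is otherwise routine; the only steps needing genuine care are the two I have flagged. The main (mild) obstacle is the first step — justifying that the affine quantile identity extends past $t=1$ — since this is where the ray hypothesis is actually used, and without it $F_t^{-1}$ could fail to be a quantile function, so that $\W_2^2(\mu_t,\nu)$ would no longer be given by the clean quadratic above. The unit-speed normalization is the second place where a hypothesis is consumed: it forces the quadratic to be monic, which is precisely what makes $\sqrt{t^2 + b\,t + c} - t$ converge to the finite value $b/2$ rather than diverge.
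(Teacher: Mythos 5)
Your proposal is correct and follows essentially the same route as the paper's proof: substitute the affine quantile formula $F_t^{-1}=F_0^{-1}+t(F_1^{-1}-F_0^{-1})$ (valid for all $t\ge 0$ along the ray) into the $L^2([0,1])$ expression \eqref{eq:1d_wasserstein}, expand $\W_2^2(\mu_t,\nu)$ as a quadratic in $t$, and extract the limit of $\W_2(\mu_t,\nu)-t$. The only differences are cosmetic — you rationalize $\sqrt{t^2+bt+c}-t$ where the paper does a first-order expansion of the square root, and you make explicit (via \Cref{prop:1d_geodesic_rays}) the persistence of the affine quantile identity past $t=1$, which the paper simply asserts.
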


We observe that, up to a sign, \eqref{eq:busemann_1d} corresponds  to the inner product in $L^2([0,1])$ between $F_{1}^{-1}-F_{0}^{-1}$ and $F_{\nu}^{-1} - F_{0}^{-1}$, which are the quantiles centered around $F_{0}^{-1}$, and is directly obtained from the Hilbert structure of the one dimensional Wasserstein space. Consequently, the Busemann function between 1D Gaussians is only an inner product on the product space $\R\times \R_+^*$ of the (centered) means and standard deviations.

\begin{corollary} \label{corr:busemann_1d_gaussian}
    Let $\mu_0=\cN(\mo,\so^2)$, $\mu_1=\cN(\m1,\s1^2)$, $\nu=\cN(m,\sigma^2)$ with $m_0,m_1,m\in \R$, $\sigma_0,\sigma_1,\sigma\in \R_+^*$ such that $\sigma_1\ge \sigma_0$ and $\W_2^2(\mu_0,\mu_1)=1$. Then, 
    \begin{equation} \label{eq:1d_busemann_gaussian}
        B^\mu(\nu) = - (\m1-\mo)(m-\mo) - (\s1-\so)(\sigma-\so). %
    \end{equation}
\end{corollary}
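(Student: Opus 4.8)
The plan is to apply \Cref{prop:busemann_closed_1d} directly, since all of its hypotheses are in place: the assumption $\s1 \ge \so$ guarantees via \Cref{corr:1d_gaussian_geod} that $t \mapsto \mu_t$ is a genuine geodesic ray, and $\W_2^2(\mu_0,\mu_1) = 1$ enforces the unit-speed normalization $\kappa_\mu = 1$ required by that proposition. It then remains only to evaluate the $L^2([0,1])$ inner product in \eqref{eq:busemann_1d} for Gaussian quantile functions, which I would do by reducing everything to moments of a standard normal.

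First I would express each quantile through the standard normal quantile $\Phi^{-1}$. Since the CDF of $\cN(\mu,\sigma^2)$ is $x \mapsto \Phi\big((x-\mu)/\sigma\big)$, its quantile function is $F^{-1}(u) = \mu + \sigma\,\Phi^{-1}(u)$. Subtracting the base quantile $F_0^{-1}$ then gives the two affine-in-$\Phi^{-1}$ expressions
\begin{equation*}
    F_1^{-1} - F_0^{-1} = (\m1 - \mo) + (\s1 - \so)\,\Phi^{-1}, \quad F_\nu^{-1} - F_0^{-1} = (m - \mo) + (\sigma - \so)\,\Phi^{-1},
\end{equation*}
where $\Phi^{-1}$ abbreviates the function $u \mapsto \Phi^{-1}(u)$ on $[0,1]$.

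Substituting into the inner product and expanding produces four terms, each governed by one of the integrals $\int_0^1 1\,\mathrm{d}u$, $\int_0^1 \Phi^{-1}(u)\,\mathrm{d}u$, and $\int_0^1 \big(\Phi^{-1}(u)\big)^2\,\mathrm{d}u$. The crux of the argument is to identify these as the first three moments of a standard Gaussian: writing $Z = \Phi^{-1}(U)$ with $U \sim \mathrm{Unif}([0,1])$, so that $Z \sim \cN(0,1)$, they equal $1$, $\mathbb{E}[Z] = 0$, and $\mathbb{E}[Z^2] = 1$, respectively. The two cross terms, which each carry the factor $\int_0^1 \Phi^{-1}(u)\,\mathrm{d}u = 0$, therefore vanish, leaving precisely $(\m1 - \mo)(m - \mo) + (\s1 - \so)(\sigma - \so)$; negating this yields \eqref{eq:1d_busemann_gaussian}.

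There is no substantive obstacle here beyond the substitution: the single point requiring care is the vanishing of the odd moment of $\Phi^{-1}$, which is exactly what decouples the mean and variance contributions and produces the clean product form. The computation thereby makes explicit that the 1D Gaussian Busemann function is, up to sign, the Euclidean inner product over $\R \times \R_+^*$ between the centered mean–standard-deviation pairs $(\m1-\mo,\s1-\so)$ and $(m-\mo,\sigma-\so)$.
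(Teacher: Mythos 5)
Your proof is correct and follows essentially the same route as the paper's: both apply \Cref{prop:busemann_closed_1d}, write the Gaussian quantiles as $F^{-1}(u)=m+\sigma\,\phi^{-1}(u)$, and expand the $L^2([0,1])$ inner product using $\int_0^1\phi^{-1}(u)\,\mathrm{d}u=0$ and $\int_0^1\phi^{-1}(u)^2\,\mathrm{d}u=1$ so the cross terms vanish. Your additional verification that the hypotheses ($\sigma_1\ge\sigma_0$ via \Cref{corr:1d_gaussian_geod} and unit speed) legitimize invoking the proposition is a welcome touch but not a difference in substance.
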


\looseness=-1 More generally, on $\bw(\R^d)$, %
we leverage the closed-forms of the Wasserstein distance  \eqref{eq:closed_form_gaussians} and geodesics \eqref{eq:geodesics_bw}, which remain Gaussian at all time. %

\begin{proposition} \label{prop:closed_form_general_gaussian}
    Let $(\mu_t)_{t\ge 0}$ be a geodesic ray characterized by $\mu_0 = \mathcal{N}(\mo, \So)$ and $\mu_1 = \mathcal{N}(\m1, \S1)$, and such that $\kappa_\mu=1$. 
    Then, for any $\nu=\mathcal{N}(m,\Sigma)$,
    \begin{equation}\label{eq:busemann_gaussians}
        \begin{aligned}
          B^\mu(\nu) = &-\langle \m1-\mo, m-\mo\rangle + \tr\big(\So(A_{\mu_0}^{\mu_1}-I_d)\big) \\
            &- \tr\big((\Sigma^\frac12(\So-\So A_{\mu_0}^{\mu_1} - A_{\mu_0}^{\mu_1}\So + \S1)\Sigma^\frac12)^\frac12\big).
        \end{aligned}
    \end{equation}\noindent
\end{proposition}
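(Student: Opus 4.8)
The plan is to compute the defining limit directly from the Bures--Wasserstein closed forms \eqref{eq:closed_form_gaussians} and \eqref{eq:geodesics_bw}. Since $\kappa_\mu=1$, I start from $B^\mu(\nu)=\lim_{t\to\infty}\W_2(\mu_t,\nu)-t$ and rationalize,
\begin{equation*}
    \W_2(\mu_t,\nu)-t=\frac{\W_2^2(\mu_t,\nu)-t^2}{\W_2(\mu_t,\nu)+t}.
\end{equation*}
The triangle inequality gives $|\W_2(\mu_t,\nu)-t|=|\W_2(\mu_t,\nu)-\W_2(\mu_0,\mu_t)|\le \W_2(\mu_0,\nu)$, so the denominator is $2t+O(1)$, and it suffices to extract the coefficient of $t$ in the numerator. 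Since $(\mu_t)_{t\ge0}$ is a ray, \eqref{eq:geodesics_bw} stays a valid Gaussian $\cN(m_t,\Sigma_t)$ for every $t\ge0$, with $m_t=m_0+t(m_1-m_0)$ and $\Sigma_t=B_t\Sigma_0 B_t$, where I write $A:=A_{\mu_0}^{\mu_1}$ (symmetric), $C:=A-I_d$, and $B_t:=(1-t)I_d+tA=I_d+tC$.

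Using \eqref{eq:closed_form_gaussians}, I expand $\W_2^2(\mu_t,\nu)=\|m_t-m\|_2^2+\tr(\Sigma_t)+\tr(\Sigma)-2\tr\big((\Sigma^{\frac12}\Sigma_t\Sigma^{\frac12})^{\frac12}\big)$. The first two terms are exact quadratics in $t$: the means contribute a $t$-coefficient $2\langle m_0-m,\,m_1-m_0\rangle$, and $\tr(\Sigma_t)=\tr(\Sigma_0)+2t\,\tr(C\Sigma_0)+t^2\tr(C\Sigma_0 C)$ contributes $2\tr(C\Sigma_0)$, while $\tr(\Sigma)$ is constant. The only non-polynomial term is $\tr(D_t^{\frac12})$ with $D_t:=\Sigma^{\frac12}\Sigma_t\Sigma^{\frac12}=\Sigma^{\frac12}B_t\Sigma_0 B_t\Sigma^{\frac12}$, which grows like $t$. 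This is the crux of the proof, and I handle it by rescaling: $t^{-2}D_t=\Sigma^{\frac12}(t^{-1}I_d+C)\Sigma_0(t^{-1}I_d+C)\Sigma^{\frac12}\to P:=\Sigma^{\frac12}C\Sigma_0 C\Sigma^{\frac12}$ in the positive semidefinite cone. Because $X\mapsto\tr(X^{\frac12})$ is continuous there (eigenvalues are continuous in the entries and $\sqrt{\cdot}$ is continuous on $[0,\infty)$) and $(t^{-2}D_t)^{\frac12}=t^{-1}D_t^{\frac12}$, I obtain $t^{-1}\tr(D_t^{\frac12})\to\tr(P^{\frac12})$, i.e. $\tr(D_t^{\frac12})=t\,\tr(P^{\frac12})+o(t)$. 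The rescaling route is what makes this robust even when $P$ is singular, i.e. when $C$ is not invertible, where a naive perturbative expansion of the matrix square root would break down.

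To match the stated formula I use the identity $A\Sigma_0 A=\Sigma_1$, which follows at once from $A_{\mu_0}^{\mu_1}=\Sigma_0^{-\frac12}(\Sigma_0^{\frac12}\Sigma_1\Sigma_0^{\frac12})^{\frac12}\Sigma_0^{-\frac12}$. Expanding $C\Sigma_0 C=(A-I_d)\Sigma_0(A-I_d)=\Sigma_0-\Sigma_0 A-A\Sigma_0+\Sigma_1$ shows that $P$ is exactly the matrix under the square root in \eqref{eq:busemann_gaussians}. The same identity, with cyclicity of the trace, gives $\tr(C\Sigma_0 C)=\tr(\Sigma_0)+\tr(\Sigma_1)-2\tr\big((\Sigma_0^{\frac12}\Sigma_1\Sigma_0^{\frac12})^{\frac12}\big)=\cB^2(\Sigma_0,\Sigma_1)$, so the $t^2$-coefficient of the numerator is $\|m_1-m_0\|_2^2+\cB^2(\Sigma_0,\Sigma_1)=\W_2^2(\mu_0,\mu_1)=\kappa_\mu^2=1$ and cancels against $-t^2$; this confirms the limit is finite, consistently with $\W_2(\mu_t,\nu)/t\to1$.

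Finally, dividing the surviving $t$-coefficient $2\langle m_0-m,\,m_1-m_0\rangle+2\tr(C\Sigma_0)-2\tr(P^{\frac12})$ by the leading $2t$ of the denominator yields
\begin{equation*}
    B^\mu(\nu)=-\langle m_1-m_0,\,m-m_0\rangle+\tr\big(\Sigma_0(A-I_d)\big)-\tr(P^{\frac12}),
\end{equation*}
which, upon substituting $P=\Sigma^{\frac12}(\Sigma_0-\Sigma_0 A-A\Sigma_0+\Sigma_1)\Sigma^{\frac12}$, is precisely \eqref{eq:busemann_gaussians}. I expect the square-root trace term, and in particular justifying its asymptotics uniformly in the degenerate case, to be the only real obstacle; the remaining work is bookkeeping of polynomial coefficients together with the algebraic identity $A\Sigma_0 A=\Sigma_1$.
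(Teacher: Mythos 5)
Your proof is correct and follows essentially the same route as the paper's: both rationalize $\W_2(\mu_t,\nu)-t$ into $\big(\W_2^2(\mu_t,\nu)-t^2\big)/\big(\W_2(\mu_t,\nu)+t\big)$ with denominator $2t+O(1)$, expand the Bures--Wasserstein closed forms along $m_t$ and $\Sigma_t=\big((1-t)I_d+tA\big)\Sigma_0\big((1-t)I_d+tA\big)$, invoke $A\Sigma_0 A=\Sigma_1$ and the unit-speed condition to cancel the quadratic terms, and extract the linear coefficient, including the limit $t^{-1}\tr\big((\Sigma^{\frac12}\Sigma_t\Sigma^{\frac12})^{\frac12}\big)\to\tr(P^{\frac12})$. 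Your only departures are cosmetic improvements: you justify the denominator asymptotics by the reverse triangle inequality rather than citing \citep[II.8.24]{bridson2013metric}, and your rescaling argument via continuity of $X\mapsto\tr(X^{\frac12})$ on the positive semidefinite cone makes explicit (and robust to singular $C=A-I_d$) the step the paper writes implicitly as $\tr\big((\cdot+O(1/t))^{\frac12}\big)$.
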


When all covariance matrices commute, \emph{e.g.} if they are diagonal matrices, \eqref{eq:busemann_gaussians} simplifies as
\begin{equation}
    \begin{aligned}
        B^\mu(\nu) %
        &= -\langle\m1-\mo,m-\mo\rangle - \langle \S1^\frac12-\So^\frac12, \Sigma^\frac12-\So^\frac12\rangle_F.
    \end{aligned}
\end{equation}
This corresponds to the inner product in the space $\mathbb{R}^d \times S_d(\mathbb{R})$. Moreover, we recover \eqref{eq:1d_busemann_gaussian} in one dimension.

\section{SLICING DATASETS}

Building on the Sliced-Wasserstein distance, a computationally efficient alternative to the Wasserstein distance, we use the Busemann function to construct new sliced distances to compare labeled datasets. %

\subsection{Sliced-Wasserstein Distance}

Given two discrete distributions $\mu^n=\frac1n\sum_{i=1}^n\delta_{x_i}$ and $\nu^n=\frac1n\sum_{j=1}^n\delta_{y_j} \in \cP_2(\R^d)$, the Wasserstein distance between $\mu^n$ and $\nu^n$ can be computed in $\cO(n^3\log n)$, which is too costly for many applications. Thus, several variants have been proposed, such as adding an entropic regularization and using Sinkhorn's algorithm \citep{cuturi2013sinkhorn}, mini-batchs \citep{fatras2020learning,fatras2021minibatch} or low-rank solvers \citep{scetbon2021low}.

\looseness=-1 Another very popular alternative to the Wasserstein distance, which enjoys much better computational properties, is the Sliced-Wasserstein (SW) distance \citep{rabin2012wasserstein, bonneel2015sliced}. It is based on the attractive closed-form of the Wasserstein distance in 1D \eqref{eq:1d_wasserstein}, which can be computed in practice between discrete distributions by sorting the samples and therefore has a complexity of $\cO(n\log n)$. Given a parametric one dimensional projection $P^\theta:\R^d\to \R$ with $\theta\in\Theta$, SW is then defined between $\mu,\nu\in\cP_2(\R^d)$ as the average of the 1D Wasserstein distances between the projected distributions, \emph{i.e.},
\begin{equation} \label{eq:sw}
    \sw_2^2(\mu,\nu) = \int \W_2^2(P^\theta_\#\mu,P^\theta_\#\nu)\ \mathrm{d}\lambda(\theta),
\end{equation}
for $\lambda\in\cP(\Theta)$. In its original formulation, SW is set for $\Theta=S^{d-1}$, \emph{i.e.} the unit hypersphere in dimension $d$, 
$\lambda=\cU(S^{d-1})$, the uniform measure on $S^{d-1}$ and for any $\theta\in S^{d-1}$, $P^\theta(x)=\langle x,\theta\rangle$. However, there exist other variants with different projection schemes, \emph{e.g.} convolutions for image data \citep{nguyen2022revisiting}, manifold-aware projections \citep{bonet2023hyperbolic, bonet2023sliced, bonet2025sliced}, or more general non linear projections \citep{kolouri2019generalized, chen2022augmented}.

\subsection{Comparing Labeled Datasets}

We consider a dataset $\cD=\big\{(x_i,y_i)\big\}_{i=1}^n$ of pairs of samples $x_i\in \R^d$ associated to a label $y_i$ from a set of $C$ classes $\cY = \{1,\dots,C\}$. Class conditional distributions are given for a specific class $y\in\cY$ by $\varphi(y) = \frac{1}{n_y}\sum_{i=1}^n \delta_{x_i}\mathbb{1}_{\{y_i=y\}}$, with $n_y=\sum_{i=1}^n \mathbb{1}_{\{y_i=y\}}$ the number of samples in the class $y$. A dataset can then be represented by a probability distribution over the product space $\R^d\times \cP_2(\R^d)$, \emph{i.e.} as $\frac{1}{n}\sum_{i=1}^n \delta_{(x_i,\varphi(y_i))}\in\cP_2(\R^d\times \cP_2(\R^d)\big)$. 

\citet{alvarez2020geometric} proposed to use 
\begin{equation}\label{eq:otdd_cost}
    d_{\cD}\big((x,y), (x',y')\big)^2 = \|x-x'\|_2^2 + \W_2^2\big(\varphi(y), \varphi(y')\big).
\end{equation}
\looseness=-1 as groundcost of an optimal transport problem on $\cP_2\big(\R^d\times\cP_2(\R^d)\big)$, defining the Optimal Transport Dataset Distance (OTDD), see Appendix \ref{appendix:otdd}. However, OTDD is costly to compute as it requires to solve $\cO(C^2)$ OT problems with $n_y$ samples to compute the groundcost \eqref{eq:otdd_cost}, and a global OT problem with $n=\sum_{y=1}^C n_y$ samples.
This has led to several approximations and variants aimed at reducing the computational cost, see \emph{e.g.} \citep{hua2023dynamic, liu2025wasserstein, nguyen2025lightspeed, bonet2025flowing}. 

\citet{nguyen2025lightspeed} recently proposed a sliced distance for labeled datasets on $\cPProd{\R^d}$. This requires building a projection from $\R^d\times\cP_2(\R^d)$ to $\R$ to project the distribution of pairs $(x_i,\varphi(y_i))$ onto a distribution in $\cP_2(\R)$. 
Their construction can be broken down into combining two projections from $\R^d\to\R$ and from $\cP_2(\R^d)\to \R$ using the Hierarchical Hybrid projection \citep{nguyen2024hierarchical}, which consists of a random linear combination with weights in the sphere. Especially, for any labeled sample $(x,y)\in\R^d\times \cY$, their projection is of the form
\begin{equation} \label{eq:proj_sotdd}
    P^{\alpha,\theta,\lambda}(x,y) = \alpha_1 P^\theta(x) + \sum_{i=1}^k \alpha_{i+1} \cM^{\lambda_i}\big(P^\theta_\#\varphi(y)\big),
\end{equation}
with $P^\theta:\R^d\to \R$ any projection on a line, $\cM^\lambda:\cP_2(\R)\to \R$ the moment transform projection and $\alpha\in S^{k}$. %
Plugging \eqref{eq:proj_sotdd} into the Wasserstein distance term on $\cP_2(\R)$ in \eqref{eq:sw} defines the sliced OTDD distance (SOTDD) on $\cPProd{\R^d}$.

\subsection{Slicing Datasets with Busemann} \label{sec:sliced_busemann}

As the Busemann function allows to project any probability distribution onto $\R$, it is natural to use it as a projection to define Sliced-Wasserstein distances for the purpose of comparing labeled datasets.

From a computational perspective, we want to avoid solving additional OT problems to compute the Busemann functions. Therefore, we propose two new discrepancies based on the closed-forms of the Busemann function for 1D probability distributions \eqref{eq:busemann_1d} and for Gaussians \eqref{eq:busemann_gaussians}. In both cases, as in \citet{nguyen2025lightspeed}, we use the Hierarchical Hybrid projection from \citep{nguyen2024hierarchical} to combine the projection $P^\theta:\R^d\to \R$ of the features and $Q^\eta:\cY\to \R$ of the labels, \emph{i.e.} for $\alpha\in S^1$, we define for $(x,y)\in \R^d\times \cY$,
\begin{equation}
    P^{\alpha,\theta,\eta}\big((x,y)\big) = \alpha_1 P^\theta(x) + \alpha_2 Q^\eta(y).
\end{equation}

\paragraph{Gaussian Approximation.}

\looseness=-1 To leverage the closed-form between Gaussian \eqref{eq:busemann_gaussians}, we use a Gaussian approximation of the classes, with a possible dimension reduction of the features beforehand as in \citep{hua2023dynamic}. Let us denote $\psi:\R^d\to\R^{d'}$ a dimension reduction operator with $d'\ll d$, and, for $\mu\in\cP_2(\R^{d'})$, the Gaussian approximation $\Xi(\mu)=\cN\big(m(\mu),\Sigma(\mu)\big)$ with $m(\mu)=\int x\mathrm{d}\mu(x)$ and $\Sigma(\mu)=\int (x-m(\mu))\otimes (x-m(\mu))\ \mathrm{d}\mu(x)$ the mean and covariance operators. The label projections are then given by $Q^\eta(y) = B^\eta\big(\Xi\big(\psi_\#\varphi(y)\big)\big)$ with $\eta$ a geodesic ray on $\bw(\R^{d'})$.

To define a sliced distance, we also need to sample a valid ray $\eta$ so that the Busemann function is well defined. To do so, we choose to fix $\eta_0=\cN(0,I_d)$, and sample $\eta_1=\cN(m_1,\Sigma_1)$ such that $m_1\in S^{d-1}$, $\Sigma_1\in S_d^{++}(\R)$ with $\Sigma_1^{\frac12}\succeq I_d$ and $\W_2^2(\eta_0, \eta_1)=1$. To enforce $\Sigma_1^{\frac12}\succeq I_d$, we remark that it is equivalent to consider $S=\log_{I_d}(\Sigma_1) = \Sigma_1^{\frac12} - I_d \succeq 0$, where $\log_{I_d}$ is the logarithm map in $S_d^{++}(\R)$. Thus, we sample uniformly $\Delta\in O_d(\R)$ an orthogonal matrix and $\Tilde{\theta}\in S^{d-1}$, and define $S:=\Delta\mathrm{diag}(|\Tilde{\theta}|)\Delta^T$ and $\Sigma_1:=\exp_{I_d}(S) = (I_d+S)^2$. To enforce $\W_2^2(\eta_0,\eta_1)=1$, we normalize $(m_1-m_0, S)$ in the tangent space to have a unit-speed geodesic ray, see Appendix \ref{appendix:sliced_gm}. Given two datasets $\bP,\bQ\in\cPProd{\R^d}$, we define the Sliced-Wasserstein Busemann Gaussian distance ($\swbg$) as
\begin{equation}
    \swbg^2(\bP,\bQ) = \int \W_2^2(P^\vartheta_\#\bP,P^\vartheta_\#\bQ)\ \mathrm{d}\lambda(\vartheta),
\end{equation}
with $\vartheta = (\alpha,\theta, \Tilde{\theta}, \Delta, m_1)$ and $\lambda$ the uniform measure on the resulting product space.

\paragraph{1D Projections.} To leverage the 1D closed-form of the Busemann function \eqref{eq:busemann_1d}, we can first project the class conditional distributions in 1D, and define for $y\in\cY$, $Q^{\eta,\theta}(y)=B^\eta\big(P^\theta_\#\varphi(y)\big)$.
Regarding $\eta$, setting $\eta_0=\delta_0$, we get geodesic rays for any $\eta_1$, and thus set $\eta_1=\cN(m_1,\sigma_1^2)$ such that the speed $\kappa_\eta = \W_2^2(\eta_0, \eta_1) = m_1^2+\sigma_1^2=1$. Then, for $\mu\in\cP_2(\R)$, \eqref{eq:busemann_1d} writes
\begin{equation} \label{eq:busemann_1d_dirac}
    B^\eta(\mu)=-m_1 m(\mu) - \sigma_1 \int_0^1 \phi^{-1}(u)F_\mu^{-1}(u)\ \mathrm{d}u,
\end{equation}
with $\phi$ the cumulative distribution function of $\cN(0,1)$. In practice, the geodesic ray can be sampled using $m_1\sim \cU([-1,1])$ and setting $\sigma_1=\sqrt{1-m_1^2}$. Given $\bP,\bQ\in\cPProd{\R^d}$, we define the Sliced-Wasserstein Busemann 1D Gaussian distance ($\swbdg$) as
\begin{equation}
    \swbdg^2(\bP,\bQ) = \int \W_2^2(P^\vartheta_\#\bP,P^\vartheta_\#\bQ)\ \mathrm{d}\lambda(\vartheta),
\end{equation}
with $\vartheta=(\alpha,\theta,m_1)$ and $\lambda=\cU(S^1\times S^{d-1}\times [-1,1])$.

At the time of submission, we noticed the concurrent work \citep{piening2025slicingWoW} whose sliced construction on $\cPP{\R^d}$ is close to $\swbdg$, even though it is obtained from a different viewpoint. We detail in Appendix \ref{appendix:bg_wow} the relation between the two sliced distances.

\begin{figure}[t]
    \centering
    \includegraphics[width=\linewidth]{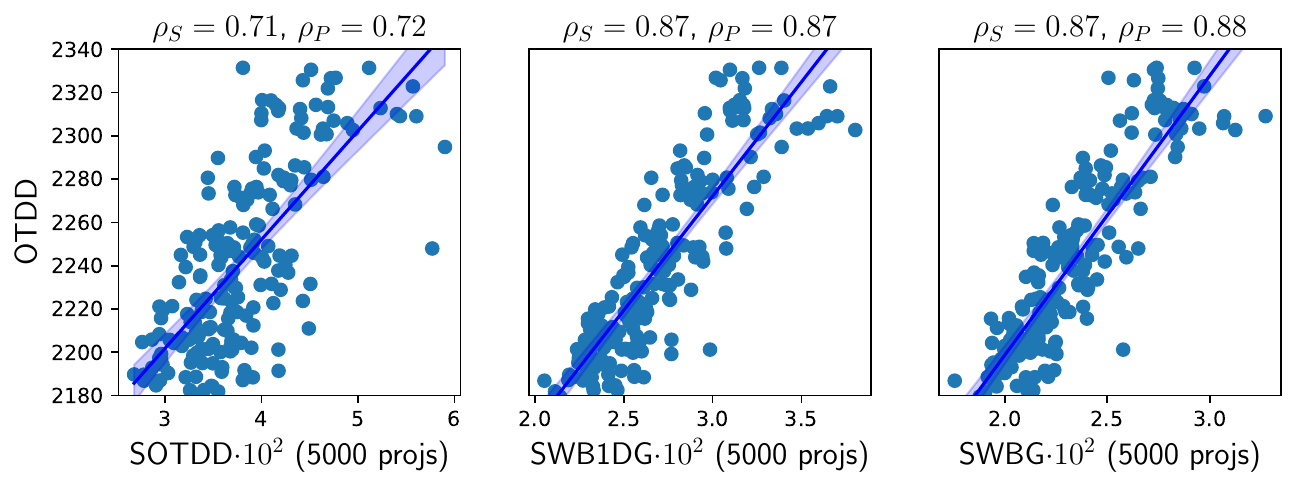}

    \caption{Spearman ($\rho_S)$ and Pearson ($\rho_P$) correlation between $\sotdd$, $\swbdg$, $\swbg$ and OTDD between subdatasets of CIFAR10.}
    \label{fig:correlationcifar10}
\end{figure}

\paragraph{Computational Properties.} 

The sliced distances can be approximated using Monte-Carlo projections. Given $L$ projections, the complexity of $\swbdg$ is $\cO\big(Ln(\log n + d)\big)$, similarly to SOTDD. $\swbg$ is more costly as it requires to compute square roots of matrices, which gives a complexity of $\cO\big(LCd'^{3} + Ln(\log n + d') + Cd'^{2} N_C\big)$ with $N_C=\max_y n_y$. We refer to Appendix \ref{appendix:xp_details} for numerical comparisons. %

\paragraph{Slicing Mixtures.} \looseness=-1 The constructions can also be used to compare distributions on $\cPP{\R^d}$ by setting $\alpha_1=0$. 
In Appendix \ref{appendix:sliced_gm}, we investigate such constructions to compare Gaussian mixtures.

\section{EXPERIMENTS} \label{sec:xps}

\looseness=-1 In this Section, we compare the sliced-based distances on labeled datasets. We first show that $\swbdg$ and $\swbg$ are better replacements for OTDD than SOTDD, as they are more correlated with it. Then, we show that these distances can be used to flow datasets \citep{alvarez2021dataset}, for instance to perform transfer learning. We refer to Appendix \ref{appendix:xp_details} for details\footnote{Code available at \url{https://github.com/clbonet/Busemann_Functions_in_the_Wasserstein_Space}}.

\begin{table}[t]
    \centering
    \caption{Correlation averaged over 10 sets of 50 bootstrapped pairs of subdatasets of CIFAR10, for different number of projections $L$.}
    \resizebox{\linewidth}{!}{
        \begin{tabular}{ccccccccc}
             $L$ & & \multicolumn{3}{c}{Spearman correlation $(\rho_S)$} & & \multicolumn{3}{c}{Pearson correlation ($\rho_P$)}  \\
             & & SOTDD & SWB1DG & SWBG & & SOTDD & SWB1DG & SWBG \\ \toprule
             10 & & $14.0_{\pm 11.3}$ & $\textbf{44.3}_{\pm 10.8}$ & $40.2_{\pm 12.2}$ & & $16.0_{\pm 12.9}$ & $38.6_{\pm 14.6}$ & $\textbf{42.7}_{\pm 9.5}$ \\
             50 & & $30.5_{\pm 12.9}$ & $\textbf{62.6}_{\pm 6.4}$ & $40.4_{\pm 9.8}$ & & $25.2_{\pm 11.4}$ & $\textbf{63.6}_{\pm 6.3}$ & $42.8_{\pm 8.5}$ \\
             100 & & $15.5_{\pm 11.8}$ & $\textbf{71.9}_{\pm 6.4}$ & $68.1_{\pm 7.2}$ & & $21.0_{\pm 11.4}$ & $\textbf{73.9}_{\pm 5.5}$ & $72.8_{\pm 5.4}$ \\
             500 & & $52.1_{\pm 8.1}$ & $\textbf{82.3}_{\pm 2.2}$ & $78.4_{\pm 6.0}$ & & $54.6_{\pm 8.8}$ & $\textbf{83.5}_{\pm 2.1}$ & $79.4_{\pm 7.7}$ \\
             1000 & & $52.0_{\pm 10.9}$ & $83.6_{\pm 4.8}$ & $\textbf{83.7}_{\pm 5.0}$ & & $53.1_{\pm 11.3}$ & $\textbf{85.6}_{\pm 3.5}$ & $84.9_{\pm 4.8}$ \\
             5000 & & $72.2_{\pm 7.5}$ & $88.5_{\pm 4.8}$ & $\textbf{89.3}_{\pm 3.8}$ & & $75.4_{\pm 5.5}$ & $87.8_{\pm 2.8}$ & $\textbf{89.0}_{\pm 2.4}$ \\
             10000 & & $72.6_{\pm 6.1}$ & $82.7_{\pm 4.8}$ & $\textbf{86.7}_{\pm 3.0}$ & & $77.1_{\pm 4.3}$ & $87.3_{\pm 2.8}$ & $\textbf{90.2}_{\pm 2.3}$ \\
             \bottomrule
        \end{tabular}
    }
    \label{tab:correlation_cifar10}
\end{table}

\begin{figure*}[t]
    \includegraphics[width=\linewidth]{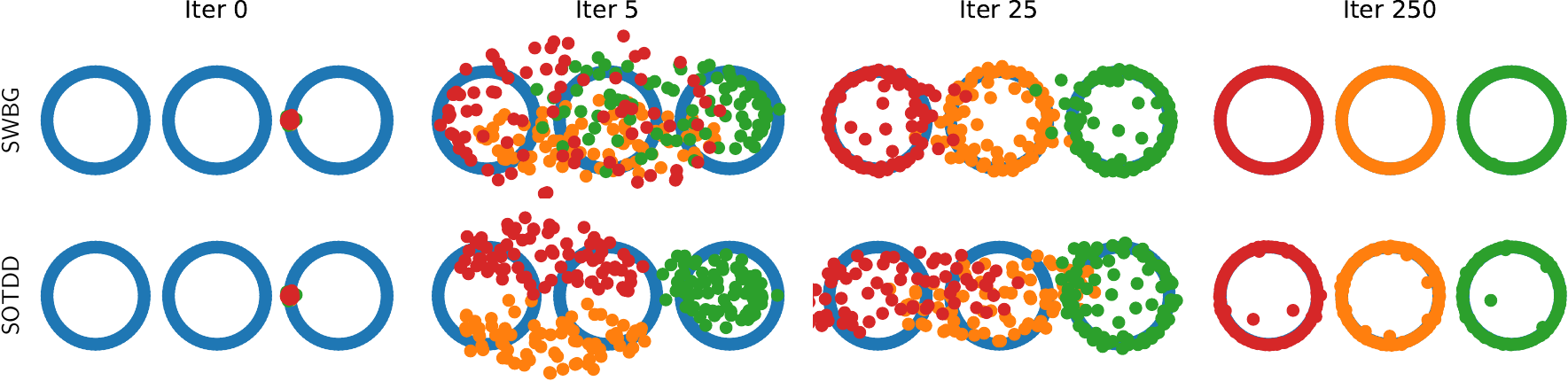}
    \caption{Evolution of the WoW gradient flow of SWBG and SOTDD with the 3-rings dataset as target.}
    \label{fig:flow_rings}
    \vspace{-0.5em}
\end{figure*}

\subsection{Correlation with OTDD}

To show that the sliced distances are suitable proxies to the costly OTDD distance \citep{alvarez2020geometric}, we measure the correlation between OTDD and the sliced distances on image datasets.

\looseness=-1 Following \citep{nguyen2025lightspeed},  we randomly split the CIFAR10 dataset \citep{krizhevsky2009learning} to get subdataset pairs, with sizes ranging from 5000 to 10000 samples, obtaining 200 pairs. Between each pair, we compute OTDD, $\sotdd$, $\swbdg$ and $\swbg$. OTDD and $\sotdd$ are computed using the code shared by \citet{nguyen2025lightspeed}\footnote{at \url{https://github.com/hainn2803/s-OTDD}}. For $\swbg$, we used a TSNE in dimension 10 for $\psi$ using the \texttt{TorchDR} library \citep{torchdr}. The projections are done using convolution projections, as they better capture the spatial structure and are more suitable to compare distributions of images \citep{nguyen2022revisiting}.

We report the results in \Cref{fig:correlationcifar10}, where we first scatter the values obtained in ordinate for OTDD, and in abscissa for the sliced distances with 5000 projections. Then, we report the values of the Spearman and Pearson correlations.%
The Pearson correlation is equal to $\pm1$ when both quantities are linearly correlated while Spearman correlation is equal to $\pm1$ if the quantities are monotonically related. We observe that both $\swbdg$ and $\swbg$ are better correlated to OTDD than $\sotdd$.
We note that the results do not match those in \citep{nguyen2025lightspeed}, where they used only 10 pairs. We hypothesize that for such small number of samples, the Spearman and Pearson correlation are very sensitive to randomness. To verify the robustness of our results, we report in \Cref{tab:correlation_cifar10} the results obtained by bootstrapping 50 pairs of subdatasets, and averaging over 10 experiments. Using this process for different numbers of projections, we see that the Busemann based sliced distances always outperform $\sotdd$, and often under a much smaller number of projections.

\subsection{Flowing Labeled Datasets}

Flowing one dataset onto another is useful to solve tasks ranging from domain adaptation to transfer learning \citep{alvarez2021dataset, hua2023dynamic} or dataset distillation \citep{bonet2025flowing}. This is achieved by minimizing a discrepancy on the space of datasets with respect to a target dataset, and starting from the source dataset. \citet{alvarez2021dataset} proposed to minimize OTDD while \citet{hua2023dynamic} minimized a Maximum Mean Discrepancy. %

\citet{bonet2025flowing} modeled this task as a minimization problem over the space $\cPP{\R^d}$, representing datasets with $n$ samples by class as $\bP=\frac1C\sum_{c=1}^C \delta_{\mu_c}\in \cPP{\R^d}$ with $\mu_c=\frac{1}{n}\sum_{i=1}^n\delta_{x_{i,c}}\mathbb{1}_{\{y_i=c\}}$.  A discrepancy $\bF(\bP)=D(\bP,\bQ)$ with $\bP,\bQ\in \cP_2\big(\cP_2(\R^d)\big)$ can be minimized by a Wasserstein over Wasserstein (WoW) gradient descent on this space. In particular, for $\bP=\frac1C\sum_{c=1}^C \delta_{\mu_c}$, the WoW gradient $\gWw\bF(\bP)$ can be recovered by rescaling the Euclidean gradient of $F(\textbf{x})=\bF(\bP)$ for $\textbf{x}:=(x_{i,c})_{i,c}$ by $nC$, \emph{i.e.} $\gWw\bF(\bP)(\mu_c)(x_{i,c})=nC \nabla F(\textbf{x})_{i,c}$, see \citep[Proposition B.7]{bonet2025flowing}. Given $\bP^k=\frac1C\sum_{c=1}^C \delta_{\mu_c^k}$, $\mu_c^k=\frac1n\sum_{i=1}^n\delta_{x_{i,c}}$, the WoW gradient descent corresponds to updating each particle $x_{i,c}^k$ as
\begin{equation}
    \forall k \ge 0,\  x_{i,c}^{k+1} = x_{i,c}^k - \tau \gWw\bF(\bP^k)(\mu_c^k)(x_{i,c}^k).
\end{equation}

Thus, we propose to minimize the sliced distances using the bijection from $\cPP{\R^d}\to\cPProd{\R^d}$, and performing gradient descent on $\cPP{\R^d}$.

\begin{figure}[t]
    \centering
    \includegraphics[width=\linewidth]{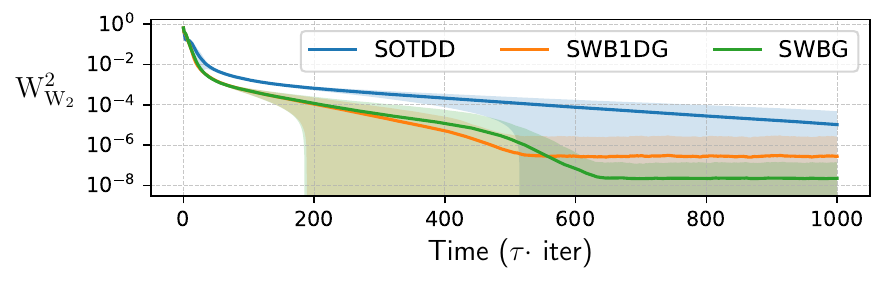}
    \caption{Convergence of the flow towards the 3-rings, averaged over 100 random batches of the target.}
    \label{fig:cv_rings}
\end{figure}

\paragraph{Rings Dataset.}

We first focus on a toy example, where the target dataset contains $C=3$ classes, and each class forms a ring of $n=80$ samples \citep{glaser2021kale,bonet2025flowing}.  We learn a distribution $\bP$ of the same form and with the same number of particles. In \Cref{fig:cv_rings}, we show the convergence of the WoW gradient flows of $\sotdd$, $\swbdg$ and $\swbg$ with the 3-rings dataset as target, averaged over 100 different random batches of the target, and measured with the WoW distance (\emph{i.e.} the OT problem with $\W_2^2$ as ground cost, which we denote $\W_{\W_2}$, see Appendix \ref{appendix:bg_wow}). We use a step size of $\tau=1$ and 1000 iterations. The best performing distance appears to be $\swbg$.  We also report the particles along the flows of $\swbg$ and $\sotdd$ on \Cref{fig:flow_rings}, clearly showing that the flow of $\swbg$ converges faster than the one of $\sotdd$.

\paragraph{Transfer Learning.}

\begin{table}[t]
    \caption{Accuracy of a classifier on augmented datasets for $k\in\{1,5,10,100\}$. M refers to MNIST, F to Fashion MNIST %
    and U to USPS.} %
    \label{tab:results_tf_grid_seach}
    \centering
    \resizebox{\linewidth}{!}{
        \begin{tabular}{cccccc}
             Dataset & $k$ &  Trained on $\mathbb{Q}$ & OTDD & $\swbdg$ & $\sotdd$ \\ \toprule
             \multirow{4}{*}{M to F} 
             & 1 & $26.0_{\pm 5.3}$ & $30.5_{\pm 4.2}$ & $41.3_{\pm 3.4}$ & $\textbf{43.4}_{\pm 2.6}$ \\
             & 5 & $38.5_{\pm 6.7}$ & $59.7_{\pm 1.8}$ & $\textbf{65.5}_{\pm 1.6}$ & $64.5_{\pm 1.2}$ \\
             & 10 & $53.9_{\pm 7.9}$ & $64.0_{\pm 1.4}$ & $66.0_{\pm 0.9}$ & $\textbf{67.7}_{\pm 0.6}$ \\
             & 100 & $71.1_{\pm 1.5}$ & - & $\textbf{74.1}_{\pm 0.6}$ & $72.0_{\pm 1.9}$  \\ \midrule
              \multirow{4}{*}{M to U}
             & 1 & $32.4_{\pm 7.9}$ & $39.5_{\pm 7.9}$ & $45.4_{\pm 4.3}$ & $\textbf{50.1}_{\pm 2.6}$  \\
             & 5 & $51.4_{\pm 9.8}$ & $73.3_{\pm 1.4}$ & $73.5_{\pm 1.4}$ & $\textbf{75.7}_{\pm 0.8}$ \\
             & 10 & $60.3_{\pm 10.1}$ & $72.7_{\pm 2.7}$ & $77.8_{\pm 1.4}$ & $\textbf{80.1}_{\pm 1.0}$ \\
             & 100 & $87.5_{\pm 0.7}$ & - & $\textbf{90.1}_{\pm 0.4}$ & $89.6_{\pm 0.3}$ \\
             \bottomrule
        \end{tabular}
    }
\end{table}

\looseness=-1 We now consider flowing image datasets to solve a $k$-shot transfer learning task \citep{alvarez2021dataset, hua2023dynamic}. In this experiment, we consider a target dataset $\bQ$ with $k$ samples by class, where $k$ is typically small. To improve the classification results, we augment the dataset $\bQ$ by concatenating it with samples flowed from another dataset $\bP_0$, from which we have $n$ samples by class. This is done by minimizing a distance on the space of datasets with respect to $\bQ$, \emph{i.e.} by minimizing $\bF(\bP)=\mathrm{D}(\bP,\bQ)$ starting from $\bP_0$. Any divergence on $\cP_2\big(\R^d\times \cP_2(\R^d)\big)$ may be chosen for $\mathrm{D}$. We focus here on comparing OTDD, SOTDD and SWB1DG for $\mathrm{D}$.

\looseness=-1 We take $\bP_0$ as the MNIST dataset \citep{lecun2010mnist}, and $\bQ$ as Fashion MNIST \citep{xiao2017fashion} or USPS \citep{hull1994database}, and set $k\in \{1,5,10,100\}$ and $n=200$. We minimize $\swbdg$ and $\sotdd$ using the WoW gradient descent with step size $\tau=1$ and momentum $m=0.9$. For the choice of the number of projections for the Monte-Carlo approximation of the sliced distances, and of the number of iterations for the gradient descent, we perform a grid search. We refer to Appendix \ref{appendix:xp_transfer_learning} for more details about the experiment, and for the best parameters selected with the grid search.

Once the dataset $\bP_0$ has been flowed towards $\bQ$, and each class of the flowed dataset $\bP_T$ has been matched to a class of the target dataset $\bQ$ by solving the WoW OT problem, we train a LeNet5 on the augmented concatenated dataset. On \Cref{tab:results_tf_grid_seach}, we report the accuracy of the classifier evaluated on a test set. We report the best results among the grid search for the classifier trained on $\bQ$ augmented with data flowed by minimizing OTDD, $\swbdg$ and $\sotdd$. We also report a baseline where the network was only trained on $\bQ$. The results are averaged over 5 training of the networks, and 3 outputs of the flows for $\swbdg$ and $\sotdd$, and are taken from \citep{bonet2025flowing} for OTDD and the baseline. %

The results are overall comparable between $\swbdg$ and $\sotdd$. Additional details, as well as examples of generated images, are provided in Appendix \ref{appendix:xp_transfer_learning}. In particular, for a large number of iterations, both flows converge to good images. However, note that improved image quality does not necessarily translate into better performance on the transfer task.

Additionally, we report on \Cref{tab:runtime_tf} the runtimes for the transfer learning experiment, averaged over 3 outputs of the flows and trained for 5K epochs. We observe that SOTDD and SWB1DG almost have the same runtime, which is expected as they have the same computational complexity up to constants, and both are much more efficient than OTDD.

\section{CONCLUSION}

\looseness=-1 We studied in this work in which cases the Busemann function is well defined on the Wasserstein space, and how to compute it in practice. More precisely, we identified conditions to  define geodesic rays on the Wasserstein space, showed that the Busemann function can be computed in general by solving an optimal transport problem, and derived closed-form formulas under which it can be computed efficiently. Then, we leveraged these closed-forms to define new efficient Sliced-Wasserstein distances on the space of datasets. Future works will include improving the scalability of $\swbg$, \emph{e.g.} by using Gaussian approximations with low-rank covariances \citep{bouveyron2025scaling}, investigating other applications for the Busemann function such as Principal Component Analysis on the Wasserstein space \citep{cazelles2018geodesic, vesseron2025wasserstein}, or its computation on probabilities over manifolds.

\begin{table}[t]
    \caption{Runtime in seconds for the transfer learning experiment from MNIST to Fashion MNIST.} %
    \label{tab:runtime_tf}
    \centering
    \resizebox{\linewidth}{!}{
        \begin{tabular}{ccccc}
             Dataset & $k$-shot & OTDD & SWB1DG & SOTDD \\ \toprule
             \multirow{4}{*}{M to F} 
             & 1 & $294.53 \pm 5.21$  & $13.53 \pm 0.42$ & $14.07 \pm 0.37$ \\
             & 5 & $1130.89 \pm 108$  & $13.84 \pm 0.34$ & $14.11 \pm 0.21$ \\
             & 10 & $2294.13 \pm 48$  & $14.00 \pm 0.37$ & $14.15 \pm 0.18$ \\
             & 100 & -  & $15.19 \pm 0.49$ & $15.31 \pm 0.49$ \\ 
             \bottomrule
        \end{tabular}
    }
\end{table}

\makeatletter
\subsubsection*{Acknowledgements}

We thank the anonymous reviewers for their valuable comments. CB thanks Pierre-Cyril Aubin-Frankowski for feedbacks on an earlier version of the draft. This work was granted access to the HPC resources of IDRIS under the allocation 2024-AD011015891 made by GENCI. CB and EC acknowledge the support of the Agence nationale de la recherche, through the PEPR PDE-AI project (ANR-23-PEIA-0004). NC was supported by the ANR chair OTTOPIA ANR-20-CHIA-0030 and contributes to  AI Excellence Cluster SequoIA (grant ANR-23-IACL-0009).

\bibliographystyle{plainnat}
\bibliography{references}

\clearpage
\appendix
\thispagestyle{empty}

\onecolumn
\aistatstitle{Supplementary Materials}

\section{BACKGROUND}

We provide in this section additional background on the Busemann function, on the Wasserstein space, on the Wasserstein over Wasserstein space, and finally on Optimal Transport distances to compare labeled datasets.

\subsection{Background on the Busemann Function} \label{app:bg_busemann}

Let $(\X, d)$ be a  geodesic metric space, \emph{i.e.} a metric space, where each $x,y\in\X$ can be linked by a continuous curve $\gamma:[0,1]\to \X$ such that $\gamma(0)=x$, $\gamma(1)=y$ and which satisfies for all $s,t\in [0,1]$, $d\big(\gamma(s),\gamma(t)\big) = |t-s| d\big(\gamma(0),\gamma(1)\big)$.

Suppose that $(\X,d)$ admits geodesic rays, \emph{i.e.} geodesic curves $\gamma:\R_+\to\X$ such that for all $t,s\ge 0$, $d\big(\gamma(t),\gamma(s)\big) = |t-s| d\big(\gamma(0),\gamma(1)\big)$. Well-known spaces in which any geodesic can be extended as a geodesic ray are Hadamard spaces \citep{bridson2013metric}, which are metric space of non-positive curvature, including Hadamard manifolds such as Euclidean spaces, Hyperbolic spaces or the space of Symmetric Positive Definite matrices (SPDs) with appropriate metrics.

The Busemann function $B^\gamma$ associated to the geodesic ray $\gamma$ is defined, for all $x\in\X$, as (see \emph{e.g.} \citep[II. 8.17]{bridson2013metric})
\begin{equation}
    B^\gamma(x) = \lim_{t\to \infty}\ d\big(x,\gamma(t)\big) - d\big(\gamma(0),\gamma(t)\big) = d\big(x, \gamma(t)\big) - t d\big(\gamma(0),\gamma(1)\big).
\end{equation}
This function has attracted a lot of attention as it provides a natural generalization of affine functions, and thus of hyperplanes through its level sets. Indeed, for $\X=\R^d$, $v\in\R^d$ and $\gamma(t)=x + tv$ for all $t\in \R$, the Busemann function is equal to
\begin{equation}
    \forall y\in \R^d,\ B^\gamma(y) = -\left\langle y-x, \frac{v}{\|v\|_2} \right\rangle.
\end{equation}
Its level sets $(B^\gamma)^{-1}(\{t\})$ for $t\in \R$ are called horospheres, and allow to define a generalization of affine hyperplanes beyond the Euclidean space.

The Busemann function can also be computed in closed-form in many spaces, including hyperbolic spaces \citep{chami2021horopca, bonet2023hyperbolic}, the space of SPDs with the Affine-Invariant metric \citep{fletcher2009computing, fletcher2011horoball} or with pullback Euclidean metrics \citep{bonet2025sliced}. However, it has attracted the most attention in spaces where any geodesic is a ray. Thus, in this work, we provide an analysis of this function on the Wasserstein space, which has non-negative curvature, and in which not all geodesics can therefore be extended as rays.

In its original formulation, the Busemann function does not depend on the speed of the geodesic $d\big(\gamma(0),\gamma(1)\big)$ \citep{criscitiello2025horospherically}. For instance, let $\X=\cM$ be a Hadamard manifold. If we consider two geodesic rays $\gamma$ and $\Tilde{\gamma}$ both starting from $x\in\cM$ and with respective speed $v\in T_x\cM$ and $\frac{v}{\|v\|_x}\in T_x\cM$, \emph{i.e.} $\gamma(t)=\exp_x(tv)$ and $\Tilde{\gamma}(t)=\exp_x(tv/\|v\|_x)$, then $B^\gamma(y)=B^{\Tilde{\gamma}}(y)$ for any $y\in \cM$. Indeed, let $y\in \cM$,
\begin{equation}
    \begin{aligned}
        B^{\Tilde{\gamma}}(y) &= \lim_{t\to +\infty}\ d\left(y, \exp_x\left(t\frac{v}{\|v\|_x}\right)\right) - t \\
        &= \lim_{s\to +\infty}\ d\big(y, \exp_x(sv)\big) - s\|v\|_x \qquad \qquad \qquad \text{($s\leftarrow t/\|v\|_x$)} \\
        &= \lim_{s\to +\infty}\ d\big(y, \exp_x(sv)\big) - s d\big(\gamma(0),\gamma(1)\big) \\
        &= B^\gamma(y).
    \end{aligned}
\end{equation}
Thus, it can always be assumed that the geodesics are of unit speed, up to a normalization of its speed. This is in particular the case on the Wasserstein space, leveraging its Riemannian structure.

In term of applications, the Busemann function has been used to perform classification with prototypes \citep{ghadimi2021hyperbolic, durrant2023hmsn, berg2024horospherical, berg2025multi}, to define boundary conditions on manifolds for classification \citep{fan2023horospherical, doorenbos2024hyperbolic} or define layers of neural networks \citep{wang2021laplacian, sonoda2022fully, nguyen2025neural}. It has also been used to define projections on geodesics subspaces to perform Principal Component Analysis on Hyperbolic spaces \citep{chami2021horopca} or on geodesics to define Sliced-Wasserstein distances on manifolds \citep{bonet2023hyperbolic,bonet2025sliced}. Recently, it has also received attention to define notions of convexity and gradients on Hadamard manifolds \citep{criscitiello2025horospherically} and more generally on Hadamard spaces \citep{goodwin2024subgradient}.

\subsection{Background on the Wasserstein space} \label{app:bg_wasserstein_space}

We provide here some additional background on the Wasserstein space and on measure theory by recalling the disintegration of a measure on product spaces.

\paragraph{Optimal Transport.} 

We recall that $\cP_2(\R^d)=\{\mu\in\cP(\R^d),\ \int \|x\|_2^2\ \mathrm{d}\mu(x)<\infty\}$. Then, the Wasserstein distance between $\mu,\nu\in\cP_2(\R^d)$ is defined as
\begin{equation}
    \W_2^2(\mu,\nu) = \inf_{\gamma\in\Pi(\mu,\nu)}\ \int \|x-y\|_2^2\ \mathrm{d}\gamma(x,y),
\end{equation}
with $\Pi(\mu,\nu)$ the set of couplings between $\mu$ and $\nu$. Defining $\pi^1:(x,y)\mapsto x$ and $\pi^2:(x,y)\mapsto y$ the projections on the coordinates, and $\#$ the push forward operator which satisfies for any measurable map $\T:\R^d\to \R^{d'}$ and Borelian $A\in\cB(\R^{d'})$, $(\T_\#\mu)(A) = \mu\big(\T^{-1}(A)\big)$, $\Pi(\mu,\nu)=\{\gamma\in\cP_2(\R^d\times \R^d),\ \pi^1_\#\gamma=\mu,\ \pi^2_\#\gamma=\nu\}$.

Note that Optimal Transport costs can be more generally defined between measures on any measurable spaces $\cX,\cY$ and for any cost $c:\cX\times \cY\to \R$ lower semi-continuous, using the Kantorovich formulation, \emph{i.e.} for $\mu\in\cP(\cX)$, $\nu\in\cP(\cY)$,
\begin{equation}
    \W_c(\mu,\nu) = \inf_{\gamma\in\Pi(\mu,\nu)}\ \int c(x,y)\ \mathrm{d}\gamma(x,y).
\end{equation}

For $\cX=\cY=\R^d$ and $c(x,y)=\|x-y\|_2^2$, $\W_2$ defines a distance, and $(\cP_2(\R^d),\W_2)$ has a formal Riemannian structure. Between $\mu_0,\mu_1\in\cP_2(\R^d)$, we can always define a constant-speed geodesic $t\in[0,1]\mapsto \mu_t$, which satisfies for all $s,t\in [0,1]$, $\W_2(\mu_s,\mu_t)=|t-s|\W_2(\mu_0,\mu_1)$ (see \emph{e.g.} \citep[Theorem 5.27]{santambrogio2015optimal}). In particular, these curves can be written as McCann's displacement interpolation \citep{mccann1997convexity}
\begin{equation}
    \forall t\in [0,1], \mu_t = \big((1-t)\pi^1 + t\pi^2)_\#\gamma,
\end{equation}
with $\gamma\in\Pi_o(\mu,\nu)$ an optimal coupling between $\mu$ and $\nu$.

When $\mu\in\cPa$ is absolutely continuous \emph{w.r.t} the Lebesgue measure, Brenier's theorem \citep{brenier1991polar} states that there is a unique optimal coupling $\gamma$ between $\mu$ and $\nu\in\cP_2(\R^d)$, and that this optimal coupling is supported on a graph of a function, \emph{i.e.} there exists $\T:\R^d\to\R^d$ such that $\T_\#\mu=\nu$ and $\gamma=(\id,\T)_\#\mu$. In this case, the geodesic between $\mu\in\cPa$ and $\nu\in\cP_2(\R^d)$ is of the form
\begin{equation}
    \forall t\in [0,1],\ \mu_t = \big((1-t)\id + t \T)_\#\mu.
\end{equation}
We can also define the notion of exponential map as $\exp_\mu(v) = (\id + v)_\#\mu$ for any $v\in L^2(\mu)$. Using this map, the geodesic can be written as $\mu_t = \exp_\mu\big(t(\T-\id)\big)$ for all $t\in [0,1]$. We can also define its inverse, the logarithm map, as $\log_\mu(\nu)=\T_\mu^\nu-\id$ with $\T_\mu^\nu$ the OT map between $\mu$ and $\nu$.

Thanks to the formal Riemannian structure, we can also define notions of gradients. We refer \emph{e.g.} to \citep{ambrosio2008gradient} for details on their definition and properties.

\paragraph{Bures-Wasserstein Space.}

The Wasserstein distance between two Gaussian has a closed-form, and is named the Bures-Wasserstein distance, \emph{i.e.} for $\mu=\cN(m_\mu,\Sigma_\mu)$, $\nu=\cN(m_\nu,\Sigma_\nu)$ with $m_\mu,m_\nu\in\R^d$ and $\Sigma_\mu,\Sigma_\nu\in S_d^{++}(\R)$,
\begin{equation}
    \W_2^2(\mu,\nu) = \bw^2(\mu,\nu) = \|m_\mu-m_\nu\|_2^2 + \tr\big(\Sigma_\mu + \Sigma_\nu - 2 (\Sigma_\mu^{\frac12}\Sigma_\nu\Sigma_\mu^{\frac12})^{\frac12}\big).
\end{equation}
Gaussian being absolutely continuous measures, there is also a unique OT map which is given by
\begin{equation}
    \forall x\in \R^d,\ \T(x) = m_\nu + A_\mu^\nu(x-m_\mu), \quad \text{with}\quad A_\mu^\nu = \Sigma_\mu^{-\frac12}(\Sigma_\mu^{\frac12}\Sigma_\nu \Sigma_\mu^{\frac12})^{\frac12} \Sigma_\mu^{-\frac12}.
\end{equation}
In particular, since $\T_\#\mu=\nu$, we also have the relation between the covariance matrices $A_\mu^\nu \Sigma_\mu (A_\mu^\nu)^T = \Sigma_\nu$. Moreover, the geodesics are of the form, for any $t\in [0,1]$, $\mu_t = \big((1-t)\id + t\T\big)_\#\mu$. Since the map $x\mapsto (1-t)x+ t \T(x)$ is affine for any $t\in [0,1]$, geodesics stay Gaussian at each time $t$, \emph{i.e.} $\mu_t=\cN(m_t,\Sigma_t)$. Moreover, their closed-forms can be computed, and are given by \citep{altschuler2021averaging}
\begin{equation}
    \begin{cases}
        m_t = (1-t)m_\mu + tm_\nu \\
        \Sigma_t = \big((1-t) I_d + t A_\mu^\nu\big)\Sigma_\mu\big((1-t)I_d + t A_\mu^\nu\big).
    \end{cases}    
\end{equation}

When endowing the space of Gaussian $\bw(\R^d)=\{\cN(m,\Sigma),\ m\in \R^d,\ \Sigma\in S_d^{++}(\R)\}$ with the Bures-Wasserstein distance, the space $(\bw(\R^d), \bw)$ is actually a real Riemannian manifold \citep{bhatia2019bures}, and not just formally. Its tangent space at any $\mu=\cN(m,\Sigma)$ is the space of affine functions with symmetric linear term \citep[Appendix A.1]{diao2023forward}. We identify it here as $T_\mu\bw(\R^d) = \R^d \times S_d(\R)$ with $S_d(\R)$ the space of symmetric matrices in $\R^{d\times d}$. Using this identification, the Riemannian metric is at any $(m_\mu,\Sigma_\mu)\in\R^d\times S_d^{++}(\R)$, $(m,S)\in \R^d\times S_d(\R)$, $\|(m,S)\|_{m_\mu,\Sigma_\mu}^2 = \|m\|_2^2 + \|S\|_{\Sigma_\mu}^2$ where $\|S\|_{\Sigma_\mu}^2 = \tr(S\Sigma_\mu S)$ \citep{takatsu2011wasserstein}. We can also define the notion of exponential map at $\mu=\cN(m_\mu,\Sigma_\mu)$ as, for any $(m, S)\in \R^d\times S_d(\R)$,
\begin{equation}
    \exp_{\mu}(m, S) = \cN\big(m_\mu + m, (I_d + S)\Sigma_\mu(I_d + S)\big).
\end{equation}
As the mean part is Euclidean, we will often just focus on the covariance part, and write $\exp_\Sigma(S)=(I_d + S)\Sigma(I_d + S)$. We can also define the logarithm map, for $\mu=\cN(m_\mu,\Sigma_\mu)$ and $\nu=\cN(m_\nu,\Sigma_\nu)$, as
\begin{equation}
    \log_\mu(\nu) = (m_\nu-m_\mu, A_\mu^\nu - I_d).
\end{equation}
Similarly, we can write $S=\log_{\Sigma_{\mu}}(\Sigma_\nu)=A_\mu^\nu - I_d$. As it is a Riemannian manifold, we can also define notions of Bures-Wasserstein gradients, see \emph{e.g.} \citep{lambert2022variational, diao2023forward}.

This space is in particular of non-negative curvature \citep{takatsu2011wasserstein}, and thus not geodesically complete. Therefore, not any geodesic can be extended towards infinity.

\paragraph{Disintegration.}

We also recall the definition of the disintegration, see \emph{e.g.} \citep[Theorem 5.3.1]{ambrosio2008gradient}, which will be useful in subsequent proofs.

\begin{definition}[Disintegration of a measure] \label{def:disintegration}
    Let $(Y,\mathcal{Y})$ and $(Z,\mathcal{Z})$ be measurable spaces, and~$(X,\mathcal{X})=(Y\times Z,\mathcal{Y}\otimes\mathcal{Z})$ the product measurable space. Then, for~$\mu\in\mathcal{P}(X)$, we denote the marginals as $\mu_Y = \pi^Y_\#\mu$ and $\mu_Z=\pi^Z_\#\mu$, where $\pi^Y$ (respectively $\pi^Z$) is the projection on $Y$ (respectively Z). Then, a~family $\big(K(y,\cdot)\big)_{y\in\mathcal{Y}}$ is a disintegration of $\mu$ if for all $y\in Y$, $K(y,\cdot)$ is a measure on $Z$, for~all $A\in\mathcal{Z}$, $K(\cdot,A)$ is measurable and:
    \begin{equation*}
        \forall g\in C(X),\ \int_{Y\times Z} g(y,z)\ \mathrm{d}\mu(y,z) = \int_Y\int_Z g(y,z)K(y,\mathrm{d}z)\ \mathrm{d}\mu_Y(y),
    \end{equation*}
    where $C(X)$ is the set of continuous functions on $X$. We can note $\mu=\mu_Y\otimes K$. $K$ is a probability kernel if for all $y\in Y$, $K(y,Z)=1$.
\end{definition}
The~disintegration of a measure actually corresponds to conditional laws in the context of probabilities. In the case where $X=\mathbb{R}^d$, we have existence and uniqueness of the disintegration (see \citep[Box 2.2]{santambrogio2015optimal} or \citep[Chapter 5]{ambrosio2008gradient} for the more general case).

\subsection{Backgound on the Wasserstein over Wasserstein Space} \label{appendix:bg_wow}

When working with probability over probability distributions $\cPP{\R^d}$, a natural metric is to use the OT distance with $\W_2$ as groundcost, which we call the Wasserstein over Wasserstein (WoW) distance, \emph{i.e.} for any $\bP,\bQ\in\cPP{\R^d}$,
\begin{equation}
    \Ww^2(\bP,\bQ) = \inf_{\Gamma\in\Pi(\bP,\bQ)} \ \int \W_2^2(\mu,\nu)\ \mathrm{d}\Gamma(\mu,\nu).
\end{equation}
This defines a distance on $\cPP{\R^d}$ \citep{nguyen2016borrowing}, and the resulting space has also a geodesic structure \citep{bonet2025flowing, pinzi2025totally}. Moreover, several recent works have investigated the analog of Brenier's theorem on this space \citep{emami2025optimal, pinzi2025totally, beiglbock2025brenier}.

\paragraph{WoW Gradients.}

\citet{bonet2025flowing} recently defined a notion of gradient on $\big(\cPP{\R^d}, \Ww\big)$, see \citep[Definition 3.3]{bonet2025flowing}. In their paper, they give a more general definition on manifolds, for clarity, we report here the definition on $\cPP{\R^d}$. For any $\gamma\in\cP_2(\R^d\times \R^d)$, denote $\phi^1(\gamma)=\pi^1_\#\gamma$ and $\phi^2(\gamma)=\pi^2_\#\gamma$. Moreover, let $ L^2\big(\bP, T\cP_2(\R^d)\big)=\{\xi:\cP_2(\R^d)\to T\cP_2(\R^d),\ \int \|\xi(\mu)\|_{L^2(\mu)}^2\ \mathrm{d}\bP(\mu) < \infty\}$.

\begin{definition}[WoW Gradient on $\cPP{\R^d}$]
    Let $\bF:\cPP{\R^d}\to \R$. The WoW gradient of $\bF$ at $\bP\in\cPP{\R^d}$, if it exists, is defined as the map $\xi\in L^2\big(\bP, T\cP_2(\R^d)\big)$, which satisfies for any $\bQ\in\cPP{\R^d}$ and $\bGamma\in \{\bGamma\in \cPP{\R^d\times\R^d},\ \phi^1_\#\bGamma=\bP, \ \phi^2_\#\bGamma=\bQ,\ \iint \|x-y\|_2^2\ \mathrm{d}\gamma(x,y)\mathrm{d}\bGamma(\gamma)=\Ww^2(\bP,\bQ)\}$,
    \begin{equation}
        \bF(\bQ) = \bF(\bP) + \iint \langle \xi(\pi^1_\#\gamma)(x), y-x\rangle\ \mathrm{d}\gamma(x,y)\mathrm{d}\bGamma(\gamma) + o\big(\Ww(\bP,\bQ)\big).
    \end{equation}
    In the following, we note $\gWw\bF(\bP)=\xi$ such a gradient.
\end{definition}

Let $\bF:\cPP{\R^d}\to \R$ be a function such that there exists for any discrete measure of the form $\bP=\frac1C\sum_{c=1}^C\delta_{\mu_c^n}$ with $\mu_c^n = \frac1n\sum_{i=1}^n \delta_{x_{i,c}}$, a function $F:\R^{n\times C}\to \R$ satisfying $\bF(\bP)=F(\textbf{x})$ with $\textbf{x}=(x_{i,c})_{i,c}$. The WoW gradient of $\bF$, if well defined, can be obtained by rescaling the Euclidean gradient of $F$. More precisely, $\gWw\bF(\bP)(\mu_c^n)(x_{i,c}) = nC \nabla F(\textbf{x})_{i,c}$ \citep[Proposition B.7]{bonet2025flowing}. In practice, $\nabla F$ can be obtained using backpropagation.

This gradient allows to perform gradient descent on $(\cPP{\R^d},\Ww)$, by the scheme, for any $\tau>0$, 
\begin{equation}
    \forall k\ge 0,\ \bP_{k+1} = \exp_{\bP_k}\big(-\tau \gWw\bF(\bP_k)\big).
\end{equation}
For $\bP = \frac1C\sum_{c=1}^C \delta_{\mu_c}$, the scheme can be obtained by applying to each particle $x_{i,c}^k$ the update,
\begin{equation}
    \forall i\in\{1,\dots,n\},\ c\in\{1,\dots,C\}, \ k\ge 0,\ x_{i,c}^{k+1} = x_{i,c}^k - \tau \gWw\bF(\bP_k)(\mu_c^k)(x_{i,c}^k).
\end{equation}

\paragraph{Computational Properties and Variants.}

The computation of $\Ww$ can be costly. Indeed, for $\bP=\frac1C\sum_{c=1}^C\delta_{\mu_c^n}$ and $\bQ=\frac1C\sum_{c=1}^C \delta_{\nu_c^n}$ two discrete distributions with $\mu_c^n,\nu_c^n\in\cP_2(\R^d)$ empirical distributions with $n$ samples, it is required to first compute $\cO(C^2)$ OT distances with $n$ samples, and a final OT distance with $C$ samples. In general, $C\ll n$, and thus the computational complexity is $\cO(C^2 n^3\log n)$. 

To alleviate this computational burden, several approximations can be used. On one hand, it is possible to use a less costly distance as groundcost, such as the Sliced-Wasserstein distance \citep{baouan2025optimal, piening2025slicing}, reducing the complexity to $\cO(C^2 Ln\log n + C^3\log C)$. One could also use Linear OT \citep{wang2013linear, liu2025wasserstein} as groundcost, hence allowing to compute only $2C$ OT problems instead of $\cO(C^2)$ and reducing the complexity to $\cO(C n^3 \log n)$. This is particularly appealing when there are lots of classes.

\citet{piening2025slicingWoW} very recently proposed a doubled slicing distance to compare distributions on $\cPP{\R^d}$. Given a projection $P^\theta:\R^d\to\R$ for $\theta\in \Theta$, they first project in 1D the distributions $\mu\in\cP_2(\R^d)$ using $\varphi^\theta(\mu)=P^\theta_\#\mu\in\cP_2(\R)$, \emph{i.e.} for $\bP\in\cPP{\R^d}$, $\varphi^\theta_\#\bP\in\cPP{\R}$. Then, they project $\bP^\theta :=\varphi^\theta_\#\bP$ in $\cP_2(L^2([0,1])$ using, for any $\mu\in\cP_2(\R)$, $\phi(\mu)=F_\mu^{-1}$, and use the Sliced-Wasserstein distance on the Hilbert space $L^2([0,1])$ \citep{han2023sliced}. More precisely, they define for $\bP,\bQ\in\cPP{\R^d}$
\begin{equation}
    \mathrm{DSW}_2^2(\bP,\bQ)= \int_{S^{d-1}}\sw_2^2(\phi_\#\varphi^\theta_\#\bP,\phi_\#\varphi^\theta_\#\bQ)\ \mathrm{d}\lambda(\theta),
\end{equation}
with the Sliced-Wasserstein distance defined on $\cP_2\big(L^2([0,1])\big)$. For SW on this space, they use Gaussian on $\cP_2\big(L^2([0,1])\big)$ as slicing measure, which are sampled using Gaussian processes.

Note that given $f\in L^2([0,1])$ and $\theta\in \Theta$, $\mu\sim\bP$ is projected on $\R$ by the map $Q^{f,\theta}(\mu)=\langle f, F_{P^\theta_\#\mu}^{-1}\rangle_{L^2([0,1])}$. While for $\swbdg$, given a geodesic ray $\eta$ on $\cP(\R)$ and $\theta\in \Theta$, the projection is $Q^{\eta,\theta}(\mu)=B^\eta(P^\theta_\#\mu) = -\langle F_1^{-1}-F_0^{-1}, F_{P^\theta_\#\mu}^{-1} - F_0^{-1}\rangle_{L^2([0,1])}$ using \eqref{eq:busemann_1d} and noting $F_0^{-1}$ and $F_1^{-1}$ the quantile functions of $\eta_0$ and $\eta_1$. Thus both $\mathrm{DSW}$ and $\swbdg$ are very similar as they use an inner product on $L^2([0,1])$ for their projection. However, they differ on how to sample the directions and on the projections. In particular, in $\swbdg$, we only sample directions which produce valid geodesic rays, and take elements from $L^2([0,1])$ which are difference of left continuous and non-decreasing functions, while $\mathrm{DSW}$ can sample on the full space $L^2([0,1])$.

\subsection{Background on Optimal Transport Dataset Distances} \label{appendix:otdd}

A labeled dataset $\cD=\{(x_i,y_i)\}_{i=1}^n$ with $x_1,\dots,x_n\in\cX=\R^d$ the features and $y_1,\dots,y_n\in \cY=\{1,\dots,C\}$ their associated labels, which we suppose here to be discrete, can be represented as a probability distribution $\mu_\cD$ over $\cX\times \cY$, \emph{i.e.} $\mu_\cD = \frac{1}{n}\sum_{i=1}^n \delta_{(x_i,y_i)}$. Thus, a natural way to compare labeled datasets is through distances on $\cP(\cX\times \cY)$. 

\paragraph{OTDD.}

Optimal transport distances can be defined on this space if provided a suitable groundcost on $\cX\times \cY$. While there is usually a natural distance on $\cX$, it is less clear which cost to use on $\cY$ as the labels of the classes might be chosen arbitrarily in practice. Thus, \citet{alvarez2020geometric} proposed to embed labels $y\in\cY$ in the space $\cP_2(\cX)$ through their conditional distributions, \emph{i.e.} using an embedding $\varphi:\cY\to\cP_2(\cX)$ defined as $\varphi(y) = \frac{1}{n_y}\sum_{i=1}^n \delta_{x_i}\mathbb{1}_{\{y_i=y\}}$ where $n_y=\sum_{i=1}^n \mathbb{1}_{\{y_i=y\}}$ is the cardinal of the class $y$, and to represent labeled datasets as distributions on $\cX\times \cP_2(\cX)$. One natural groundcost in this space is then the geodesic distance on the product space, defined for any $(x,y), (x',y') \in \cX\times \cY$, as
\begin{equation}
    d_{\cD}\big((x,y), (x',y')\big)^2 = \|x-x'\|_2^2 + \W_2^2\big(\varphi(y), \varphi(y')\big).
\end{equation}
Then, \citet{alvarez2020geometric} proposed to compare two datasets $\cD, \cD'$ using optimal transport with this ground cost, called the Optimal Transport Dataset Distance (OTDD):
\begin{equation}
    \otdd^2(\cD, \cD') = \inf_{\gamma\in\Pi(\mu_\cD,\mu_{\cD'})} \int d_{\cD}\big((x,y),(x',y')\big)^2\ \mathrm{d}\gamma\big((x,y), (x',y')\big).
    \label{def:otdd}
\end{equation}
For $C$ classes, and a maximum of $n_C$ samples by class, OTDD requires solving $\cO(C^2)$ OT problems with $n_C$ samples, and a final OT problem of $Cn_C$ samples, which leads to the complete complexity of $\cO(C^3 n_C^3)$. It is thus a very costly distance to compute. \citet{alvarez2020geometric} hence proposed to approximate it using an entropic regularization for the final OT problem \citep{cuturi2013sinkhorn}, and a Gaussian approximation for the $C^2$ smaller OT problems, reducing the complexity to $\cO\big(Cn_Cd^2+C^2d^3 + \varepsilon^{-2}n_C^2C^2\log (n_C C)\big)$ \citep{dvurechensky2018computational}, which remains quite costly.

\paragraph{Variants of OTDD.}

This prohibitive computational cost motivated the introduction of variants of OTDD. For instance, \citet{liu2025wasserstein} proposed to embed the labels in $\R^d$ using Multidimensional Scaling methods, and to use Linear Optimal Transport \citep{wang2013linear}, which allows computing only $2C$ OT problems with $n_C$ samples. \citet{hua2023dynamic} proposed to use dimension reduction on the labels to do the Gaussian approximation in a lower dimensional space, and to compare the datasets with a Maximum Mean Discrepancy (MMD), while \citet{bonet2025flowing} proposed to represent datasets on $\cPP{\cX}$ and to compare them with a suitable MMD on this space. Note that from this point of view, any distance introduced in Appendix \ref{appendix:bg_wow} could be used to compare datasets.

\paragraph{Slicing OTDD.}

\citet{nguyen2025lightspeed} recently proposed a Sliced-Wasserstein distance on the space $\cPProd{\cX}$ to compare labeled datasets. This requires to construct a projection from $\cX\times\cP_2(\cX)$ to $\R$ to be able to project the distribution of pairs $(x_i,\varphi(y_i))$ onto a distribution in $\cP_2(\R)$. Their construction is based on a projection of the form, for any $(x,y)\in \cX\times \cY$,
\begin{equation}
    P^{\alpha,\theta,\lambda}(x,y) = \alpha_1 P^\theta(x) + \sum_{i=1}^k \alpha_{i+1} \cM^{\lambda_i}\big(P^\theta_\#\varphi(y)\big),
\end{equation}
with $\alpha\in S^{k}$, $P^\theta:\cX\to \R$ and $\cM^\lambda:\cP_2(\R)\to \R$ the moment transform projection, defined for $\lambda\in \mathbb{N}$ and $\mu\in\cP_\lambda(\R)$ as
\begin{equation}
    \cM^{\lambda}(\mu) = \int \frac{x^\lambda}{\lambda!}\ \mathrm{d}\mu(x).
\end{equation}
The random linear combination corresponds to the Hierarchical Hybrid projection \citep{nguyen2024hierarchical}, and allows to combine projection on different space in order to define a projection on a product space. In practice, the $\lambda$ are sampled using a zero-truncated Poisson distribution, but this projection can be numerically unstable when $\lambda$ is too big because of the $\lambda!$.

\section{SLICING GAUSSIAN MIXTURES} \label{appendix:sliced_gm}

\subsection{Background on the Wasserstein over Bures-Wasserstein Space}

Gaussian mixtures can be represented as discrete probability distributions on the space of Gaussian distributions \citep{chen2018optimal, delon2020wasserstein}, \emph{i.e.} as discrete distributions $\bP\in\cPBW$ of the form $\bP=\sum_{k=1}^K \alpha_k \delta_{\mu_k}$ with $\mu_k=\cN(m_k,\Sigma_k)$, $m_k\in \R^d$, $\Sigma_k\in S_d^{++}(\R)$. 
Thus, a natural distance to compare Gaussian mixtures is given by the OT distance with $\bw^2$ as groundcost:
\begin{equation}
    \W_{\bw}^2(\bP, \bQ) = \inf_{\Gamma\in\Pi(\bP,\bQ)}\ \int \bw^2(\mu,\nu)\ \mathrm{d}\Gamma(\mu,\nu).
\end{equation}

This defines a distance on $\cPBW$. As $\bw(\R^d)$ is a Riemannian manifold, $\cPBW$ has a Riemannian structure, and we can define notions such as geodesics or gradients.

\paragraph{Computational Properties.}

To compute it between two discrete Gaussian mixtures $\bP=\frac1K\sum_{k=1}^K \alpha_k\delta_{\mu_k}$ and $\bQ=\sum_{k=1}^K \beta_k\delta_{\nu_k}$ with $\mu_k=\cN(m^\mu_k,\Sigma^\mu_k)$ and $\nu_k=\cN(m^\nu_k,\Sigma^\nu_k)$, it is required to compute $\cO(K^2)$ BW distances, which has a complexity of $\cO(K^2 d^3)$.

\paragraph{Variants.}

To alleviate this computational burden, several methods were proposed. First, \citet{nguyen2024summarizing} proposed to compare Gaussian mixtures seeing them as distributions over the product space $\R^d\times S_d^{++}(\R)$, and endowing $S_d^{++}(\R)$ with the Log-Euclidean metric as in \citep{bonet2023sliced,bonet2025sliced}. However, it is not specifically designed for Gaussian mixtures, and thus they also proposed to use a doubly SW distance.

Let $\bP\in\cPBW$, $\theta\in S^{d-1}$ and $\varphi^\theta:\cP_2(\R^d)\to\cP_2(\R)$ defined as $\varphi^\theta(\mu)=P^\theta_\#\mu$. For any $\mu=\cN(m_\mu,\Sigma_\mu)\sim\bP$, $\varphi^\theta(\mu)=\cN(\langle m_\mu,\theta\rangle, \theta^T\Sigma_\mu \theta)$. Thus, $\bP^\theta=\varphi^\theta_\#\bP \in \cP_2\big(\bw(\R)\big)$. Moreover, the space of 1D Gaussian can be identified as a product space over the means and standard deviations $\R\times\R_+^*$. For $\mu_\theta=\cN(m_\theta,\Sigma_\theta)\in\bw(\R)$, let $\Xi:\bw(\R)\to \R^2$ such that $\Xi(\mu_\theta)=(m_\theta, \sigma_\theta)$. Then $\Xi_\#\bP^\theta\in\cP_2(\R^2)$.
\citet{piening2025slicing,nguyen2024summarizing} proposed to define the Doubly Mixture Sliced-Wasserstein distance (DMSW) between $\bP,\bQ\in\cPBW$ as
\begin{equation}
    \mathrm{DMSW}^2(\bP,\bQ) = \int_{S^{d-1}} \sw_2^2\left(\Xi_\#\varphi^\theta_\#\bP,\Xi_\#\varphi^\theta_\#\bQ\right)\ \mathrm{d}\lambda(\theta),
\end{equation}
where the inner SW is between the distributions of the mean and covariances in $\R^2$.

\subsection{Slicing Gaussian Mixtures with Busemann}

As the Busemann function allows to project any probability distribution onto $\R$, it is natural to use it as a projection to define Sliced-Wasserstein distances for the purpose of comparing mixtures, and in particular mixtures of Gaussian. We discuss here how we can define Sliced-Wasserstein distances based on the closed-forms of the Busemann function in 1D \eqref{eq:busemann_1d} and between Gaussian distributions \eqref{eq:busemann_gaussians}. Note that both constructions are very close to the one presented in \Cref{sec:sliced_busemann}, but specialized to mixtures of Gaussian. We refer to Appendix \ref{appendix:xps_gmms} for numerical experiments.

\paragraph{Busemann on Gaussian.} Leveraging the closed-form of the Busemann function between Gaussians \eqref{eq:busemann_gaussians}, we can project any atom of a Gaussian mixture on $\R$. To define a sliced distance, we only need to construct geodesic rays on which the Busemann function is well defined. We choose to fix $\eta_0=\cN(0,I_d)$, and sample $\eta_1=\cN(m_1,\Sigma_1)$ such that $m_1\in S^{d-1}$, $\Sigma_1\in S_d^{++}(\R)$ with $\Sigma_1^{\frac12}\succeq I_d$ and $\W_2^2(\eta_0, \eta_1)=1$. 

To enforce $\Sigma_1^{\frac12}\succeq I_d$, we remark that it is equivalent to $\log_{I_d}(\Sigma_1) = \Sigma_1^{\frac12} - I_d \succeq 0$, where $\log_{I_d}$ is the logarithm map in $S_d^{++}(\R)$ (see Appendix \ref{app:bg_wasserstein_space} for the definitions of exponential and logarithm map on the Bures-Wasserstein space). Thus we sample $S=\Delta\mathrm{diag}(|\theta|)\Delta^T\in S_d^{++}(\R)$ with $\Delta\in O_d(\R)$ an orthogonal matrix, $\theta \in S^{d-1}$ and define $\Sigma_1 := \exp_{I_d}(S) = (I_d + S)^2$.
To enforce $\W_2^2(\eta_0,\eta_1)=1$, we first observe that
\begin{equation}
    \W_2^2(\eta_0,\eta_1) = \|\log_{\eta_0}(\eta_1)\|_{L^2(\eta_0)}^2 = \|m_1-m_0\|_2^2 + \|\log_{\Sigma_0}(\Sigma_1)\|_{\Sigma_0}^2 = \|m_1\|_2^2 + \|S\|_{\Sigma_0}^2,
\end{equation}
where $\|S\|_{\Sigma_0}^2 = \tr(S\Sigma_0 S)$ \citep{takatsu2011wasserstein}. Therefore, we simply normalize the vectors in the tangent space to obtain $\W_2^2(\eta_0,\eta_1)=1$, \emph{i.e.}
\begin{equation}
    \begin{cases}
        m_1 = \frac{m_1}{\sqrt{\|m_1\|_2^2 + \|S\|_{\Sigma_0}^2}} \\
        S = \frac{S}{\sqrt{\|m_1\|_2^2 + \|S\|_{\Sigma_0}^2}}.
    \end{cases}
\end{equation}

Defining $\lambda = \cU(S^{d-1} \times O_d(\R) \times S^{d-1}), \ \vartheta=(m_1,\Delta,\theta)$ and $\eta^\vartheta$ the resulting geodesic ray starting from $\eta_0$ and passing through $\eta_1$, we define for $\bP,\bQ$ Gaussian mixtures, the Busemann Gaussian Mixture Sliced-Wasserstein distance ($\smwbg$) as
\begin{equation}
    \smwbg^2(\bP,\bQ) = \int \W_2^2(B^{\eta^{\vartheta}}_\#\bP, B^{\eta^{\vartheta}}_\#\bQ)\ \mathrm{d}\lambda(\vartheta).
\end{equation}

\paragraph{Busemann on 1D Gaussian.}

$\smwbg$ requires to compute the Busemann function between Gaussian distributions, which might be computationally heavy in high dimension. Thus, we also propose a second distance, by first projecting the Gaussian in 1D, and then leveraging the closed-form of the Busemann function for 1D Gaussians \eqref{eq:1d_busemann_gaussian}. More precisely, we consider $\bP=\sum_{k=1}^K \alpha_k \delta_{\mu^k}$ with $\mu^k=\cN(m_k,\Sigma_k)$. Denoting $\varphi^\theta(\mu)=P^\theta_\#\mu$ where $P^\theta=\langle \theta,\cdot\rangle$, we have $\bP^\theta:=\varphi^\theta_\#\bP = \sum_{k=1}^K \alpha_k\delta_{P^\theta_\#\mu_k}$ with $P^\theta_\#\mu_k = \cN(\langle m_k, \theta\rangle, \theta^T \Sigma_k \theta)$ a one dimensional Gaussian distribution. 

For the choice of the geodesic ray, we set $\eta_0=\cN(0,1)$ and $\eta_1=\cN(m_1,\sigma_1^2)$, where we sample $m_1\in [-1,1]$ uniformly, and fix $\sigma_1=1 + \sqrt{1-m_1^2}$ to enforce $\sigma_1\ge \sigma_0$ and $\W_2^2(\eta_0,\eta_1)=1$. Defining $\lambda = \cU(S^{d-1} \times [-1,1])$, we define the Busemann 1D Gaussian Mixture Sliced-Wasserstein distance ($\smwbdg$) as
\begin{equation}
    \smwbdg^2(\bP,\bQ) = \int \W_2^2(B^{\eta^{m_1}}_\#\varphi^\theta_\#\bP, B^{\eta^{m_1}}_\#\varphi^\theta_\#\bQ)\ \mathrm{d}\lambda(\theta, m_1).
\end{equation}

We notice that this distance resembles $\mathrm{DMSW}$. Indeed, for $\mathrm{DMSW}$, the projection of $\mu=\cN(m,\Sigma)\sim \bP$ on $\R$ is given, for $\theta_d\in S^{d-1},\theta_2\in S^1$, by 
\begin{equation}
    Q^{\theta_d,\theta_1}(\mu) = \left\langle \theta_2, \begin{pmatrix} \langle m,\theta_d\rangle \\ \theta_d^T \Sigma\theta_d \end{pmatrix} \right\rangle,
\end{equation}
and $\theta_2,\theta_d$ are sampled uniformly on the sphere. On the other hand, for $\smwbdg$, the projection is, for $\eta$ a geodesic ray on $\bw(\R)$ and $\theta\in S^{d-1}$,
\begin{equation}
    \begin{aligned}
        Q^{\eta,\theta}(\mu) = B^\eta(P^\theta_\#\mu) &= B^\theta\big(\cN(\langle m,\theta\rangle, \theta^T \Sigma\theta)\big) \\
        &= -\left\langle \begin{pmatrix} m_1 \\ \sigma_1 - 1 \end{pmatrix}, \begin{pmatrix} \langle m,\theta\rangle \\ \theta^T \Sigma\theta - 1 \end{pmatrix} \right\rangle.
    \end{aligned}
\end{equation}
Therefore $\mathrm{DMSW}$ and $\smwbdg$ are very similar. The main difference is that the directions in $\R^2$ for $\smwbdg$ are centered around $(0,1)$.

\subsection{Properties of our Proposed Distances}

The proofs of this section can be found in Appendix \ref{appendix:proofs_appendix}.

\paragraph{Theoretical Properties.}

We first have that $\smwbg$ is a pseudo distance.
\begin{proposition}
    $\smwbg$ is a pseudo-distance on $\bw(\R^d)$.
\end{proposition}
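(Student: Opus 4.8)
The plan is to treat $\smwbg$ as an $L^2(\lambda)$-norm of a pointwise-in-$\vartheta$ family of one-dimensional Wasserstein distances, and to follow the standard template for sliced pseudo-metrics. First I would check well-posedness. For each $\vartheta=(m_1,\Delta,\theta)$, the sampling scheme enforces $\Sigma_1^{\frac12}\succeq I_d$, so by \Cref{corr:gaussian_geod} the curve $\eta^\vartheta$ is a genuine geodesic ray and the Busemann function $B^{\eta^\vartheta}$ is well defined on $\bw(\R^d)$ with the closed form \eqref{eq:busemann_gaussians}. Reading off that closed form, $B^{\eta^\vartheta}(\cN(m,\Sigma))$ grows at most linearly in $\|m\|_2$ and in $(\tr\Sigma)^{\frac12}$ (the matrix-square-root trace is controlled by $\sqrt{\tr\Sigma}$ via Cauchy--Schwarz); since $\bP\in\cPBW$ has finite second moment, equivalently $\int (\|m\|_2^2 + \tr\Sigma)\,\mathrm{d}\bP<\infty$, the pushforward $B^{\eta^\vartheta}_\#\bP$ lies in $\cP_2(\R)$, so each inner distance $\W_2(B^{\eta^\vartheta}_\#\bP,B^{\eta^\vartheta}_\#\bQ)$ is finite. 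Continuity of the closed form in $\vartheta$, of the pushforward in the weak topology, and of $\W_2$ then give measurability of the integrand, so the integral defining $\smwbg^2$ is meaningful.

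Next, non-negativity, symmetry, and vanishing on the diagonal are immediate: $\W_2^2\ge 0$ together with $\lambda\in\cP(\Theta)$ gives $\smwbg\ge 0$; symmetry of $\W_2$ yields $\smwbg(\bP,\bQ)=\smwbg(\bQ,\bP)$; and $\W_2(B^{\eta^\vartheta}_\#\bP,B^{\eta^\vartheta}_\#\bP)=0$ for every $\vartheta$ gives $\smwbg(\bP,\bP)=0$.

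The only genuine step is the triangle inequality. For $\bP,\bQ,\mathbb{S}\in\cPBW$, set $g(\vartheta)=\W_2(B^{\eta^\vartheta}_\#\bP,B^{\eta^\vartheta}_\#\mathbb{S})$ and define the analogous quantities for the other two pairs. Since $\W_2$ is a distance on $\cP_2(\R)$, pointwise in $\vartheta$ one has $g(\vartheta)\le \W_2(B^{\eta^\vartheta}_\#\bP,B^{\eta^\vartheta}_\#\bQ) + \W_2(B^{\eta^\vartheta}_\#\bQ,B^{\eta^\vartheta}_\#\mathbb{S})$. Taking the $L^2(\lambda)$-norm, using its monotonicity and then Minkowski's inequality, gives $\smwbg(\bP,\mathbb{S})=\|g\|_{L^2(\lambda)}\le \smwbg(\bP,\bQ)+\smwbg(\bQ,\mathbb{S})$, which is the triangle inequality.

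Finally I would note that identity of indiscernibles is deliberately not claimed: the family $\{B^{\eta^\vartheta}\}_\vartheta$ need not separate measures in $\cPBW$, so $\smwbg(\bP,\bQ)=0$ need not force $\bP=\bQ$, which is precisely why $\smwbg$ is only a \emph{pseudo}-distance. The main (and only mild) obstacle is the growth and integrability bookkeeping guaranteeing that the Busemann pushforwards stay in $\cP_2(\R)$; once that is in place, the metric axioms reduce to the routine sliced-metric argument above.
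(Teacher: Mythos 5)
Your proof is correct and follows essentially the same route as the paper, which gives no separate argument for this proposition and, in the analogous proof of \Cref{prop:bdgmsw_distance}, simply declares symmetry, positivity and the triangle inequality ``clear by classical arguments'' --- precisely the pointwise $\W_2$ triangle inequality combined with Minkowski's inequality in $L^2(\lambda)$ that you spell out. Your additional bookkeeping (the sampled rays satisfy $\Sigma_1^{\frac12}\succeq I_d$ so that $B^{\eta^\vartheta}$ is well defined by \Cref{corr:gaussian_geod}, and the linear growth of the closed form \eqref{eq:busemann_gaussians} keeps the pushforwards in $\cP_2(\R)$ and the integral finite) fills in details the paper leaves implicit, and your decision not to claim definiteness matches the paper's own remark that this would require injectivity of the Busemann-induced Radon transform.
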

Showing that $\smwbg$ is a distance would require to show that the Busemann function on $\bw(\R^d)$ allows defining an injective Radon transform, which is outside the scope of this work.

Concerning $\smwbdg$, we can exploit that the 1D Gaussian space is actually Euclidean, and thus we can show that it is a well-defined distance, which is also bounded by $\W_{\bw}$.

\begin{proposition} \label{prop:bound_bdgmsw}
    For any $\bP,\bQ\in \cP_2\big(\bw(\R^d)\big)$,
    \begin{equation}
        \smwbdg^2(\bP,\bQ) \le \W_{\bw}^2(\bP,\bQ).
    \end{equation}
\end{proposition}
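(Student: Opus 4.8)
The plan is to follow the standard route for bounding a sliced distance by its ground distance: fix an optimal coupling $\Gamma^\star\in\Pi(\bP,\bQ)$ realizing $\W_{\bw}^2(\bP,\bQ)$, and observe that for every slice $(\theta,m_1)\in S^{d-1}\times[-1,1]$ the composite projection $\Phi^{\theta,m_1}(\mu):=B^{\eta^{m_1}}\big(\varphi^\theta(\mu)\big)$ pushes $\Gamma^\star$ forward to an admissible coupling of the two projected one-dimensional measures, since $(\Phi^{\theta,m_1}\circ\pi^1,\Phi^{\theta,m_1}\circ\pi^2)_\#\Gamma^\star$ has marginals $B^{\eta^{m_1}}_\#\varphi^\theta_\#\bP$ and $B^{\eta^{m_1}}_\#\varphi^\theta_\#\bQ$. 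Hence $\W_2^2\big(B^{\eta^{m_1}}_\#\varphi^\theta_\#\bP, B^{\eta^{m_1}}_\#\varphi^\theta_\#\bQ\big)\le \int |\Phi^{\theta,m_1}(\mu)-\Phi^{\theta,m_1}(\nu)|^2\,\mathrm{d}\Gamma^\star(\mu,\nu)$. Integrating over $\lambda$ and applying Tonelli (the integrand is nonnegative), the proof reduces to the pointwise inequality
\[
\int |\Phi^{\theta,m_1}(\mu)-\Phi^{\theta,m_1}(\nu)|^2\,\mathrm{d}\lambda(\theta,m_1)\ \le\ \bw^2(\mu,\nu)
\]
for all $\mu=\cN(m_\mu,\Sigma_\mu),\ \nu=\cN(m_\nu,\Sigma_\nu)$, after which integrating against $\Gamma^\star$ and using optimality of $\Gamma^\star$ yields $\smwbdg^2(\bP,\bQ)\le\int\bw^2\,\mathrm{d}\Gamma^\star=\W_{\bw}^2(\bP,\bQ)$.

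To establish this pointwise bound I would invoke the closed form of \Cref{corr:busemann_1d_gaussian} with $\eta_0=\cN(0,1)$ and $\sigma_1=1+\sqrt{1-m_1^2}$ (so $\sigma_1-\sigma_0=\sqrt{1-m_1^2}$). Writing $\Delta a(\theta)=\langle m_\mu-m_\nu,\theta\rangle$ and $\Delta s(\theta)=\sqrt{\theta^T\Sigma_\mu\theta}-\sqrt{\theta^T\Sigma_\nu\theta}$, one gets $\Phi^{\theta,m_1}(\mu)-\Phi^{\theta,m_1}(\nu)=-m_1\Delta a(\theta)-\sqrt{1-m_1^2}\,\Delta s(\theta)$. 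Expanding the square and integrating over $m_1$ under the uniform law on $[-1,1]$, the cross term vanishes by oddness, while $\int_{-1}^1 m_1^2\,\tfrac12\,\mathrm{d}m_1=\tfrac13$ and $\int_{-1}^1(1-m_1^2)\,\tfrac12\,\mathrm{d}m_1=\tfrac23$, leaving $\tfrac13\Delta a(\theta)^2+\tfrac23\Delta s(\theta)^2$.

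It remains to integrate over $\theta\sim\cU(S^{d-1})$. For the mean term, $\int_{S^{d-1}}\langle m_\mu-m_\nu,\theta\rangle^2\,\mathrm{d}\cU(\theta)=\tfrac1d\|m_\mu-m_\nu\|_2^2$. For the standard-deviation term, I would recognize $\int_{S^{d-1}}\Delta s(\theta)^2\,\mathrm{d}\cU(\theta)$ as the sliced Wasserstein distance between the centered Gaussians $\cN(0,\Sigma_\mu)$ and $\cN(0,\Sigma_\nu)$, and bound it via the elementary inequality $\sw_2^2\le\tfrac1d\W_2^2$ — obtained by pushing the optimal Wasserstein coupling through $P^\theta$ and using the same identity $\int_{S^{d-1}}\langle v,\theta\rangle^2\,\mathrm{d}\cU(\theta)=\|v\|_2^2/d$ — which gives $\int_{S^{d-1}}\Delta s(\theta)^2\,\mathrm{d}\cU(\theta)\le\tfrac1d\cB^2(\Sigma_\mu,\Sigma_\nu)$. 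Combining, the inner integral is at most $\tfrac{1}{3d}\|m_\mu-m_\nu\|_2^2+\tfrac{2}{3d}\cB^2(\Sigma_\mu,\Sigma_\nu)$, which is $\le\|m_\mu-m_\nu\|_2^2+\cB^2(\Sigma_\mu,\Sigma_\nu)=\bw^2(\mu,\nu)$ since $\tfrac{1}{3d},\tfrac{2}{3d}\le1$ for $d\ge1$.

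The main obstacle is the standard-deviation term: one must control $\int_{S^{d-1}}\big(\sqrt{\theta^T\Sigma_\mu\theta}-\sqrt{\theta^T\Sigma_\nu\theta}\big)^2\,\mathrm{d}\cU(\theta)$ by the Bures distance $\cB^2(\Sigma_\mu,\Sigma_\nu)$. The clean route is the reduction to centered Gaussians followed by the sliced-versus-full Wasserstein inequality, which avoids any direct manipulation of the matrix square roots defining $\cB$; the only quantitative slack needed is the dimensional factor $1/d$, which comfortably absorbs the weights $\tfrac13,\tfrac23$ produced by the $m_1$-average.
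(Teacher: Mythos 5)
Your proof is correct, and it diverges from the paper's at the key estimation step. Both proofs begin identically: push an optimal coupling $\Gamma^\star\in\Pi_o(\bP,\bQ)$ through the composite projection to get an admissible coupling of the projected 1D measures, reducing everything to bounding the integrated squared difference of Busemann projections. From there the paper argues \emph{pointwise in each slice}: writing $B^{\eta}(\mu_\theta)-B^{\eta}(\nu_\theta)$ as an inner product in $\R^2$ between $(m_1-m_0,\sigma_1-\sigma_0)$ and $(m_{\mu_\theta}-m_{\nu_\theta},\sigma_{\mu_\theta}-\sigma_{\nu_\theta})$, it applies Cauchy--Schwarz, uses the unit-speed normalization $\W_2^2(\eta_0,\eta_1)=1$ to absorb the first factor, identifies the second factor with $\W_2^2(P^\theta_\#\mu,P^\theta_\#\nu)$, and finishes with the $1$-Lipschitzness of $P^\theta$. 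You instead \emph{average exactly over the slicing law}: with $\eta_0=\cN(0,1)$ and $\sigma_1-\sigma_0=\sqrt{1-m_1^2}$, the cross term vanishes by oddness of $m_1\mapsto m_1\sqrt{1-m_1^2}$ on $[-1,1]$, the $m_1$-moments give weights $\tfrac13$ and $\tfrac23$, the spherical identity handles the mean term, and the standard inequality $\sw_2^2\le\tfrac1d\W_2^2$ (correctly recognized as applicable because $\int_{S^{d-1}}\Delta s(\theta)^2\,\mathrm{d}\cU(\theta)$ is exactly the sliced Wasserstein distance between the centered Gaussians) handles the standard-deviation term. What each approach buys: the paper's argument is sampling-agnostic (any law on unit-speed rays works, and it even yields the bound slice by slice), whereas yours is tied to the specific product law $\cU(S^{d-1}\times[-1,1])$ and parametrization of $\sigma_1$ — but in exchange it delivers a strictly stronger conclusion, namely $\smwbdg^2(\bP,\bQ)\le \tfrac{2}{3d}\,\W_{\bw}^2(\bP,\bQ)$, exhibiting a dimensional decay that the paper's Cauchy--Schwarz route cannot see. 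All individual steps check out (Tonelli for the interchange is justified by nonnegativity, and the $m_1$-moment computations and the identity $\int_{S^{d-1}}\langle v,\theta\rangle^2\,\mathrm{d}\cU(\theta)=\|v\|_2^2/d$ are correct), so this is a valid and in fact quantitatively sharper proof.
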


\begin{proposition} \label{prop:bdgmsw_distance}
    $\smwbdg$ is a distance on the space of discrete Gaussian mixtures $\bigcup_{K>0} \mathrm{GMM}_d(K)$, with $\mathrm{GMM}_d(K)$ the set of Gaussian mixtures on $\R^d$ with $K$ components.
\end{proposition}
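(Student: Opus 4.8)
The plan is to verify that $\smwbdg$ satisfies all axioms of a distance, noting that non-negativity, symmetry, and the triangle inequality are inherited from the sliced structure exactly as for $\smwbg$ (and for sliced distances in general): writing $\smwbdg(\bP,\bQ)$ as the $L^2(\lambda)$-norm of the map $(\theta,m_1)\mapsto \W_2\big(B^{\eta^{m_1}}_\#\varphi^\theta_\#\bP, B^{\eta^{m_1}}_\#\varphi^\theta_\#\bQ\big)$, these follow from the slice-wise triangle inequality of $\W_2$ combined with Minkowski's inequality. The only substantial point is the identity of indiscernibles, i.e. that $\smwbdg(\bP,\bQ)=0$ forces $\bP=\bQ$; the converse is immediate. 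So I would assume $\smwbdg(\bP,\bQ)=0$ and recover $\bP=\bQ$ in two stages: first fix the feature direction $\theta$ and recover the planar projection of the mixture, then vary $\theta$ to recover the full mixture.

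For the first stage, $\smwbdg(\bP,\bQ)=0$ yields, for $\lambda$-a.e. $(\theta,m_1)$, the equality $B^{\eta^{m_1}}_\#\varphi^\theta_\#\bP=B^{\eta^{m_1}}_\#\varphi^\theta_\#\bQ$, hence by Fubini for a.e. $\theta$ and a.e. $m_1\in[-1,1]$. Introduce the planar map $H^\theta(\cN(m,\Sigma))=\big(\langle\theta,m\rangle,\sqrt{\theta^T\Sigma\theta}\,\big)\in\R^2$, so that $\varphi^\theta_\#\bP$ is encoded by the finitely supported measure $H^\theta_\#\bP$. Using the closed form \eqref{eq:1d_busemann_gaussian}, the composite projection is the affine map $v\mapsto -\langle (m_1,\sqrt{1-m_1^2}),v\rangle + \sqrt{1-m_1^2}$ applied to $H^\theta_\#\bP$; as $m_1$ ranges over $[-1,1]$ the direction $(m_1,\sqrt{1-m_1^2})$ sweeps the closed upper unit semicircle. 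Since the one-dimensional projection onto $u$ determines the characteristic function of $H^\theta_\#\bP$ along $\R u$, and $\{tu : t\in\R,\ u\in\text{semicircle}\}$ exhausts $\R^2$, the projections over all $m_1$ determine $\widehat{H^\theta_\#\bP}$ everywhere (the common constant shift and sign for $\bP$ and $\bQ$ being harmless). I therefore conclude $H^\theta_\#\bP=H^\theta_\#\bQ$ for a.e. $\theta\in S^{d-1}$.

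The main obstacle is the second stage: passing from equality of the planar projections for a.e. $\theta$ to equality of the mixtures. I would argue by a generic-separation lemma. Identifying $\bw(\R^d)$ with $\R^d\times S_d^{++}(\R)$, write the signed measure $\bP-\bQ$, after merging components with identical mean and covariance, as $\sum_p c_p\,\delta_{(m_p,\Sigma_p)}$ over finitely many \emph{distinct} points $(m_p,\Sigma_p)$. For a pair $p\neq p'$, the images $H^\theta(m_p,\Sigma_p)$ and $H^\theta(m_{p'},\Sigma_{p'})$ coincide only if $\langle\theta,m_p-m_{p'}\rangle=0$ and $\theta^T(\Sigma_p-\Sigma_{p'})\theta=0$; since $(m_p,\Sigma_p)\neq(m_{p'},\Sigma_{p'})$, at least one of these is a nontrivial linear or quadratic condition on $\theta$, whose solution set meets $S^{d-1}$ in surface measure zero (a nonzero symmetric matrix gives a nonzero quadratic form, hence a proper subvariety of the sphere). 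Taking the union over the finitely many pairs yields a measure-zero bad set, whose complement still intersects the full-measure set on which $H^\theta_\#\bP=H^\theta_\#\bQ$. Choosing any $\theta^\ast$ in this intersection, all atoms $H^{\theta^\ast}(m_p,\Sigma_p)$ are distinct, so $H^{\theta^\ast}_\#(\bP-\bQ)=\sum_p c_p\,\delta_{H^{\theta^\ast}(m_p,\Sigma_p)}=0$ forces every $c_p=0$, giving $\bP=\bQ$. The delicate points to get right are the measure-zero nature of the quadric section and the bookkeeping ensuring the separation and agreement sets overlap.
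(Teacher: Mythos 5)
Your proof is correct, but the definiteness step takes a genuinely different route from the paper's. The paper first rewrites the inner integral over $m_1$ as a sliced-Wasserstein distance on the plane of (projected) means and standard deviations: $\smwbdg^2(\bP,\bQ)=\int_{S^{d-1}}\sw_2^2(\Xi_\#\bP^\theta,\Xi_\#\bQ^\theta)\,\mathrm{d}\lambda(\theta)$ with $\Xi(\cN(m,\sigma^2))=(m,\sigma)$, and invokes the fact that this pullback-Euclidean SW is a distance \citep[Proposition 26]{bonet2025sliced} to get $\bP^\theta=\bQ^\theta$ for a.e.\ $\theta$ --- exactly your first stage, which you instead prove by hand via the Cram\'er--Wold/characteristic-function argument along the semicircle of directions $(m_1,\sqrt{1-m_1^2})$ (your observation that the affine shift and sign are harmless, and that a.e.\ $m_1$ gives a dense set of lines on which the analytic characteristic function of a finitely supported planar measure is pinned down, is exactly what is needed). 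The two proofs then diverge at the second stage: the paper passes from $\bP^\theta=\bQ^\theta$ to equality of the projected Gaussian mixture \emph{distributions} on $\R$, integrates over $\theta$ to conclude $\sw_2$ between the mixtures on $\R^d$ vanishes, and uses that SW is a distance on $\cP_2(\R^d)$ --- a step that tacitly also requires identifiability of finite Gaussian mixtures (equality of the mixture laws must be upgraded to equality of the mixing measures $\bP,\bQ$ on $\bw(\R^d)$, e.g.\ via Yakowitz--Spragins), which the paper leaves implicit. Your generic-separation lemma replaces this entirely: since each pair of distinct atoms $(m_p,\Sigma_p)\neq(m_{p'},\Sigma_{p'})$ collides under $H^\theta$ only on the zero set of a nontrivial linear or quadratic form in $\theta$ (measure zero on $S^{d-1}$, since $\theta^T A\theta\equiv 0$ on the sphere forces $A=0$ by polarization), a single generic $\theta^\ast$ in the agreement set separates all atoms and kills every coefficient $c_p$ of $\bP-\bQ$. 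What your route buys is a self-contained, elementary argument that avoids citing both external SW-definiteness results and mixture identifiability, and that generalizes to any location--scale family injectively parametrized by $(m,\Sigma)$; what the paper's route buys is brevity given the cited machinery. Both arguments are sound; yours is arguably more complete on the final identification step.
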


\paragraph{Computational Properties.}

On a computational point of view, we approximate $\smwbg$ by a Monte-Carlo approximation with $L$ projections. Given discrete mixtures $\bP_k=\sum_{k=1}^K \alpha_k \delta_{\mu_k}$ and $\bQ_k=\sum_{k=1}^K \beta_k \delta_{\nu_k}$, $\smwbg$ requires to project $K$ Gaussian with the Busemann function, which has a complexity of $\cO(LKd^3)$. Then, it is also required to solve a 1D OT transport problem with $K$ samples. Thus, the full complexity is $\cO\big(LK(d^3+\log K)\big)$ which can be costly in high dimension. On the other hand, $\smwbdg$ can be approximated with a complexity of $\cO\big(LK(\log K + d)\big)$, which is much cheaper that $\smwbg$.

\section{ADDITIONAL RESULTS} \label{app:add_result}

In the case of arbitrary 1D Gaussian distributions, in addition to \Cref{corr:1d_gaussian_geod}, we can obtain the largest interval over which the geodesic can be extended, even in the direction $t<0$. The proof can be found in Appendix \ref{appendix:proofs_appendix}.
\begin{proposition} \label{prop:extension_1dgaussian_ray}
    Let $\mu_0=\mathcal{N}(m_0,\sigma_0^2)$ and $\mu_1=\mathcal{N}(m_1, \sigma_1^2)$ two Gaussian distributions such that $\sigma_1 > \sigma_0$. %
    Then, the geodesic $t\mapsto \mu_t$ is well defined on $]-\frac{\sigma_0}{\sigma_1-\sigma_0}, +\infty[$. By symmetry, if $\sigma_1 < \sigma_0$, the geodesic is well defined on $]-\infty, \frac{\sigma_0}{\sigma_0-\sigma_1}[$.
\end{proposition}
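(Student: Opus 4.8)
The plan is to reduce everything to the quantile representation of one-dimensional Gaussians, where the geodesic becomes an affine curve in $L^2([0,1])$ and the extension question turns into a simple monotonicity condition. First I would recall that the quantile function of $\cN(m,\sigma^2)$ is $F^{-1}(u) = m + \sigma\,\phi^{-1}(u)$, where $\phi$ is the standard normal CDF and $\phi^{-1}$ is strictly increasing on $(0,1)$. Plugging $F_0^{-1} = m_0 + \sigma_0\phi^{-1}$ and $F_1^{-1} = m_1 + \sigma_1\phi^{-1}$ into the geodesic formula \eqref{eq:geodesics_1d}, and using the same expression to \emph{define} the extension for $t\notin[0,1]$, I obtain
\[
F_t^{-1} = m_t + \sigma_t\,\phi^{-1}, \qquad m_t = (1-t)m_0 + t m_1, \quad \sigma_t = (1-t)\sigma_0 + t\sigma_1.
\]

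Next, following the observation recalled just before \Cref{prop:1d_geodesic_rays} that a curve of quantiles extends the geodesic exactly as long as $F_t^{-1}$ stays non-decreasing, I would note that since $\phi^{-1}$ is strictly increasing, $F_t^{-1}$ is non-decreasing if and only if $\sigma_t \ge 0$; when $\sigma_t > 0$ the measure $\mu_t = \cN(m_t,\sigma_t^2)$ is a genuine Gaussian, while at $\sigma_t = 0$ it degenerates to the Dirac $\delta_{m_t}$. Solving $\sigma_t = \sigma_0 + t(\sigma_1-\sigma_0) > 0$ under the assumption $\sigma_1 > \sigma_0$ gives precisely $t > -\frac{\sigma_0}{\sigma_1-\sigma_0}$, which is the claimed interval, the left endpoint being excluded because there $\mu_t$ leaves the Gaussian family. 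The case $\sigma_1 < \sigma_0$ follows from the same computation (the slope $\sigma_1-\sigma_0$ now being negative, the inequality flips), yielding $t < \frac{\sigma_0}{\sigma_0-\sigma_1}$.

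Finally, I would confirm that the extended curve is genuinely a constant-speed geodesic, and not merely a valid family of measures. Using $\int_0^1 \phi^{-1}(u)\,\mathrm{d}u = 0$ and $\int_0^1 \phi^{-1}(u)^2\,\mathrm{d}u = 1$ (the mean and variance of $\cN(0,1)$), the $L^2([0,1])$ inner-product expansion of $F_s^{-1}-F_t^{-1} = (m_s-m_t) + (\sigma_s-\sigma_t)\phi^{-1}$ gives
\[
\W_2^2(\mu_s,\mu_t) = (m_s-m_t)^2 + (\sigma_s-\sigma_t)^2 = (s-t)^2\,\W_2^2(\mu_0,\mu_1),
\]
so the constant-speed property \eqref{eq:cst_speed_geod} holds throughout the interval. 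I do not expect a serious obstacle here: the core argument is just the linearity of the quantile map in $(m,\sigma)$ combined with the strict monotonicity of $\phi^{-1}$. The only delicate point is the bookkeeping at the boundary, namely the degeneration to a Dirac at $t = -\frac{\sigma_0}{\sigma_1-\sigma_0}$ and the resulting open versus closed endpoint, which is what forces the interval to be open when one insists on remaining in the (non-degenerate) Gaussian family.
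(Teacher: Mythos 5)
Your proof is correct, but it takes a genuinely different route from the paper's. The paper reduces extension to negative times to extending the \emph{reversed} geodesic from $\mu_1$ to $\mu_0$ on $[0,\alpha[$: it writes the Monge map $\Tilde{T}(x)=\frac{\sigma_0}{\sigma_1}(x-m_1)+m_0$ as the gradient of an explicit potential $h$ and invokes the criterion of \citep[Section 4]{natale2022geodesic} that such an extension exists if and only if $h$ is $\frac{\alpha-1}{\alpha}$-convex, \emph{i.e.} $\frac{\sigma_0}{\sigma_1}\ge\frac{\alpha-1}{\alpha}$; maximizing $\alpha$ gives $\alpha=\frac{\sigma_1}{\sigma_1-\sigma_0}$ and hence the left endpoint $1-\alpha=-\frac{\sigma_0}{\sigma_1-\sigma_0}$. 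You instead stay entirely in the isometric quantile embedding of $\cP_2(\R)$ into $L^2([0,1])$: the curve is affine in $(m_t,\sigma_t)$, admissibility of $F_t^{-1}=m_t+\sigma_t\phi^{-1}$ is exactly $\sigma_t\ge 0$ by strict monotonicity of $\phi^{-1}$, and the constant-speed identity
\begin{equation*}
    \W_2^2(\mu_s,\mu_t) = (m_s-m_t)^2 + (\sigma_s-\sigma_t)^2 = (s-t)^2\,\W_2^2(\mu_0,\mu_1)
\end{equation*}
follows from $\int_0^1\phi^{-1}(u)\,\mathrm{d}u=0$ and $\int_0^1\phi^{-1}(u)^2\,\mathrm{d}u=1$. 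Your route is more elementary and self-contained (no appeal to an external convexity criterion, and you verify the geodesic property rather than inheriting it), it treats both signs of $\sigma_1-\sigma_0$ in a single computation, and it yields sharpness essentially for free, since $\sigma_t<0$ makes $F_t^{-1}$ decreasing so that no affine-quantile extension exists beyond the endpoint. What the paper's approach buys in exchange is dimension-independence: the potential-convexity argument is the one-dimensional shadow of the machinery (\Cref{prop:geodesic_rays} and the $\frac{\alpha-1}{\alpha}$-convexity condition) that also handles Gaussians in $\R^d$, whereas the quantile picture is intrinsically one-dimensional. One minor remark on your endpoint bookkeeping: at $t=-\frac{\sigma_0}{\sigma_1-\sigma_0}$ the constant quantile of $\delta_{m_t}$ is still a valid (non-decreasing) quantile function, so the curve is in fact a constant-speed geodesic in $\cP_2(\R)$ on the \emph{closed} half-line as well; the openness of the interval in the statement reflects the restriction to non-degenerate Gaussians, which is exactly the convention you adopt and is consistent with the paper's remark that the geodesic ``reaches a Dirac'' at that time.
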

In the setting of Proposition \ref{prop:extension_1dgaussian_ray}, the geodesic is given by $\mu_t=\cN\big((1-t)m_0 + t m_1, ((1-t)\sigma_0 + t\sigma_1)^2\big)$. Thus, we observe that the geodesic property breaks at time $t=-\sigma_0/(\sigma_1-\sigma_0)$, for which $(1-t)\sigma_0+t\sigma_1=0$, \emph{i.e.} the geodesic reaches a Dirac. This time is given by \Cref{prop:extension_1dgaussian_ray} for $\sigma_1\neq\sigma_0$. Note that in the limit case $\sigma_0=\sigma_1$, the geodesic is defined on $\R$ and is a translation. %

We also notice that the curve $t\mapsto \cN\big((1-t)m_0 + t m_1, ((1-t)\sigma_0 + t\sigma_1)^2\big)$ could be extended on $\R$, by allowing $(1-t)\sigma_0+t\sigma_1\le 0$. This curve is geodesic on the right hand side and left hand side of $t_e = \mathrm{sign}(\sigma_1-\sigma_0)\frac{\sigma_0}{\sigma_1-\sigma_0}$. Thus, it is piecewise geodesic on both sides of the Dirac.

\paragraph{Projections on Geodesic Rays in the 1D Gaussian Case.}

Let $\mu_0=\cN(m_0,\sigma_0^2)$, $\mu_1=\cN(m_1,\sigma_1^2)$ with $\sigma_1>\sigma_0$ and such that $\W_2^2(\mu_0,\mu_1)=1$. As discussed in \Cref{section:bg_busemann}, the projection of $\nu=\cN(m,\sigma^2)$ on the geodesic given by $\mu_t = \cN\big((1-t)m_0 + t m_1, ((1-t)\sigma_0 + t\sigma_1)^2\big)$ is $P^\mu(\nu)= \mu_{-B^\mu(\nu)}$. However, if $-B^\mu(\nu)<0$, this projection might be out of the original geodesic ray passing through $\mu_0$ and $\mu_1$. In particular, by \Cref{prop:extension_1dgaussian_ray}, we know that this happens when $B^\mu(\nu)>\frac{\sigma_0}{\sigma_1-\sigma_0}$. 

Here we work with a unit-speed geodesic ray, that is $\W_2^2(\mu_0,\mu_1)=(m_1-m_0)^2 + (\sigma_1-\sigma_0)^2=1$ and thus $m_1-m_0,\sigma_1-\sigma_0 \in [-1,1]$. We have two limiting cases. The first one is $\sigma_0=\sigma_1$ for which the geodesic ray is actually a line and can be extended to $\mathbb{R}$, which we recover here as $-\frac{\sigma_0}{\sigma_1-\sigma_0}\xrightarrow[\sigma_1 \to \sigma_0^+]{}-\infty$. The second one is $\sigma_1 = 1+\sigma_0$ for which the ray can be extended to $[-\sigma_0, +\infty[$ and corresponds to a dilation. Moreover, in this case, since $\sigma_1=1+\sigma_0$ and $m_1=m_0$, we note that any 1D Gaussian will be projected on the geodesic since, for any $\nu=\cN(m,\sigma^2)$, 
\begin{equation}
    \begin{aligned}
        B^\mu(\nu) &= -(m-m_0)(m_1-m_0) - (\sigma-\sigma_0)(\sigma_1-\sigma_0) = -(\sigma-\sigma_0),
    \end{aligned}
\end{equation}
and thus the projection coordinate is $-B^\mu(\nu) = \sigma-\sigma_0 < -\sigma_0 \iff \sigma < 0$, which is not possible.

Likewise, for $\sigma_1<\sigma_0$, the geodesic can be extended towards $-\infty$ and in the case of $m_0=m_1$, the distributions are also necessarily well projected on the geodesic since $\sigma_1=\sigma_0-1$ and $B^\mu(\nu)=\sigma-\sigma_0$. Thus, $-B^\mu(\nu)=\sigma_0-\sigma > \sigma_0 \iff -\sigma>0$.

We illustrate these observations on \Cref{fig:illu_centered_gaussian}. We choose $\mu_0=\mathcal{N}(0,1)$ and $\mu_1=\mathcal{N}(0, \sigma_1^2)$ with $\sigma_1=\frac12$ or $\mu_1=\mathcal{N}(0,\sigma_1^2)$ with $\sigma_1=\frac32$. In the first case, this does not define a geodesic ray (even if the geodesic can be extended toward $-\infty$), but it does in the second case. We plot the projections of several Gaussian $\nu_i=\mathcal{N}(0,\sigma_i^2)$ (where the $\sigma_i$ are plotted in the line $t=0$). We see that every points are projected on the geodesic as expected by \Cref{prop:extension_1dgaussian_ray}.

On \Cref{fig:illu_non_centered_gaussian}, we choose $\mu_0=\mathcal{N}(0,1)$ and $\mu_1=\mathcal{N}(m_1, \sigma_1^2)$ with $m_1=-\sqrt{1-(\sigma_1-\sigma_0)^2}$. We observe that some points projected onto the extended part of the geodesic are less consistent with the geometry.

\begin{figure}[t]
    \centering

    \hspace*{\fill}
    \subfloat[$\sigma_0=1$, $\sigma_1=\frac12$]{\includegraphics[width=0.3\columnwidth]{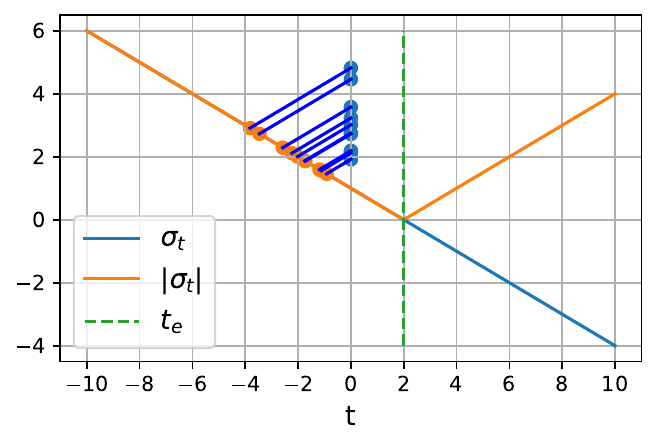}} \hfill
    \subfloat[$\sigma_0=1$, $\sigma_1=\frac32$]{\includegraphics[width=0.3\columnwidth]{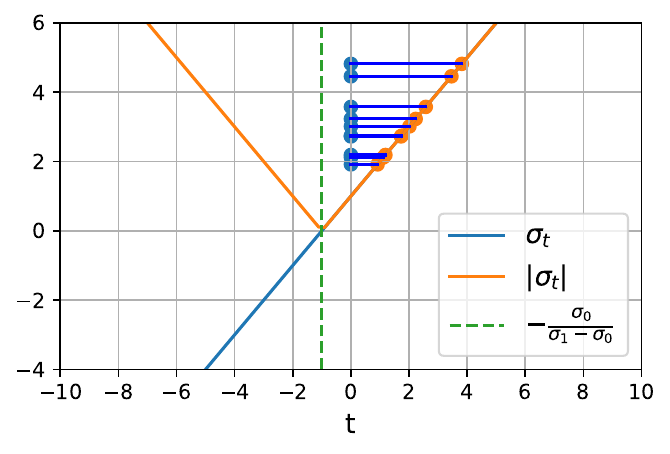}} \hfill
    \hspace*{\fill}
    \caption{Projections of centered 1D Gaussian $\mathcal{N}(0,\sigma_i^2)$ (blue points) on the geodesic $\mathcal{N}(0,\sigma_t)$ starting at $\sigma_0=1$ and passing through $\sigma_1=\frac12$ (\textbf{Left}) and $\sigma_1=\frac32$ (\textbf{Right}).}
    \label{fig:illu_centered_gaussian}
\end{figure}

\begin{figure}[t]
    \centering

    \hspace*{\fill}
    \subfloat[$\sigma_0=1$, $\sigma_1=\frac12$]{\includegraphics[width=0.3\columnwidth]{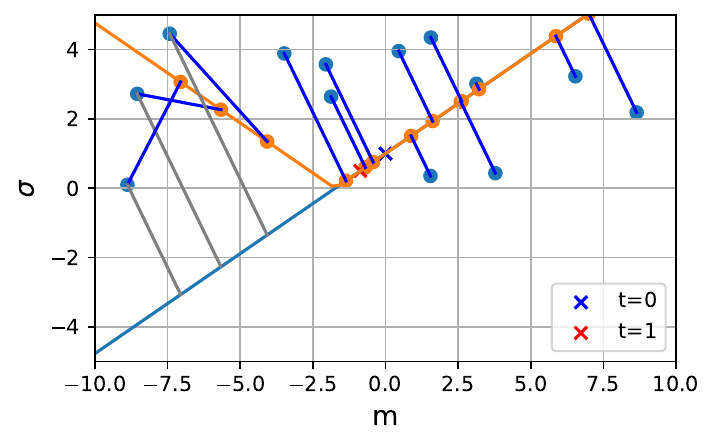}} \hfill
    \subfloat[$\sigma_0=1$, $\sigma_1=\frac32$]{\includegraphics[width=0.3\columnwidth]{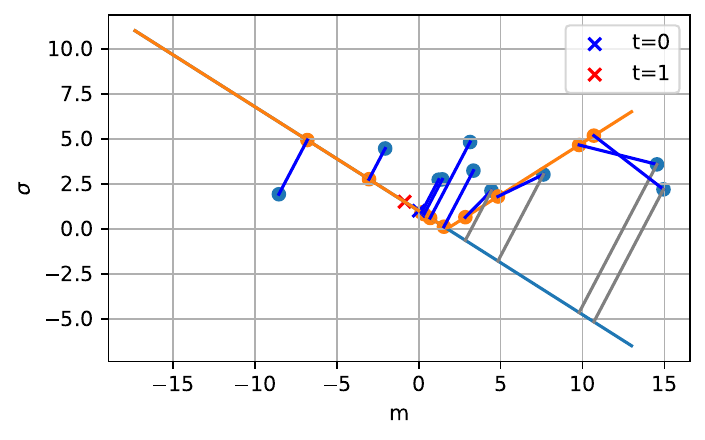}} \hfill
    \hspace*{\fill}
    \caption{Projections of 1D Gaussian $\mathcal{N}(m_i,\sigma_i^2)$ (blue points) on the geodesic $\mathcal{N}(m_t,\sigma_t)$ starting at $\sigma_0=1$, $m_0=0$ and passing through $\sigma_1=\frac12$ (\textbf{Left}) and $\sigma_1=\frac32$ (\textbf{Right}) with $m_1=-\sqrt{1-(\sigma_1-\sigma_0)^2}$.}
    \label{fig:illu_non_centered_gaussian}
\end{figure}

    We finally notice that when the first moments of $\mu_0$ and $\mu_1$ coincide, then the conditions to have geodesic rays are similar to conditions to have $\mu_0$ smaller than $\mu_1$ in the convex order, see \emph{e.g.} \citep[Theorem 4 and 6]{muller2001stochastic} for Gaussians or \citep{shu2020hopf} in 1D for $\mu_0\in\cPa$.

\section{PROOFS}

\subsection{Proofs of \Cref{section:geodesic_rays}}

\subsubsection{Proof of \Cref{prop:geodesic_rays}} \label{proof:prop_geodesic_rays}

\begin{proof}[Proof of \Cref{prop:geodesic_rays}]

    Thanks to Brenier's theorem \citep{brenier1991polar} and since $\mu_0$ is absolutely continuous with respect to the Lebesgue measure, there is a unique OT map $T$ between $\mu_0$ and $\mu_1$, and $T$ is the gradient of a convex function, \emph{i.e.} $T=\nabla u$ with $u$ convex.

    First, let us suppose that the OT map $T$ between $\mu_0$ and $\mu_1$ is the gradient of a 1-convex function $u$. Let $\mu:t\mapsto \mu_t:=\big((1-t)\id + tT)_\#\mu_0 = \big((1-t)\id + t\nabla u\big)_\#\mu_0$. Then, on one hand, we have
    \begin{equation}
        \W_2^2(\mu_s,\mu_t) \le (t-s)^2\W_2^2(\mu_0,\mu_1).
    \end{equation}
    Indeed, let $\gamma^*\in\Pi_o(\mu_0,\mu_1)$ be an optimal coupling. Then, necessarily, denoting $\pi^s(x,y) = (1-s) x + s y$, we have for any $s,t\in\mathbb{R}$, $(\pi^s,\pi^t)_\#\gamma^* \in \Pi(\mu_s,\mu_t)$. Therefore,
    \begin{equation}
        \begin{aligned}
            \W_2^2(\mu_s,\mu_t) &\le \int \|x-y\|_2^2\ \mathrm{d}(\pi^s,\pi^t)_\#\gamma^*(x,y) \\
            &= \int \|(1-s)x+sy - (1-t)x-ty\|_2^2\ \mathrm{d}\gamma^*(x,y) \\
            &= (s-t)^2 \W_2^2(\mu_0,\mu_1).
        \end{aligned}
    \end{equation}
    Then, let $\alpha\ge 1$ and $0\le s<t\le \alpha$. By the triangular inequality and the previous inequality, we have
    \begin{equation}
        \begin{aligned}
            \W_2(\mu_0, \mu_\alpha) &\le \W_2(\mu_0, \mu_s) + \W_2(\mu_s,\mu_t) + \W_2(\mu_t, \mu_\alpha) \\
            &= (s+\alpha-t)\W_2(\mu_0,\mu_1) + \W_2(\mu_s,\mu_t).
        \end{aligned}
    \end{equation}
    If $x\mapsto (1-\alpha)\frac{\|x\|_2^2}{2} + \alpha u(x)$ is convex (\emph{i.e.} $u$ is $\frac{\alpha-1}{\alpha}$-convex), then its gradient $x\mapsto (1-\alpha) x + \alpha\nabla u(x)$ is the Monge map between $\mu_0$ and $\mu_\alpha$ as $\mu_\alpha = \big((1-\alpha)\id + \alpha \nabla u\big)_\#\mu_0$, and thus $\W_2^2(\mu_0, \mu_\alpha) = \alpha^2 \W_2(\mu_0, \mu_1)$. Hence, we obtain
    \begin{equation}
        \W_2(\mu_0, \mu_\alpha) = \alpha \W_2(\mu_0,\mu_1) \le (s+\alpha-t)\W_2(\mu_0,\mu_1) + \W_2(\mu_s,\mu_t) \iff (t-s)\W_2(\mu_0,\mu_1) \le \W_2(\mu_s,\mu_t).
    \end{equation}
    It allows to conclude that $\W_2(\mu_s,\mu_t) = |t-s|\W_2(\mu_0,\mu_1)$ for all $s,t\in [0,\alpha ]$.
    In order to extend the result on $\mathbb{R}_+$, it has to be true for any $\alpha \ge 1$, which corresponds to $u$ being 1-convex. Thus, we can conclude that $t\mapsto \mu_t$ is a geodesic ray.

    For the inverse implication, suppose that $\mu_t = \big((1-t)\id +t T\big)_\#\mu_0$ is a geodesic ray. Then, for all $s\ge 0$,
    \begin{equation}
        \begin{aligned}
            \W_2^2(\mu_s, \mu_0) &= s^2 \W_2^2(\mu_0, \mu_1) \\
            &= \int \| s (x - \nabla u(x))\|_2^2 \ \mathrm{d}\mu_0(x) \\
            &= \int \| x - (1-s)x - s \nabla u(x) \|_2^2 \ \mathrm{d}\mu_0(x) \\
            &= \int \| x - T_s(x) \|_2^2 \ \mathrm{d}\mu_0(x),
        \end{aligned}
    \end{equation}
    where $(T_s)_\#\mu_0 = \mu_s$ with $T_s:x\mapsto (1-s)x+s\nabla u(x)$. By Brenier's theorem, since the OT map is unique and necessarily the gradient of a convex functions, we have that $T_s = \nabla u_s$ with $u_s: x \mapsto (1-s)\frac{\|x\|_2^2}{2} + s u(x) = \frac{\|x\|_2^2}{2} + s \left(u(x)-\frac{\|x\|_2^2}{2} \right)$ convex. 
    Thus, for all $s\ge 0$, 
    \begin{equation}
        I_d + s (\nabla^2 u - I_d) \succeq 0 \iff \nabla^2 u - I_d \succeq - \frac{1}{s} I_d.
    \end{equation}
    It is true for all $s\ge 0$, hence taking the limit $s\to\infty$, we obtain  $\nabla^2 u - I \succeq 0$, \emph{i.e.} $u$ is 1-convex.
\end{proof}

\subsubsection{Proof of \Cref{prop:1d_geodesic_rays}} \label{proof:prop_1d_geodesic_rays}

\begin{proof}[Proof of \Cref{prop:1d_geodesic_rays}]
    By \citep[Equation 7.2.8]{ambrosio2008gradient}, the quantile of $\mu_t$ is $F_t^{-1} = (1-t)F_0^{-1} + t F_1^{-1}$. Then, we know that $F_t^{-1}$ is a quantile function if and only if it is non-decreasing and left-continuous. As a linear combination of left-continuous function, it is always left-continuous. It suffices then to find conditions under which $F_t^{-1}$ is non-decreasing for all $t\ge 0$.
    Let $0<m<m'<1$, then
    \begin{equation}
        \begin{aligned}
            F_t^{-1}(m) -F_t^{-1}(m') = F_0^{-1}(m)-F_0^{-1}(m') + t\big(F_1^{-1}(m)-F_0^{-1}(m) - F_1^{-1}(m') + F_0^{-1}(m')\big),
        \end{aligned}
    \end{equation}
    and hence, $\forall t\ge 0$
    \begin{equation}
        \begin{aligned}
            \forall m'>m,\ F_t^{-1}(m)-F_t^{-1}(m') \le 0 &\iff \forall m'>m,\  F_1^{-1}(m)-F_0^{-1}(m) \le F_1^{-1}(m')-F_0^{-1}(m') \\ &\iff F_1^{-1}-F_0^{-1} \text{ non-decreasing}. \\
        \end{aligned}
    \end{equation}
\end{proof}

\subsubsection{Proof of \Cref{corr:1d_discrete_geod}} \label{proof:corr_1d_discrete_geod}

\begin{proof}[Proof of \Cref{corr:1d_discrete_geod}]    
    We apply \Cref{prop:1d_geodesic_rays}, and thus the resulting geodesic is a ray if and only if $F_1^{-1}-F_0^{-1}$ is non-decreasing, which is true if and only if for all $j>i$,
    \begin{equation}
        \begin{aligned}
            F_1^{-1}\left(\frac{i}{n}\right) - F_0^{-1}\left(\frac{i}{n}\right) \le F_1^{-1}\left(\frac{j}{n}\right) - F_0^{-1}\left(\frac{j}{n}\right) & \iff F_1^{-1}\left(\frac{i}{n}\right) - F_1^{-1}\left(\frac{j}{n}\right) \le F_0^{-1}\left(\frac{i}{n}\right) - F_0^{-1}\left(\frac{j}{n}\right) \\
            &\iff y_i - y_j \le x_i - x_j.
        \end{aligned}
    \end{equation}
\end{proof}

\subsubsection{Proof of \Cref{corr:1d_gaussian_geod}} \label{proof:corr_1d_gaussian_geod}

\begin{proof}[Proof of \Cref{corr:1d_gaussian_geod}]
    Let $\mu_0=\mathcal{N}(m_0,\sigma_0^2)$ and $\mu_1=\mathcal{N}(m_1, \sigma_1^2)$ with $m_0,m_1\in\mathbb{R}$ and $\sigma_0,\sigma_1\in\mathbb{R}_+$. It is well known that for $p\in [0,1]$, $F_{0}^{-1}(p) = m_0 + \sigma_0 \phi^{-1}(p)$ where $\phi^{-1}$ denotes the quantile function of the standard Gaussian distribution $\mathcal{N}(0,1)$. In this case, for $0<p<p'<1$, we observe that 
    \begin{equation}
        F_0^{-1}(p')-F_0^{-1}(p) = \sigma_0 \big(\phi^{-1}(p')-\phi^{-1}(p)\big),
    \end{equation}
    and therefore
    \begin{equation}
        \begin{aligned}
            (F_1^{-1}-F_0^{-1})(p') - (F_1^{-1}-F_0^{-1})(p) &= \big(F_1^{-1}(p')-F_1^{-1}(p)\big)-\big(F_0^{-1}(p')-F_0^{-1}(p)\big) \\
             &= (\sigma_1-\sigma_0)\big(\phi^{-1}(p')-\phi^{-1}(p)\big).
        \end{aligned}
    \end{equation}
    Since $\phi^{-1}$ is non-decreasing, $F_1^{-1}-F_0^{-1}$ is non-decreasing if and only if $\sigma_0\le \sigma_1$. Thus, by \Cref{prop:1d_geodesic_rays}, $\sigma_0\le\sigma_1$ is a sufficient condition to define a geodesic ray starting from $\mu_0$ and passing through $\mu_1$.
\end{proof}

\subsubsection{Proof of \Cref{corr:gaussian_geod}} \label{proof:corr_gaussian_geod}

\begin{proof}[Proof of \Cref{corr:gaussian_geod}]
    Let $\mu_0=\mathcal{N}(m_0,\Sigma_0)$ and $\mu_1=\mathcal{N}(m_1,\Sigma_1)$ with $m_0,m_1\in\mathbb{R}^d$ and $\Sigma_0,\Sigma_1$ symmetric positive definite matrices. The Monge map between $\mu_0$ and $\mu_1$ is \citep[Remark 2.31]{peyre2019computational} 
    \begin{equation}
        \forall x\in \mathbb{R}^d,\ \T(x) = A(x-m_0) + m_1,
    \end{equation}
    where $A=\Sigma_0^{-\frac12}\big(\Sigma_0^{\frac12}\Sigma_1\Sigma_0^{\frac12}\big)^\frac12 \Sigma_0^{-\frac12}$. Let $u:x\mapsto \frac12 \langle Ax,x\rangle + \langle m_1-Am_0,x\rangle = \frac12 \|A^{\frac12}x\|_2^2 + \langle m_1-Am_0,x\rangle$. Note that we have $\nabla u = \T$. Let us denote $g:x\mapsto u(x) - \frac{\|x\|_2^2}{2}$. Then, $u$ is 1-convex if and only if $\nabla^2 g \succeq 0$ (with $\succeq$ the partial order, also called the Loewner order), \emph{i.e.}
    \begin{equation} \label{eq:geodesic_rays_gaussians}
        \begin{aligned}
            \nabla^2 g(x) = A-I_d \succeq 0 &\iff A \succeq I_d \\
            &\iff \Sigma_0^{\frac12} A \Sigma_0^{\frac12} \succeq \Sigma_0^{\frac12} I_d \Sigma_0^{\frac12} \quad \text{\emph{e.g.} by \citep[Lemma V.1.5]{bhatia2013matrix}} \\
            &\iff \big(\Sigma_0^{\frac12}\Sigma_1\Sigma_0^{\frac12}\big)^{\frac12} \succeq \Sigma_0.
        \end{aligned}
    \end{equation}
\end{proof}

\subsection{Proofs of \Cref{section:busemann}} \label{proofs:busemann}

\subsubsection{Proof of \Cref{prop:busemann_general}} \label{proof:prop_busemann_general}

First, we prove the following lemma relating the OT problem between the measures $\mu_t$ and $\nu$, and the problem over couplings of $(\mu_0,\mu_1,\nu)$.
\begin{lemma} \label{lemma:ot_geod}
    Let $(\mu_t)_{t\ge 0}$ be a geodesic ray. Let $t\ge 0$ and $\nu\in\mathcal{P}_2(\mathbb{R}^d)$. Then, 
    \begin{equation}
        \W_2^2(\mu_t,\nu) = \inf_{\gamma\in\Pi(\mu_t,\nu)}\ \int \|x-y\|_2^2\ \mathrm{d}\gamma(x,y) = \inf_{\Tilde{\gamma}\in \Gamma(\mu_0,\mu_1,\nu)}\ \int \|(1-t)x_0+tx_1 -y\|_2^2\ \mathrm{d}\Tilde{\gamma}(x_0,x_1,y),
    \end{equation}
    where $\Gamma(\mu_0,\mu_1,\nu) = \{\Tilde{\gamma}\in\Pi(\mu_0,\mu_1,\nu),\ \pi^{1,2}_\#\Tilde{\gamma}\in\Pi_o(\mu_0,\mu_1)\}$ and $\pi^{1,2}:(x_0,x_1,y)\mapsto (x_0,x_1)$ is the projection onto the first two coordinates.
\end{lemma}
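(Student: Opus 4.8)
The first equality is just the definition of $\W_2^2(\mu_t,\nu)$, so the real content is the second one. Throughout, write $\pi^t:(x_0,x_1)\mapsto (1-t)x_0+tx_1$, so that the ray is $\mu_t=\pi^t_\#\gamma^*$ for the optimal coupling $\gamma^*\in\Pi_o(\mu_0,\mu_1)$ characterizing it, and set $\kappa_\mu=\W_2(\mu_0,\mu_1)$. The plan is to move between transport plans on $\R^d\times\R^d$ and plans on the triple product $\R^d\times\R^d\times\R^d$ using disintegration and gluing (Definition \ref{def:disintegration}), proving the two inequalities separately.

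To show that the infimum over $\Gamma(\mu_0,\mu_1,\nu)$ is at most $\W_2^2(\mu_t,\nu)$, I would build an explicit competitor. Take $\gamma\in\Pi_o(\mu_t,\nu)$ optimal and disintegrate it along its first marginal $\mu_t$, writing $\gamma=\int \delta_x\otimes\gamma_x\,\mathrm{d}\mu_t(x)$ with $\gamma_x\in\cP(\R^d)$ the conditional law of $y$ given $x$. Independently, disintegrate $\gamma^*$ along the map $\pi^t$: since $\pi^t_\#\gamma^*=\mu_t$, there is a kernel $x\mapsto\gamma^*_x$, with $\gamma^*_x$ concentrated on $(\pi^t)^{-1}(\{x\})$ and $\gamma^*=\int\gamma^*_x\,\mathrm{d}\mu_t(x)$. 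Gluing these over the shared variable $x$ gives $\Tilde\gamma=\int \gamma^*_x\otimes\gamma_x\,\mathrm{d}\mu_t(x)$. By construction $\pi^{1,2}_\#\Tilde\gamma=\gamma^*\in\Pi_o(\mu_0,\mu_1)$, so $\Tilde\gamma\in\Gamma(\mu_0,\mu_1,\nu)$, its third marginal is $\int\gamma_x\,\mathrm{d}\mu_t(x)=\nu$, and since $\gamma^*_x$ charges only points with $(1-t)x_0+tx_1=x$, its cost collapses to $\int\|x-y\|_2^2\,\mathrm{d}\gamma(x,y)=\W_2^2(\mu_t,\nu)$.

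For the reverse inequality I would take an arbitrary $\Tilde\gamma\in\Gamma(\mu_0,\mu_1,\nu)$ and push it forward under $T_t:(x_0,x_1,y)\mapsto\big((1-t)x_0+tx_1,\,y\big)$. The plan $(T_t)_\#\Tilde\gamma$ has second marginal $\nu$ and first marginal $\pi^t_\#\bar\gamma$, where $\bar\gamma:=\pi^{1,2}_\#\Tilde\gamma\in\Pi_o(\mu_0,\mu_1)$. Once this first marginal is identified with $\mu_t$, the plan is admissible for $\W_2(\mu_t,\nu)$, whence $\int\|(1-t)x_0+tx_1-y\|_2^2\,\mathrm{d}\Tilde\gamma\ge\W_2^2(\mu_t,\nu)$, and taking the infimum over $\Tilde\gamma$ closes the argument.

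The delicate step, and the main obstacle, is exactly the identification $\pi^t_\#\bar\gamma=\mu_t$ for \emph{every} optimal $\bar\gamma$. This is immediate when the optimal coupling is unique, which is the case in all the concrete settings exploited later, namely $\mu_0\in\cPa$ via Brenier's theorem and the one-dimensional and Gaussian cases, since then $\bar\gamma=\gamma^*$. For arbitrary endpoints two distinct optimal couplings may induce different displacement interpolations, so $\pi^t_\#\bar\gamma$ need only be a point on \emph{some} geodesic joining $\mu_0$ and $\mu_1$; what I would invoke to rule this out is the geodesic-ray hypothesis, which rigidifies the interpolation, a mismatch $\pi^t_\#\bar\gamma\neq\mu_t$ would be incompatible with the relation $\W_2(\mu_s,\mu_t)=|s-t|\kappa_\mu$ persisting for all $s,t\ge 0$, precisely the extension that non-ray geodesics fail to admit. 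Establishing this rigidity cleanly is the crux; the marginal identities, measurability, and the cost computation for the gluing are routine consequences of Definition \ref{def:disintegration}.
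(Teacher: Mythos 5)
Your proof is correct and follows essentially the same route as the paper's: for $\inf_{\Gamma}\le \W_2^2(\mu_t,\nu)$ the paper likewise disintegrates a plan $\sigma_t\in\Pi(\mu_t,\nu)$ as $\mu_t\otimes K_t$ and glues it to $\gamma^*$ via $(x_0,x_1)\mapsto K_t\big((1-t)x_0+tx_1,\mathrm{d}y\big)$ — the same object as your $\int \gamma^*_x\otimes\gamma_x\,\mathrm{d}\mu_t(x)$, the only (immaterial) difference being that the paper works with an arbitrary $\sigma_t$ and takes the infimum at the end rather than starting from an optimal plan; for the reverse inequality it pushes $\tilde\gamma$ forward under $\big((1-t)\pi^1+t\pi^2,\,\pi^3\big)$, exactly your $T_t$.

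The one place you go beyond the paper is the step you call the crux: for an arbitrary $\tilde\gamma\in\Gamma(\mu_0,\mu_1,\nu)$ one must identify $\big((1-t)\pi^1+t\pi^2\big)_\#\bar\gamma$ with $\mu_t$, where $\bar\gamma=\pi^{1,2}_\#\tilde\gamma$ is \emph{any} optimal plan. The paper asserts $\gamma_t\in\Pi(\mu_t,\nu)$ without comment, i.e.\ it silently makes the same identification, so you correctly surfaced a point the published proof leaves implicit — and it is substantive, since without the ray hypothesis a mismatched $\bar\gamma$ would couple $\nu$ to a different interpolation $\tilde\mu_t$ and the infimum over $\Gamma$ could drop below $\W_2^2(\mu_t,\nu)$. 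However, your proposal is incomplete as written: you only gesture at the rigidity, and the heuristic you offer (a mismatch "would be incompatible with $\W_2(\mu_s,\mu_t)=|s-t|\kappa_\mu$") does not work as stated, because a second optimal plan merely generates a \emph{different} geodesic between $\mu_0$ and $\mu_1$, which by itself does not contradict the ray property of the given one. The clean closure is a standard citation, not a new argument: since $(\mu_t)_{t\ge 0}$ is a ray, $\mu_1$ is an interior point of the constant-speed geodesic $(\mu_t)_{t\in[0,2]}$ (time $\tfrac12$ after rescaling), and interior points of Wasserstein geodesics are joined to the endpoints by a \emph{unique} optimal plan \citep[Lemma 7.2.1]{ambrosio2008gradient}. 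Hence $\Pi_o(\mu_0,\mu_1)=\{\gamma^*\}$, $\bar\gamma=\gamma^*$, and the marginal identification is automatic; with that one line added, your argument is complete and, on this point, more careful than the paper's own.
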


\begin{proof}
    On one hand, let $\Tilde{\gamma}\in\Gamma(\mu_0,\mu_1,\nu)$. Then, $\gamma_t := \big((1-t)\pi^1 + t\pi^2, \pi^3\big)_\#\Tilde{\gamma}\in\Pi(\mu_t,\nu)$, and we have 
    \begin{equation}
        \begin{aligned}
            \W_2^2(\mu_t,\nu) &= \inf_{\gamma\in\Pi(\mu_t,\nu)}\ \int \|x-y\|_2^2\ \mathrm{d}\gamma(x,y) \\
            &\le \int \|x-y\|_2^2 \ \mathrm{d}\big((1-t)\pi^1 + t\pi^2, \pi^3\big)_\#\Tilde{\gamma}(x,y) \\
            &= \int \|(1-t) x_0 + tx_1 - y\|_2^2\ \mathrm{d}\Tilde{\gamma}(x_0,x_1,y),
        \end{aligned}
    \end{equation}
    and therefore, by taking the infimum over $\Tilde{\gamma}\in\Gamma(\mu_0,\mu_1,\nu)$ on the right term,
    \begin{equation}
        \W_2^2(\mu_t, \nu) \le \inf_{\Tilde{\gamma}\in \Gamma(\mu_0,\mu_1,\nu)}\ \int \|(1-t)x_0+tx_1 - y\|_2^2\ \mathrm{d}\Tilde{\gamma}(x_0,x_1,y).
    \end{equation}

    On the other hand, let $\sigma_t \in \Pi(\mu_t,\nu)$. By disintegration of $\sigma_t$ with respect to its first marginal, there exists a probability kernel $K_t$ such that $\sigma_t = \mu_t \otimes K_t$, \emph{i.e.}, such that for any test function $h$, 
    \begin{equation}
        \int h(x,y)\ \mathrm{d}\sigma_t(x,y) =  \iint h(x,y)\ K_t(x,\mathrm{d}y)\mathrm{d}\mu_t(x).
    \end{equation}
    Let $\gamma^* \in \Pi_o(\mu_0,\mu_1)$ an optimal plan, and define $\pi$ as the measure verifying for any test function $h$
    \begin{equation}
        \int h(x_0,x_1,y)\ \mathrm{d}\pi(x_0,x_1,y) = \iint h(x_0,x_1,y)\ K_t\big((1-t)x_0+tx_1, \mathrm{d}y\big)\ \mathrm{d}\gamma^*(x_0, x_1).
    \end{equation}
    We now verify that $\pi\in\Gamma(\mu_0,\mu_1,\nu)$. On one hand,
    \begin{equation}
        \begin{aligned}
            \int h(y)\mathrm{d}\pi(x_0,x_1,y) &= \iint h(y)\ K_t\big((1-t)x_0 + tx_1, \mathrm{d}y\big)\ \mathrm{d}\gamma^*(x_0,x_1) \\
            &= \iint h(y)\ K_t(x_t, \mathrm{d}y)\ \mathrm{d}\mu_t(x_t) \quad \text{since $\mu_t = \big((1-t)\pi^1 + t\pi^2\big)_\#\gamma^*$} \\
            &= \int h(y)\ \mathrm{d}\sigma_t(x,y) \quad \text{by definition of the disintegration} \\
            &= \int h(y)\ \mathrm{d}\nu(y),
        \end{aligned}
    \end{equation}
    and thus $\pi^3_\#\pi=\nu$. Moreover,
    \begin{equation}
        \begin{aligned}
            \int h(x_0,x_1)\ \mathrm{d}\pi(x_0,x_1,y) &= \iint h(x_0,x_1)\ K_t\big((1-t)x_0+tx_1, \mathrm{d}y\big)\ \mathrm{d}\gamma^*(x_0,x_1) \\
            &= \int h(x_0,x_1)\ \mathrm{d}\gamma^*(x_0,x_1),
        \end{aligned}
    \end{equation}
    and thus $\pi^{1,2}_\#\pi=\gamma^*\in\Pi_o(\mu_0,\mu_1)$. Therefore, we can write
    \begin{equation}
        \begin{aligned}
            &\inf_{\Tilde{\gamma}\in\Gamma(\mu_0, \mu_1, \nu)}\ \int \|(1-t)x_0 +tx_1 - y\|_2^2\ \mathrm{d}\Tilde{\gamma}(x_0,x_1,y) \\ &\quad \le \int \|(1-t)x_0 + tx_1 - y\|_2^2\ \mathrm{d}\pi(x_0,x_1,y) \\
            &\quad = \iint \|(1-t)x_0 + tx_1 - y\|_2^2\ K_t\big((1-t)x_0+tx_1, \mathrm{d}y\big)\ \mathrm{d}\gamma^*(x_0,x_1) \\
            &\quad = \iint \|x_t-y\|_2^2\ K_t(x_t,\mathrm{d}y)\mathrm{d}\mu_t(x_t) \\
            &\quad = \int \|x-y\|_2^2\ \mathrm{d}\sigma_t(x,y) \quad \text{by definition of the disintegration.}
        \end{aligned}
    \end{equation}
    Hence, taking the infimum on the right hand side, we deduce
    \begin{equation}
        \inf_{\Tilde{\gamma}\in\Gamma(\mu_0, \mu_1, \nu)}\ \int \|(1-t)x_0 +tx_1 - y\|_2^2\ \mathrm{d}\Tilde{\gamma}(x_0,x_1,y) \le \W_2^2(\mu_t, \nu),
    \end{equation}
    and we can conclude.
\end{proof}

Let us now show that the infimum over $\Gamma(\mu_0,\mu_1,\nu)$ is attained and thus is a minimum. For this purpose, we need prove the following technical results on $\Gamma(\mu_0,\mu_,\nu)$.

\begin{lemma} \label{lemma:gamma_tight}
    Let $\mu_0,\mu_1,\nu\in\mathcal{P}_2(\mathbb{R}^d)$. Any sequence $(\Tilde{\gamma}_n)_n$ in the space $\Gamma(\mu_0,\mu_1,\nu)=\{\Tilde{\gamma}\in\Pi(\mu_0,\mu_1,\nu),\ \pi^{1,2}_\#\Tilde{\gamma}\in\Pi_o(\mu_0,\mu_1)\}$ is tight.
\end{lemma}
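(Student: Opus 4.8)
The plan is to exploit that tightness of $(\Tilde{\gamma}_n)_n$ is governed entirely by the marginals, so the optimality constraint $\pi^{1,2}_\#\Tilde{\gamma}\in\Pi_o(\mu_0,\mu_1)$ plays no role beyond making $\Gamma(\mu_0,\mu_1,\nu)$ a subset of $\Pi(\mu_0,\mu_1,\nu)$. First I would record that every element $\Tilde{\gamma}_n\in\Gamma(\mu_0,\mu_1,\nu)$ has the same three fixed marginals $\pi^1_\#\Tilde{\gamma}_n=\mu_0$, $\pi^2_\#\Tilde{\gamma}_n=\mu_1$ and $\pi^3_\#\Tilde{\gamma}_n=\nu$. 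Since $\mathbb{R}^d$ is Polish, each of the single Borel probability measures $\mu_0,\mu_1,\nu$ is tight by inner regularity (Ulam's theorem), so for any $\varepsilon>0$ there exist compact sets $K_0,K_1,K_\nu\subset\mathbb{R}^d$ with $\mu_0(K_0^c)$, $\mu_1(K_1^c)$ and $\nu(K_\nu^c)$ each at most $\varepsilon/3$.

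Next I would set $K=K_0\times K_1\times K_\nu$, which is compact in $(\mathbb{R}^d)^3$, and use the inclusion $K^c\subseteq (K_0^c\times\mathbb{R}^d\times\mathbb{R}^d)\cup(\mathbb{R}^d\times K_1^c\times\mathbb{R}^d)\cup(\mathbb{R}^d\times\mathbb{R}^d\times K_\nu^c)$ together with a union bound. Since the three projections of $\Tilde{\gamma}_n$ are exactly $\mu_0,\mu_1,\nu$, this yields $\Tilde{\gamma}_n(K^c)\le \mu_0(K_0^c)+\mu_1(K_1^c)+\nu(K_\nu^c)\le\varepsilon$ for every $n$. Because this bound is uniform in $n$, the sequence $(\Tilde{\gamma}_n)_n$ is tight by definition.

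There is no substantive obstacle here: the only point to notice is that the restriction $\pi^{1,2}_\#\Tilde{\gamma}\in\Pi_o(\mu_0,\mu_1)$ is irrelevant to tightness, which depends solely on the (fixed, hence tight) marginal measures. The argument is simply the standard fact that the set of couplings with prescribed tight marginals is itself tight, specialized to a single sequence. This tightness is precisely what is needed to then extract, via Prokhorov's theorem, a weakly convergent subsequence whose limit lies in $\Gamma(\mu_0,\mu_1,\nu)$ (using closedness of $\Pi_o(\mu_0,\mu_1)$ under weak convergence) and attains the infimum of Lemma~\ref{lemma:ot_geod}.
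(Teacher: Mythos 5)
Your proof is correct and matches the paper's argument essentially verbatim: both fix compact sets $K_0, K_1, K_\nu$ capturing all but $\varepsilon/3$ of the mass of each marginal $\mu_0,\mu_1,\nu$, then apply a union bound on the complement of $K_0\times K_1\times K_\nu$, noting that the optimality constraint on $\pi^{1,2}_\#\Tilde{\gamma}$ plays no role. Your added remarks on Ulam's theorem and the downstream use via Prokhorov are consistent with how the paper uses the lemma.
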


\begin{proof}
    Following the proof of \citep[Theorem 1.7]{santambrogio2015optimal}, the singleton $\{\mu_0\}$, $\{\mu_1\}$ and $\{\nu\}$ are tight and for all $\varepsilon>0$, there exist compacts $K_0,K_1,K_\nu$ such that $\mu_0(\mathbb{R}^d\setminus K_0) < \frac{\varepsilon}{3}$, $\mu_1(\mathbb{R}^d\setminus K_1) < \frac{\varepsilon}{3}$ and $\nu(\mathbb{R}^d\setminus K_\nu) < \frac{\varepsilon}{3}$. Therefore, 
    \begin{multline}
            \Tilde{\gamma}^*_{n}\big((\mathbb{R}^d\times\mathbb{R}^d\times \mathbb{R}^d)\setminus (K_0\times K_1 \times K_\nu)\big) \\ \le \Tilde{\gamma}_{n}^*\big((\mathbb{R}^d\setminus K_0)\times \mathbb{R}^d \times \mathbb{R}^d\big) + \Tilde{\gamma}_{n}^*\big(\mathbb{R}^d \times (\mathbb{R}^d\setminus K_1)\times \mathbb{R}^d\big) + \Tilde{\gamma}_{n}^*\big(\mathbb{R}^d\times\mathbb{R}^d \times (\mathbb{R}^d\setminus K_\nu)\big) < \varepsilon.
    \end{multline}
\end{proof}

\begin{lemma}
    Let $\mu_0,\mu_1,\nu\in\Gamma(\mu_0,\mu_1,\nu)$. Then,
    \begin{equation}
        \inf_{\Tilde{\gamma}\in\Gamma(\mu_0,\mu_1,\nu)}\ \int \|(1-t)x_0+tx_1-y\|_2^2\ \mathrm{d}\Tilde{\gamma}(x_0,x_1,y) = \min_{\Tilde{\gamma}\in\Gamma(\mu_0,\mu_1,\nu)}\ \int \|(1-t)x_0+tx_1-y\|_2^2\ \mathrm{d}\Tilde{\gamma}(x_0,x_1,y).
    \end{equation}
\end{lemma}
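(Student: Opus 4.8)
The plan is a standard direct-method argument: the cost is lower semicontinuous for the weak convergence of measures and the feasible set $\Gamma(\mu_0,\mu_1,\nu)$ is weakly compact, so a minimizing sequence must admit a cluster point that is itself feasible and attains the infimum. First I would record that $\Gamma(\mu_0,\mu_1,\nu)$ is nonempty: for any $\gamma^*\in\Pi_o(\mu_0,\mu_1)$, the product measure $\gamma^*\otimes\nu$ lies in $\Gamma(\mu_0,\mu_1,\nu)$ since its projection onto the first two coordinates is precisely $\gamma^*$, which is optimal. Moreover, as $\mu_0,\mu_1,\nu\in\cP_2(\R^d)$, the cost $c(x_0,x_1,y)=\|(1-t)x_0+tx_1-y\|_2^2$ is integrable against every element of $\Gamma(\mu_0,\mu_1,\nu)$ (it is controlled by the second moments), so the infimum is finite; I denote it by $m$ and fix a minimizing sequence $(\Tilde{\gamma}_n)_n$ with $\int c\,\mathrm{d}\Tilde{\gamma}_n\to m$.

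By \Cref{lemma:gamma_tight}, the sequence $(\Tilde{\gamma}_n)_n$ is tight, so Prokhorov's theorem provides a subsequence (not relabeled) converging weakly to some $\Tilde{\gamma}^*\in\cP(\R^d\times\R^d\times\R^d)$. I would then verify that $\Tilde{\gamma}^*\in\Gamma(\mu_0,\mu_1,\nu)$. The three marginal constraints pass to the limit because pushforward by the continuous coordinate projections is continuous for the weak convergence, and therefore $\Tilde{\gamma}^*\in\Pi(\mu_0,\mu_1,\nu)$.

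The only delicate point, and the main obstacle, is preserving the optimality constraint $\pi^{1,2}_\#\Tilde{\gamma}\in\Pi_o(\mu_0,\mu_1)$. Since $\pi^{1,2}$ is continuous, $\pi^{1,2}_\#\Tilde{\gamma}_n$ converges weakly to $\pi^{1,2}_\#\Tilde{\gamma}^*$, and each $\pi^{1,2}_\#\Tilde{\gamma}_n$ is optimal between $\mu_0$ and $\mu_1$. Using lower semicontinuity of $\gamma\mapsto\int\|x_0-x_1\|_2^2\,\mathrm{d}\gamma$ under weak convergence, together with the fact that $\pi^{1,2}_\#\Tilde{\gamma}^*$ is still a coupling of $\mu_0$ and $\mu_1$, I obtain the sandwich
\begin{equation*}
    \W_2^2(\mu_0,\mu_1)\le\int\|x_0-x_1\|_2^2\,\mathrm{d}(\pi^{1,2}_\#\Tilde{\gamma}^*)\le\liminf_n\int\|x_0-x_1\|_2^2\,\mathrm{d}(\pi^{1,2}_\#\Tilde{\gamma}_n)=\W_2^2(\mu_0,\mu_1),
\end{equation*}
so that $\pi^{1,2}_\#\Tilde{\gamma}^*\in\Pi_o(\mu_0,\mu_1)$ and hence $\Tilde{\gamma}^*\in\Gamma(\mu_0,\mu_1,\nu)$.

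It remains to conclude that $\Tilde{\gamma}^*$ realizes the infimum. Applying lower semicontinuity once more, now to the nonnegative continuous cost $c$, gives $\int c\,\mathrm{d}\Tilde{\gamma}^*\le\liminf_n\int c\,\mathrm{d}\Tilde{\gamma}_n=m$; and since $\Tilde{\gamma}^*$ is feasible, the reverse inequality $\int c\,\mathrm{d}\Tilde{\gamma}^*\ge m$ holds by definition of $m$. Therefore the infimum is attained at $\Tilde{\gamma}^*$ and equals the minimum. The essential ingredient throughout is the closedness of $\Pi_o(\mu_0,\mu_1)$ under weak limits of the first two marginals, which is exactly what the lower-semicontinuity-plus-feasibility sandwich supplies.
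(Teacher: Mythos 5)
Your proof is correct and follows essentially the same direct-method route as the paper: tightness of $\Gamma(\mu_0,\mu_1,\nu)$ via \Cref{lemma:gamma_tight}, Prokhorov on a minimizing sequence, lower semicontinuity of the quadratic cost, and Weierstrass-type attainment. If anything, you are slightly more thorough than the paper, which asserts compactness of $\Gamma(\mu_0,\mu_1,\nu)$ by citation, whereas you explicitly verify the delicate closedness of the constraint $\pi^{1,2}_\#\Tilde{\gamma}\in\Pi_o(\mu_0,\mu_1)$ under weak limits via the lower-semicontinuity sandwich.
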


\begin{proof}
    First, since any sequence in $\Gamma(\mu_0,\mu_1,\nu)$ is tight from \Cref{lemma:gamma_tight}, we have that $\Gamma(\mu_0,\mu_1,\nu)$ is a compact subset of $\Pi(\mu_0,\mu_1,\nu)$ (see \emph{e.g.} \citep[Proof of Theorem 1.4]{santambrogio2015optimal}). Then, by \citep[Lemma 1.6]{santambrogio2015optimal}, $J:\Tilde{\gamma}\mapsto \int \|(1-t)x_0+tx_1-y\|_2^2\ \mathrm{d}\Tilde{\gamma}(x_0,x_1,y)$ is lower semi-continuous for the weak convergence of measures and by Weierstrass theorem (see \emph{e.g.} \citep[Box 1.1]{santambrogio2015optimal}), the infimum is attained.
\end{proof}

\begin{proof}[Proof of \Cref{prop:busemann_general}]
    Let $t\ge 0$ and $\nu\in\mathcal{P}_2(\mathbb{R}^d)$. Using \Cref{lemma:ot_geod}, we have that
    \begin{equation}
        \begin{aligned}
            \W_2^2(\mu_t,\nu) &= \inf_{\Tilde{\gamma}\in\Gamma(\mu_0, \mu_1, \nu)}\ \int \|(1-t)x_0 +tx_1 - y\|_2^2\ \mathrm{d}\Tilde{\gamma}(x_0,x_1,y) \\
            &= \inf_{\Tilde{\gamma}\in\Gamma(\mu_0,\mu_1,\nu)}\ \int \|x_0-y\|_2^2\ \mathrm{d}\Tilde{\gamma}(x_0,x_1,y) + t^2 \W_2^2(\mu_0,\mu_1) - 2t \int \langle x_1-x_0, y-x_0\rangle\ \mathrm{d}\Tilde{\gamma}(x_0,x_1,y) \\
        &= t^2 \W_2^2(\mu_0,\mu_1) \left(1 + \inf_{\Tilde{\gamma}\in\Gamma(\mu_0,\mu_1,\nu)}\ \left[ \frac{1}{t^2 \W_2^2(\mu_0,\mu_1)} \int \|x_0-y\|_2^2\ \mathrm{d}\Tilde{\gamma}(x_0,x_1,y) \right.\right. \\ &\left.\left. \quad- \frac{2}{t \W_2^2(\mu_0,\mu_1)}\int \langle x_1-x_0, y-x_0\rangle\ \mathrm{d}\Tilde{\gamma}(x_0,x_1,y)\right]\right).
        \end{aligned}
    \end{equation}
    Thus, we have:
    \begin{equation}
        \begin{aligned}
            &\W_2(\mu_t,\nu) - t\W_2(\mu_0,\mu_1) \\
            &= t\W_2(\mu_0,\mu_1) \\ &\quad \sqrt{1 + \frac{2}{t\W_2^2(\mu_0,\mu_1)} \inf_{\Tilde{\gamma}\in\Gamma(\mu_0,\mu_1,\nu)} \left[ \frac{1}{2t}\int \|x_0-y\|_2^2 \ \mathrm{d}\Tilde{\gamma}(x_0,x_1,y) - \int \langle x_1-x_0, y-x_0\rangle\ \mathrm{d}\Tilde{\gamma}(x_0,x_1,y)\right]} \\ &\quad - t\W_2(\mu_0,\mu_1) \\
            &\underset{t\to\infty}{=} t\W_2(\mu_0,\mu_1) \\ &\quad \left(1 + \frac{1}{t\W_2^2(\mu_0,\mu_1)} \inf_{\Tilde{\gamma}\in\Gamma(\mu_0,\mu_1,\nu)} \left[\frac{1}{2t}\int \|x_0-y\|_2^2 \ \mathrm{d}\Tilde{\gamma}(x_0,x_1,y) - \int \langle x_1-x_0, y-x_0\rangle\ \mathrm{d}\Tilde{\gamma}(x_0,x_1,y) \right] + o(t^{-1})\right) \\ &\quad - t\W_2(\mu_0,\mu_1) \\
            &\underset{t\to\infty}{=}  \frac{1}{\W_2(\mu_0,\mu_1)} \inf_{\Tilde{\gamma}\in\Gamma(\mu_0,\mu_1,\nu)} \left(\frac{1}{2t}\int \|x_0-y\|_2^2 \ \mathrm{d}\Tilde{\gamma}(x_0,x_1,y) - \int \langle x_1-x_0, y-x_0\rangle\ \mathrm{d}\Tilde{\gamma}(x_0,x_1,y)\right).
        \end{aligned}
    \end{equation}

    To conclude, we need to show that we can pass to the limit. First, let $\Tilde{\gamma}_t^*$ be defined as
    \begin{equation}
        \Tilde{\gamma}_t^*\in\argmin_{\Tilde{\gamma}\in\Gamma(\mu_0,\mu_1,\nu)} \left(\frac{1}{2t}\int \|x_0-y\|_2^2 \ \mathrm{d}\Tilde{\gamma}(x_0,x_1,y) - \int \langle x_1-x_0, y-x_0\rangle\ \mathrm{d}\Tilde{\gamma}(x_0,x_1,y)\right),
    \end{equation}
    and let
    \begin{equation}\label{eq:argmin_proof}
        \Tilde{\gamma}^*\in\argmin_{\Tilde{\gamma}\in\Gamma(\mu_0,\mu_1,\nu)}  - \int \langle x_1-x_0, y-x_0\rangle\ \mathrm{d}\Tilde{\gamma}(x_0,x_1,y).
    \end{equation}
    By definition of $\Tilde{\gamma}_t^*$ and $\Tilde{\gamma}^*$, we have the following inequality:
    \begin{multline} \label{eq:ineq_t}
        \frac{1}{2t}\int \|x_0-y\|_2^2 \ \mathrm{d}\Tilde{\gamma}_t^*(x_0,x_1,y) - \int \langle x_1-x_0, y-x_0\rangle\ \mathrm{d}\Tilde{\gamma}_t^*(x_0,x_1,y) \\ \le \frac{1}{2t}\int \|x_0-y\|_2^2 \ \mathrm{d}\Tilde{\gamma}^*(x_0,x_1,y) - \int \langle x_1-x_0, y-x_0\rangle\ \mathrm{d}\Tilde{\gamma}^*(x_0,x_1,y).
    \end{multline}

    Let $(t_n)_n$ be a sequence such that $t_n\to \infty$. Any sequence in $\Gamma(\mu_0,\mu_1,\nu)$ is tight by \Cref{lemma:gamma_tight}. 
    Hence, by Prokhorov's theorem, we can extract a subsequence $\Tilde{\gamma}_{t_{\varphi(n)}}^*$ converging in law towards $\gamma_\infty\in\Gamma(\mu_0,\mu_1,\nu)$. Thus, passing to the limit in \eqref{eq:ineq_t}, we have
    \begin{equation}\label{ineq:inner_product_infty}
        -\int \langle x_1-x_0, y-x_0\rangle\ \mathrm{d}\gamma_\infty(x_0,x_1,y) \le -\int \langle x_1-x_0, y-x_0\rangle\ \mathrm{d}\Tilde{\gamma}^*(x_0,x_1,y).
    \end{equation}
    But by definition, $\Tilde{\gamma}^*$ is optimal in \eqref{eq:argmin_proof}, therefore \eqref{ineq:inner_product_infty} is an equality. We can conclude that
    \begin{equation}
        B^\mu(\nu) = \inf_{\Tilde{\gamma}\in\Gamma(\mu_0,\mu_1,\nu)}\ -\frac{1}{\W_2(\mu_0,\mu_1)} \int \langle x_1-x_0,y-x_0\rangle\ \mathrm{d}\Tilde{\gamma}(x_0,x_1,y).
    \end{equation}
\end{proof}

\subsubsection{Proof of \Cref{corollary:busemann_ot_map}} \label{proof:corollary_busemann_ot_map}

\begin{proof}[Proof of \Cref{corollary:busemann_ot_map}]
    Let $\mu_0$ absolutely continuous \emph{w.r.t.} the Lebesgue measure, and $T$ the gradient of a 1-convex function such that $\mu_1=T_\#\mu_0$. Let us show that in this case $\Gamma(\mu_0,\mu_1,\nu)=\{(\pi^1, T\circ \pi^1, \pi^2)_\#\gamma,\ \gamma\in\Pi(\mu_0,\nu)\}$, where we recall that $\Gamma(\mu_0,\mu_1,\nu)=\{\Tilde{\gamma}\in\Pi(\mu_0,\mu_1,\nu),\ \pi^{1,2}_\#\Tilde{\gamma}\in\Pi_o(\mu_0,\mu_1)\}$.

    On one hand, let $\gamma\in\Pi(\mu_0,\nu)$ and $\Tilde{\gamma} = (\pi^1, T\circ\pi^1, \pi^2)_\#\gamma$. Then, we verify easily that the marginals are satisfied, \emph{i.e.}, $\pi^1_\#\Tilde{\gamma}=\mu_0$, $\pi^2_\#\Tilde{\gamma}=\mu_1$ and $\pi^3_\#\Tilde{\gamma}=\nu$. Moreover,
    \begin{equation}
        \int h(x_0,x_1)\ \mathrm{d}\Tilde{\gamma}(x_0,x_1,y) = \int h\big(x,T(x)\big)\ \mathrm{d}\gamma(x_0,y) = \int h\big(x,T(x)\big)\ \mathrm{d}\mu_0(x),
    \end{equation}
    and hence $\pi^{1,2}_\#\Tilde{\gamma}=(\id,T)_\#\mu_0\in\Pi_o(\mu_0,\mu_1)$. Thus, $\{(\pi^1, T\circ \pi^1, \pi^2)_\#\gamma,\ \gamma\in\Pi(\mu_0,\nu)\} \subset \Gamma(\mu_0,\mu_1,\nu)$.

    On the other hand, let $\Tilde{\gamma}\in\Gamma(\mu_0,\mu_1,\nu)$. Thus, we know that $\pi^{1,2}_\#\Tilde{\gamma}=(\id, T)_\#\mu_0$. Additionally, by the disintegration theorem, there exists a probability kernel $K$ such that $\Tilde{\gamma}=\pi^{1,2}_\#\Tilde{\gamma} \otimes K = (\id, T)_\#\mu_0 \otimes K$, \emph{i.e.}
    \begin{equation}
        \begin{aligned}
            \int h(x_0,x_1,y)\ \mathrm{d}\Tilde{\gamma}(x_0,x_1,y) &= \iint h(x_0,x_1,y)\ K\big((x_0, x_1), \mathrm{d}y\big)\ \mathrm{d}(\pi^{1,2}_\#\Tilde{\gamma})(x_0,x_1) \\
            &= \iint h(x_0, T(x_0), y)\ K\big((x_0, T(x_0)), \mathrm{d}y\big)\ \mathrm{d}\mu_0(x_0).
        \end{aligned}
    \end{equation}
    Denoting $\Tilde{K}(x_0,\mathrm{d}y)=K\big((x_0, T(x_0)), \mathrm{d}y\big)$ and defining $\gamma=\mu_0\otimes \Tilde{K}$, we obtain
    \begin{equation}
        \begin{aligned}
            \int h(x_0,x_1,y) \ \mathrm{d}\Tilde{\gamma}(x_0,x_1,y) &= \iint h(x_0, T(x_0), y)\ \Tilde{K}(x_0,\mathrm{d}y)\mathrm{d}\mu_0(x_0) \\
            &= \int h(x_0, T(x_0), y)\ \mathrm{d}\gamma(x_0, y) \\
            &= \int h(x_0,x_1,y)\ \mathrm{d}(\pi^1,T\circ\pi^1,\pi^2)_\#\gamma(x_0,x_1,y).
        \end{aligned}
    \end{equation}
    Thus, $\Tilde{\gamma} = (\pi^1, T\circ \pi^1, \pi^2)_\#\gamma$. Moreover, we can verify that $\gamma\in \Pi(\mu_0,\nu)$ as $\pi^1_\#\gamma=\mu_0$ by definition of $\gamma$ and 
    \begin{equation}
        \begin{aligned}
            \int h(y)\ \mathrm{d}\gamma(x,y) &= \iint h(y)\ \Tilde{K}(x, \mathrm{d}y)\mathrm{d}\mu_0(x) \\
            &= \iint h(y) \ K\big((x, T(x)), \mathrm{d}y\big)\mathrm{d}\mu_0(x) \\
            &= \int h(y)\ \mathrm{d}\Tilde{\gamma}(x_0,x_1,y) \\
            &= \int h(y)\ \mathrm{d}\nu(y).
        \end{aligned}
    \end{equation}
    Therefore, we can conclude that the two sets are equals, and rewrite the Busemann function from \Cref{prop:busemann_general} in this case as 
    \begin{equation}
        B^\mu(\nu) = \inf_{\gamma\in\Pi(\mu_0,\nu)}\ -\frac{1}{\W_2(\mu_0,\mu_1)} \int \langle T(x_0)-x_0, y-x_0\rangle\ \mathrm{d}\gamma(x_0,y).
    \end{equation}
\end{proof}

\subsubsection{Proof of \Cref{corollary:busemann_dirac}} \label{proof:corollary_busemann_dirac}

\begin{proof}[Proof of \Cref{corollary:busemann_dirac}]
    Let $\mu_0=\delta_{x_0}$ with $x_0\in\mathbb{R}^d$. Recall that $\Gamma(\mu_0, \mu_1, \nu) = \{\Tilde{\gamma}\in\Pi(\mu_0,\mu_1,\nu),\ \pi^{1,2}_\#\Tilde{\gamma}\in\Pi_o(\mu_0,\mu_1)\} = \{\Tilde{\gamma}\in\Pi(\mu_0,\mu_1,\nu),\ \pi^{1,2}_\#\Tilde{\gamma}=\mu_0\otimes \mu_1\}$ since the optimal coupling between $\mu_0=\delta_{x_0}$ and $\mu_1$ is $\mu_0\otimes \mu_1$. Let us show that $\Gamma(\mu_0,\mu_1,\nu) = \{\mu_0 \otimes \gamma,\ \gamma\in\Pi(\mu_1,\nu)\}$ in this case.

    On one hand, let $\gamma\in\Pi(\mu_1,\nu)$ and define $\Tilde{\gamma}=\mu_0\otimes \gamma$. Then, trivially, we have that $\Tilde{\gamma}\in\Pi(\mu_0,\mu_1,\nu)$. Moreover, let us verify that $\pi^{1,2}_\#\Tilde{\gamma}=\mu_0\otimes \mu_1$. For any continuous bounded function $h$,
    \begin{equation}
        \begin{aligned}
            \int h(x_0,x_1)\ \mathrm{d}(\pi^{1,2}_\#\Tilde{\gamma})(x_0,x_1) &= \int h(x_0,x_1)\ \mathrm{d}\Tilde{\gamma}(x_0,x_1,y) \\
            &= \iint h(x_0,x_1)\ \mathrm{d}\gamma(x_1, y)\mathrm{d}\mu_0(x_0) \\
            &=  \iint h(x_0,x_1)\ \mathrm{d}\mu_1(x_1)\mathrm{d}\mu_0(x_0).
        \end{aligned}
    \end{equation}
    Thus, $\{\mu_0\otimes \gamma,\ \gamma\in\Pi(\mu_1,\nu)\}\subset \Gamma(\mu_0,\mu_1,\nu)$.

    On the other hand, let $\Tilde{\gamma}\in\Gamma(\mu_0,\mu_1,\nu)$. Since $\pi^{1,2}_\#\Tilde{\gamma}=\mu_0\otimes \mu_1$, we can disintegrate $\Tilde{\gamma}$ as $\Tilde{\gamma}=(\mu_0\otimes \mu_1)\otimes K$, \emph{i.e.} for any $h$,
    \begin{equation} \label{eq:1st_dis}
        \begin{aligned}
            \int h(x_0,x_1,y)\ \mathrm{d}\Tilde{\gamma}(x_0,x_1,y) &= \iint h(x_0,x_1,y)\  K\big((x_0,x_1), \mathrm{d}y\big)\ \mathrm{d}\mu_1(x_1)\mathrm{d}\mu_0(x_0).
        \end{aligned}
    \end{equation}
    Let us define the distribution $\gamma_{x_0}$ satisfying for $\mu_0$-a.e. $x_0$,
    \begin{equation}
        \int h(x_0,x_1,y)\ \mathrm{d}\gamma_{x_0}(x_1,y) = \iint h(x_0,x_1,y)\ K\big((x_0,x_1), \mathrm{d}y\big)\mathrm{d}\mu_1(x_1).
    \end{equation}
    First, we can verify that for $\mu_0$-a.e. $x_0$, $\pi^1_\#\gamma_{x_0}=\mu_1$ as
    \begin{equation}
        \begin{aligned}
            \int h(x_1) \mathrm{d}\gamma_{x_0}(x_1,y) = \iint h(x_1) K\big((x_0,x_1), \mathrm{d}y\big)\mathrm{d}\mu_1(x_1) = \int h(x_1)\mathrm{d}\mu_1(x_1).
        \end{aligned}
    \end{equation}
    Moreover, we can also disintegrate $\Tilde{\gamma}$ \emph{w.r.t.} $\mu_0$ as $\Tilde{\gamma}=\mu_0\otimes \Tilde{K}$. By uniqueness of the disintegration, we have for $\mu_0$-a.e. $x_0$,
    \begin{equation} \label{eq:equality_dis}
        \iint h(x_1, y) K\big((x_0,x_1), \mathrm{d}y\big)\mathrm{d}\mu_1(x_1) = \int h(x_1, y) \Tilde{K}\big(x_0, (\mathrm{d}x_1,\mathrm{d}y)\big),
    \end{equation}
    \emph{i.e.} $\gamma_{x_0} = \Tilde{K}(x_0,\cdot)$.
    Integrating \emph{w.r.t} $\mu_0$, we get that the left hand side is equal to $\int h\mathrm{d}\Tilde{\gamma}$ by \eqref{eq:1st_dis}. But we also get
    \begin{equation}
        \begin{aligned}
            \iiint h(x_1,y) K\big((x_0,x_1), \mathrm{d}y\big)\mathrm{d}\mu_1(x_1)\mathrm{d}\mu_0(x_0) &= \iint h(x_1,y) \left(\int K\big((x_0,x_1), \mathrm{d}y\big)\mathrm{d}\mu_0(x_0) \right)\ \mathrm{d}\mu_1(x_1) \\
            &= \iint h(x_1,y)\ \Bar{K}(x_1,\mathrm{d}y)\mathrm{d}\mu_1(x_1),
        \end{aligned}
    \end{equation}
    where $\Bar{K}(x_1, \mathrm{d}y):= \int K\big((x_0,x_1), \mathrm{d}y\big)\mathrm{d}\mu_0(x_0)$. Thus, we deduce that $\pi^{2,3}_\#\Tilde{\gamma} = \mu_1 \otimes \Bar{K}$. Integrating the right hand side of \eqref{eq:equality_dis}, we get for any bounded measurable function $h$
    \begin{equation}
        \int h(x_1,y)\ \mathrm{d}\Tilde{\gamma}(x_0,x_1,y) = \int h(x_1,y) \int \mathrm{d}\gamma_{x_0}(x_1,y)\mathrm{d}\mu_0(x_0) = \iint h(x_1,y)\ \Bar{K}(x_1,\mathrm{d}y)\mathrm{d}\mu_1(x_1).
    \end{equation}
    It implies that $\gamma_{x_0} = \mu_1\otimes \Bar{K}$ for $\mu_0$-a.e. $x_0$ and thus that $K$ does not depend on $x_0$. Finally, we can conclude that $\Tilde{\gamma}=\mu_0\otimes \gamma$ with $\gamma = \mu_1 \otimes \Bar{K}\in \Pi(\mu_1,\nu)$.
\end{proof}

\subsubsection{Proof of \Cref{prop:busemann_closed_1d}} \label{proof:prop_busemann_closed_1d}

\begin{proof}[Proof of \Cref{prop:busemann_closed_1d}]

    Let $(\mu_t)_{t\ge 0}$ be a geodesic ray. Recall that $\W_2^2(\mu_0,\mu_1) = \int_0^1\big|F_0^{-1}(u) - F_1^{-1}(u)\big|^2\ \mathrm{d}u = \|F_0^{-1} - F_1^{-1}\|_{L^2([0,1])}^2$. Moreover, the quantile functions of any measure on the geodesic ray is of the form,
    \begin{equation}
        \forall t\ge 0,\ F_t^{-1} = (1-t)F_0^{-1} + tF_1^{-1}.
    \end{equation}
    
    Thus, we have, for any $\nu\in\cP_2(\R)$, $t\ge 0$,
    \begin{equation}
        \begin{aligned}
            \W_2(\nu,\mu_t) - t \W_2(\mu_1,\mu_0) &= \|F_\nu^{-1}-F_t^{-1}\|_{L^2([0,1])} - t \W_2(\mu_1,\mu_0) \\
            &= \|F_\nu^{-1}-(1-t)F_0^{-1} - tF_1^{-1}\|_{L^2([0,1])} - t \W_2(\mu_1,\mu_0)  \\
            &= \|F_\nu^{-1} - F_0^{-1} - t (F_1^{-1} - F_0^{-1})\|_{L^2([0,1])} - t \W_2(\mu_1,\mu_0)  \\
            &= \sqrt{\|F_{\nu}^{-1}-F_0^{-1}\|_{L^2([0,1])}^2 -2t\langle F_\nu^{-1}-F_0^{-1}, F_1^{-1}-F_0^{-1}\rangle_{L^2([0,1])} + t^2 \W_2^2(\mu_1,\mu_0)} \\&\quad - t \W_2(\mu_1,\mu_0)  \\
            &= t \W_2(\mu_1,\mu_0)  \sqrt{1 - \frac{2}{t \W_2^2(\mu_1,\mu_0)} \langle F_{\nu}^{-1}-F_0^{-1}, F_1^{-1}-F_0^{-1}\rangle_{L^2([0,1])} + o\left(\frac{1}{t}\right)} \\ &\quad - t \W_2(\mu_1,\mu_0)  \\
            &\underset{t\to\infty}{=} t \W_2(\mu_1,\mu_0)  \left(1 - \frac{1}{t \W_2^2(\mu_1,\mu_0) } \langle F_\nu^{-1}-F_0^{-1}, F_1^{-1}-F_0^{-1}\rangle_{L^2([0,1])} + o\left(\frac{1}{t}\right)\right) \\ &\quad - t \W_2(\mu_1,\mu_0)  \\
            &\xrightarrow[t\to\infty]{} -\left\langle F_\nu^{-1}-F_0^{-1}, \frac{F_1^{-1}-F_0^{-1}}{\|F_1^{-1}-F_0^{-1}\|_{L^2([0,1])}}\right\rangle_{L^2([0,1])}.
        \end{aligned}
    \end{equation}

    Thus, we can conclude that 
    \begin{equation}
        B^\mu(\nu) = -\left\langle F_\nu^{-1}-F_0^{-1}, \frac{F_1^{-1}-F_0^{-1}}{\|F_1^{-1}-F_0^{-1}\|_{L^2([0,1])}}\right\rangle_{L^2([0,1])}.
    \end{equation}

    For a unit-speed geodesic ray $(\mu_t)_{t\ge 0}$, we have $\|F_1^{-1}-F_0^{-1}\|_{L^2([0,1])}=1$, and we then recover  \eqref{eq:busemann_1d}.
\end{proof}

\subsubsection{Proof of \Cref{corr:busemann_1d_gaussian}} \label{proof:corr_busemann_1d_gaussian}

\begin{proof}[Proof of \Cref{corr:busemann_1d_gaussian}]
    Recall that for any $\eta=\cN(m,\sigma^2)$, for all $u\in [0,1]$, $F_\eta^{-1}(u) = m + \sigma \phi^{-1}(u)$ where $\phi^{-1}$ is the quantile function of $\cN(0,1)$, therefore satisfying $\int_0^1 \phi^{-1}(u)\ \mathrm{d}u=0$ and $\int_0^1 \phi^{-1}(u)^2\ \mathrm{d}u = 1$.

    Thus, let $\mu_0=\cN(\mo,\so^2)$, $\mu_1=\cN(\m1,\s1^2)$, $\nu=\cN(m,\sigma^2)$ with $m_0,m_1,m\in \R$, $\sigma_0,\sigma_1,\sigma\in \R_+^*$ and $\sigma_1\ge \sigma_0$, $\W_2^2(\mu_0,\mu_1)=1$. Then, by \Cref{prop:busemann_closed_1d}, we have
    \begin{equation}
        \begin{aligned}
            B^\mu(\nu) &= - \langle F_1^{-1} - F_0^{-1}, F_\nu^{-1} - F_0^{-1}\rangle_{L^2([0,1])} \\
            &= -\langle (m_1-m_0) + (\sigma_1-\sigma_0)\phi^{-1}, (m-m_0) + (\sigma-\sigma_0)\phi^{-1}\rangle_{L^2([0,1])} \\
            &= - (m_1-m_0)(m-m_0) - (m_1-m_0)(\sigma-\sigma_0) \int_0^1 \phi^{-1}(u)\mathrm{d}u \\ &\quad- (m-m_0)(\sigma_1-\sigma_0) \int_0^1 \phi^{-1}(u)\mathrm{d}u - (\sigma_1-\sigma_0)(\sigma - \sigma_0)\int_0^1 \phi^{-1}(u)^2\ \mathrm{d}u \\
            &= - (\m1-\mo)(m-\mo) - (\s1-\so)(\sigma-\so) \\
            &= - \left\langle 
            \begin{pmatrix}
                \m1-\mo \\ \s1-\so
            \end{pmatrix}, 
            \begin{pmatrix}
                m-\mo \\ \sigma-\so
            \end{pmatrix} \right\rangle.
        \end{aligned}
    \end{equation}

    More generally, if $\W_2(\mu_0,\mu_1)>0$, we have
    \begin{equation}
        B^\mu(\nu) = - \frac{(\m1-\mo)(m-\mo) - (\s1-\so)(\sigma-\so)}{\sqrt{(\m1-\mo)^2 + (\s1-\so)^2}}.
    \end{equation}
\end{proof}

\subsubsection{Proof of \Cref{prop:closed_form_general_gaussian}} \label{proof:prop_closed_form_general_gaussian}

\begin{proof}[Proof of \Cref{prop:closed_form_general_gaussian}]
    We will use here that for any geodesic ray $\gamma$, $\lim_{t\to\infty}\ \frac{d(x,\gamma(t))+t}{2t} = 1$ (cf \citep[II. 8.24]{bridson2013metric}. Then we know that 
    \begin{equation}
        \lim_{t\to \infty}\ \frac{d(x,\gamma(t))^2-t^2}{2t} = \lim_{t\to\infty}\ \big(d(x,\gamma(t))-t\big),
    \end{equation}
    since
    \begin{equation}
        \frac{d(x,\gamma(t))^2-t^2}{2t} = \frac{(d(x,\gamma(t))-t)(d(x,\gamma(t))+t)}{2t} = \big(d(x,\gamma(t))-t\big) \frac{d(x,\gamma(t))+t}{2t} \xrightarrow[t\to \infty]{} B^\gamma(x).
    \end{equation}

    In our case, we have for any $t\ge 0$, $\mu_t =\mathcal{N}(m_t, \Sigma_t)$ where
    \begin{equation}
        \begin{cases}
            m_t = (1-t)\mo+t \m1 \\
            \Sigma_t = \big((1-t)I_d+t A\big)\So \big((1-t)I_d + t A\big),
        \end{cases}
    \end{equation}
    with $A=\So^{-\frac12}(\So^\frac12\S1\So^\frac12)^\frac12\So^{-\frac12}$ (see \emph{e.g.} \citep[Appendix A.1]{altschuler2021averaging}). Then, using $A\So A = \S1$, we have for any $t\ge 0$,
    \begin{align}
        &\frac{\|m_t-m\|_2^2}{2t} = \frac{t}{2} \|\m1-\mo\|_2^2 + \langle \m1-\mo, \mo-m\rangle + O\left(\frac{1}{t}\right), \\
        &\frac{\mathrm{Tr}(\Sigma_t)}{2t} = \frac{t}{2} \tr\big(\So-2\So A+\S1\big) + \tr\big(\So A-\So\big) + O\left(\frac{1}{t}\right) \\
        &\frac{\tr\big((\Sigma^\frac12 \Sigma_t \Sigma^\frac12)^\frac12\big)}{2t} = \frac12 \tr\left(\left(\Sigma^\frac12 (\So-\So A-A\So+\S1)\Sigma^\frac12 + O\left(\frac{1}{t}\right) \right)^\frac12\right).
    \end{align}
    Additionally, by hypothesis,
    \begin{equation}
        \W_2^2(\mu_0,\mu_1) = \|\m1-\mo\|_2^2 + \tr(\So+\S1-2(\So^\frac12 \S1 \So^{\frac12})^\frac12) = 1,
    \end{equation}
    and since
    \begin{equation}
        \So A = \So^{\frac12}(\So^\frac12 \S1 \So^\frac12)^\frac12 \So^{-\frac12},
    \end{equation}
    we get
    \begin{equation}
        \tr(\So A) = \tr\big((\So^\frac12 \S1 \So^{\frac12})^\frac12\big).
    \end{equation}
    Therefore, we obtain
    \begin{equation}
        \begin{aligned}
            &\frac{\W_2^2(\nu,\mu_t)-t^2}{2t} \\ &= \frac{\|m_t-m\|_2^2 + \tr\left(\Sigma_t+\Sigma-2(\Sigma^\frac12 \Sigma_t \Sigma^\frac12)^\frac12\right)-t^2}{2t} \\ 
            &= \frac{t}{2}\left(\|\m1-\mo\|_2^2 + \tr(\So + \S1 - 2 \So A)\right) + \langle \m1-\mo, \mo-m\rangle + \tr\big(\So A-\So\big) \\ &\quad -  \tr\left(\left(\Sigma^\frac12 (\So-\So A-A\So+\S1)\Sigma^\frac12 + O\left(\frac{1}{t}\right) \right)^\frac12\right) - \frac{t}{2} + O\left(\frac{1}{t}\right) \\
            &= \frac{t}{2} \W_2^2(\mu_0,\mu_1) + \langle \m1-\mo, \mo-m\rangle + \tr\big(\So A-\So\big) \\ &\quad -  \tr\left(\left(\Sigma^\frac12 (\So-\So A-A\So+\S1)\Sigma^\frac12 + O\left(\frac{1}{t}\right) \right)^\frac12\right) - \frac{t}{2} + O\left(\frac{1}{t}\right) \\
            &= \langle \m1-\mo, \mo-m\rangle + \tr\big(\So A-\So\big) \\ &\quad -  \tr\left(\left(\Sigma^\frac12 (\So-\So A-A\So+\S1)\Sigma^\frac12 + O\left(\frac{1}{t}\right) \right)^\frac12\right) + O\left(\frac{1}{t}\right) \\
            &\xrightarrow[t\to\infty]{}  \langle \m1-\mo, \mo-m\rangle + \tr\big(\So(A-I_d) \big) - \tr\big((\Sigma^\frac12(\So-\So A - A\So + \S1)\Sigma^\frac12)^\frac12\big).
        \end{aligned}
    \end{equation}
\end{proof}

\subsection{Proofs of Appendix}\label{appendix:proofs_appendix}

\subsubsection{Proof of \Cref{prop:bound_bdgmsw}}

\begin{proof}[Proof of \Cref{prop:bound_bdgmsw}]
    Let $\bP, \bQ\in\cP_2\big(\bw(\R^d)\big)$. Recall that for any $\theta\in S^{d-1}$, $\bP^\theta=\varphi^\theta_\#\bP$, and for any $\mu=\cN(m,\Sigma)\sim \bP$, $\varphi^\theta(\mu)=P^\theta_\#\mu=\cN(\langle m,\theta\rangle, \theta^T\Sigma\theta) = \cN(m_\theta,\sigma_\theta^2)$ where we note $m_{\mu_\theta} = \langle m,\theta\rangle$ and $\sigma_{\mu_\theta}^2 = \theta^T\Sigma\theta$.
    
    Then, for any $\theta\in S^{d-1}$ and $\eta_1=\cN(m_1,\sigma_1^2)$, let $\Gamma\in\Pi_o(\bP,\bQ)$, $\Gamma_\theta=(\varphi^\theta, \varphi^\theta)_\#\Gamma \in \Pi(\bP^\theta,\bQ^\theta)$ and $\gamma_\theta^\eta = (B^\eta,B^\eta)_\#\Gamma_\theta\in \Pi(B^\eta_\#\bP^\theta, B^\eta_\#\bQ^\theta)$. Recall that $\W_2^2(\eta_0,\eta_1)=(m_1-m_0)^2 + (\sigma_1-\sigma_0)^2 = 1$.
    
    Then, we have
    \begin{equation}
        \begin{aligned}
            \W_2^2(B^{\eta}_\#\bP^\theta, B^\eta_\#\bQ^\theta) &= \inf_{\gamma\in\Pi(B^\eta_\#\bP^\theta, B^\eta_\#\bQ^\theta)}\ \int |x-y|^2\ \mathrm{d}\gamma(x,y) \\
            &\le \int |x-y|\ \mathrm{d}\gamma_\theta^\eta(x,y) \\
            &= \int |B^\eta(\mu_\theta) - B^\eta(\nu_\theta)|^2\ \mathrm{d}\Gamma_\theta(\mu_\theta,\nu_\theta) \\
            &= \int \left| \left\langle\begin{pmatrix} m_1-m_0 \\ \sigma_1-\sigma_0 \end{pmatrix}, \begin{pmatrix} m_{\mu_\theta} - m_{\nu_\theta} \\ \sigma_{\mu_\theta} - \sigma_{\nu_\theta} \end{pmatrix} \right\rangle \right|^2 \ \mathrm{d}\Gamma_\theta(\mu_\theta,\nu_\theta)  \\
            &\le \W_2^2(\eta_0,\eta_1) \cdot \int \W_2^2(\mu_\theta,\nu_\theta)\ \mathrm{d}\Gamma(\mu_\theta,\nu_\theta) \quad \text{by Cauchy-Schwartz} \\
            &= \int \W_2^2\big(\varphi^\theta(\mu), \varphi^\theta(\nu)\big)\ \mathrm{d}\Gamma(\mu,\nu) \\
            &= \int \W_2^2(P^\theta_\#\mu, P^\theta_\#\nu)\ \mathrm{d}\Gamma(\mu,\nu) \\ 
            &\le \int \W_2^2(\mu,\nu)\ \mathrm{d}\Gamma(\mu,\nu) \quad \text{since $P^\theta$ 1-Lipschitz} \\
            &= \W_{\bw}^2(\bP,\bQ) \quad \text{since $\Gamma\in\Pi_o(\bP,\bQ)$}.
        \end{aligned}
    \end{equation}
    By integrating \emph{w.r.t.} $\lambda$, we can conclude that
    \begin{equation}
        \smwbdg^2(\bP,\bQ) \le \W_{\bw}^2(\bP,\bQ).
    \end{equation}
\end{proof}

\subsubsection{Proof of \Cref{prop:bdgmsw_distance}} \label{proof:prop_bdgmsw_distance}

\begin{proof}[Proof of \Cref{prop:bdgmsw_distance}]
    The symmetric, positivity, and triangular inequality are clear by classical arguments.

    For positive definiteness, let $\bP,\bQ \in \bigcup_{K\ge 0} \mathrm{GMM}_d(K)$ such that $\smwbdg(\bP,\bQ)=0$. Then, there exists $K\ge 0$ such that $\bP=\sum_{k=1}^K \alpha_k \delta_{\mu_k}$ and $\bQ=\sum_{k=1}^K \beta_k \delta_{\nu_k}$. 
    
    First, note that we can rewrite $\smwbdg$ as 
    \begin{equation}
        \smwbdg^2(\bP,\bQ)=\int_{S^{d-1}} \sw_2^2(\Xi_\#\bP_\theta,\Xi_\#\bQ_\theta)\big)\ \mathrm{d}\lambda(\theta),
    \end{equation}
    with $\Xi(\cN(m,\sigma^2))=(m,\sigma)$, and $\bP^\theta=\varphi^\theta_\#\bP$.
    
    Thus, $\smwbdg(\bP,\bQ)=0$ implies that for $\lambda$-almost all $\theta\in S^{d-1}$, $\sw_2^2(\bP^\theta, \bQ^\theta)=0$. However, $\sw$ is a Pullback-Euclidean Sliced-Wasserstein distance. Thus, by \citep[Proposition 26]{bonet2025sliced}, it is a distance and thus $\bP_\theta=\bQ_\theta$. Moreover, this also implies that $\W_2^2(\sum_{k=1}^K \alpha_k \delta_{P^\theta_\#\mu_k}, \sum_{k=1}^K \beta_k \delta_{P^\theta_\#\nu_k})=0$, and thus by integrating \emph{w.r.t} $\theta\in S^{d-1}$, the Sliced-Wasserstein distance between the mixtures seen in $\cP_2(\R^d)$ is equal to 0. Thus, as SW is a distance, we can conclude that $\bP=\bQ$.
\end{proof}

\subsubsection{Proof of \Cref{prop:extension_1dgaussian_ray}} \label{proof:prop_extension_1dgaussian_ray}

\begin{proof}[Proof of \Cref{prop:extension_1dgaussian_ray}]
    Let $\mu_0=\mathcal{N}(m_0,\sigma_0^2)$ and $\mu_1=\mathcal{N}(m_1, \sigma_1^2)$ such that %
    $\sigma_1> \sigma_0$. Extending the geodesic between $\mu_0$ and $\mu_1$ on $]1-\alpha, 0]$ for $\alpha > 1$ is equivalent to extending the geodesic between $\mu_1$ and $\mu_0$ on $[0, \alpha[$. Thus, we first find a condition to extend the geodesic between $\mu_1$ and $\mu_0$.

    The Monge map $\Tilde{T}$ between $\mu_1$ and $\mu_0$ is defined for all $x\in\mathbb{R}$ as $\Tilde{T}(x) = \frac{\sigma_0}{\sigma_1}(x-m_1) + m_0 = h'(x)$ with $h:x\mapsto \frac{\sigma_0}{2\sigma_1}(x-m_1)^2 + m_0 x$.  Then, by \citep[Section 4]{natale2022geodesic}, we know that we can extend the geodesic linking $\mu_1$ to $\mu_0$ on $[0,\alpha[$ for $\alpha\ge 1$ if and only if $h$ is $\frac{\alpha-1}{\alpha}$-convex, \emph{i.e.} if and only if
    \begin{equation} \label{eq:inequality_alpha}
        h''(x) - \frac{\alpha -1}{\alpha} \ge 0 \iff \frac{\sigma_0}{\sigma_1} \ge \frac{\alpha-1}{\alpha} \iff \frac{\sigma_1}{\sigma_0} \le \frac{\alpha}{\alpha-1}.
    \end{equation}

    Therefore, we deduce that we can extend the geodesic ray starting from $\mu_0$ and passing through $\mu_1$ at $t=1$ on $]-(\alpha-1), +\infty[$ if and only if $\frac{\alpha}{\alpha - 1} \ge \frac{\sigma_1}{\sigma_0} \ge 1$ (the last inequality results from the geodesic ray condition $\sigma_1\ge \sigma_0$).
    Thus, we find that the largest possible value $\alpha> 1$ satisfying inequality \eqref{eq:inequality_alpha} is $\frac{\sigma_1}{\sigma_1-\sigma_0}$ as
    \begin{equation}
        \begin{aligned}
            \frac{\alpha}{\alpha -1} \ge \frac{\sigma_1}{\sigma_0} \iff \alpha \frac{\sigma_0-\sigma_1}{\sigma_0} \ge - \frac{\sigma_1}{\sigma_0} \iff \alpha \le \frac{\sigma_1}{\sigma_1-\sigma_0},
        \end{aligned}
    \end{equation}
    and for $\alpha=\frac{\sigma_1}{\sigma_1-\sigma_0}$, %
    $1-\alpha = -\frac{\sigma_0}{\sigma_1-\sigma_0}$. Hence the geodesic ray can be extended at least over the interval $]-\frac{\sigma_0}{\sigma_1-\sigma_0}, +\infty[$.
\end{proof}

\section{EXPERIMENTS ON LABELED DATASETS} \label{appendix:xp_details}

We report here experimental details of the experiments done in \Cref{sec:xps}. We begin with comparing the runtime of the different sliced distances. Then, we detail the experiment of correlation between the sliced distances and OTDD, as well as the gradient flows and transfer learning experiments. All the experiments are done on a Nvidia v100 GPU.

\begin{figure}[t]
    \centering
    \includegraphics[width=0.5\linewidth]{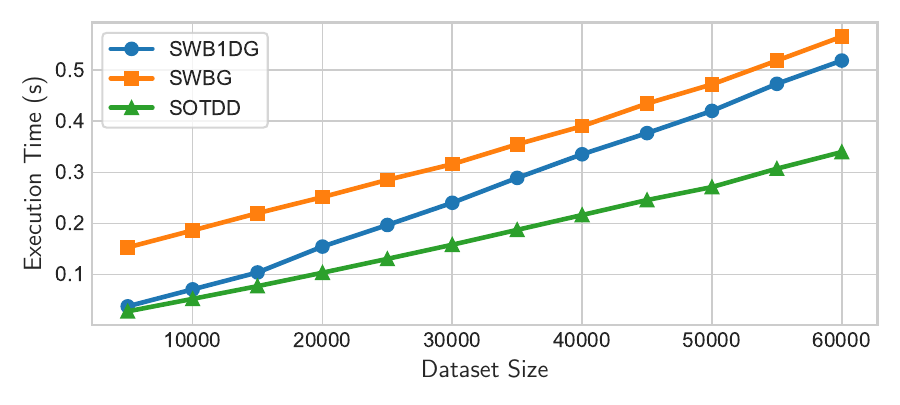}
    \caption{Comparison of the runtime in second between $\sotdd$, $\swbdg$ and $\swbg$ on subsets of MNIST.}
    \label{fig:runtime_sotdd_vs_swbdg}
\end{figure}

\subsection{Runtimes}

The theoretical computational complexity between $\sotdd$, $\swbdg$ and $\swbg$ is about the same \emph{w.r.t} the total number of samples $n$, \emph{i.e.} $\cO\big(Ln(\log n + d)\big)$. We verify this on \Cref{fig:runtime_sotdd_vs_swbdg} by plotting the runtime between subsets of MNIST with different number of samples. We observe the same asymptotic runtime, which are super linear. The runtimes reported are averaged over 10 tries, but appear stable. For $\swbg$, we used a dimension reduction in $\R^{10}$ with TSNE.

\subsection{Correlation} \label{appendix:xp_correlation}

Drawing inspiration from the experiment of \citep{nguyen2025lightspeed}, we consider OTDD to be the ideal distance between datasets and we aim to approximate it, or at least, we want to obtain bahavior similar to that of OTDD while being more efficient to compute. To assess the similarity between the sliced distances and OTDD, we measure their correlation.

Our protocol is the following, we first subsample a dataset, with random batchs of size between 5000 and 10000 samples. Then, we compute both the OTDD distance and the sliced distances between the pairs of batchs. Finally, we compute the Pearson and Spearman correlations to have a quantified value of the correlation. For pairs of data $(x_i,y_i)_{i=1}^n$, noting $\Bar{x}^n = \frac1n\sum_{i=1}^n x_i$ and $\Bar{y}^n = \frac1n\sum_{i=1}^n y_i$, the Pearson correlation $\rho_P$ is defined as
\begin{equation}
    \rho_P = \frac{\sum_{i=1}^n (x_i-\Bar{x}^n)(y_i-\Bar{y}^n)}{\sqrt{\sum_{i=1}^n (x_i - \Bar{x}^n)^2} \sqrt{\sum_{i=1}^n (y_i - \Bar{y}^n)^2}} = \frac{\mathrm{Cov}(X, Y)}{\sigma_X\sigma_Y},
\end{equation}
and is equal to $\pm 1$ when both quantities are linearly correlated. The Spearman correlation $\rho_S$ is defined similarly, but between the order statistics, \emph{i.e.} $(x_{\sigma_X(i)}, y_{\sigma_Y(i)})_{i=1}^n$ with $x_{\sigma_X(1)}\le \dots \le x_{\sigma_X(n)}$ and $y_{\sigma_Y(1)}\le \dots \le y_{\sigma_Y(n)}$ the sorted samples. This allows to recover if the quantities are correlated also in a non affine way. We use \texttt{SciPy} \citep{2020SciPy-NMeth} for their computation in practice.

OTDD and SOTDD are computed using the code from \citet{nguyen2025lightspeed} available at \url{https://github.com/hainn2803/s-OTDD}. For OTDD, the inner distances between the classes are computed by using the \texttt{POT} solver \citep{flamary2021pot, flamary2024pot} of the Wasserstein distance. The outer loss is computed using  Sinkhorn with an entropic regularization of $\varepsilon=10^{-3}$ and is debiased. SOTDD is computed using 5 moments. We report the results of all the sliced distances with 5000 projections on CIFAR10 on \Cref{fig:correlationcifar10}, and with $L\in\{10,50,100,500,1000,5000\}$ projections on CIFAR10 on \Cref{fig:correlation_cifar10_full} and on MNIST on \Cref{fig:correlation_mnist_full}. On MNIST, we use in every cases linear projections, while on CIFAR10, we use convolution projections \citep{nguyen2022revisiting}. We use the code of \citep{nguyen2025lightspeed} for the choice of the random convolutions. For SWBG, we embed labels as Gaussian in $\R^{10}$ using TSNE, with the \texttt{TorchDR} library \citep{torchdr}.

To obtain reasonably meaningful and statistically significant results, we report the results for 200 pairs of datasets on Figures \ref{fig:correlationcifar10}, \ref{fig:correlation_mnist_full} and \ref{fig:correlation_cifar10_full}. In each case, we observe that SWB1DG and SWBG outperform in general SOTDD. Moreover, both the Spearman correlation $\rho_S$ and the Pearson correlation $\rho_P$ are close to 1, indicating an affine correlation. To emphasize this linear correlation, we also fit a linear regression on each of these figures.

Finally, we report the results by averaging over 10 sets of 50 bootstrapped pairs over the 200 initial ones. We report the results for different number of projections on \Cref{tab:projs_vs_corr} on MNIST and FashionMNIST. We observe again the superior results of SWB1DG and SWBG over Sliced-OTDD for a much smaller number of projections.

\begin{figure}[H]
    \centering
    \includegraphics[width=0.45\linewidth]{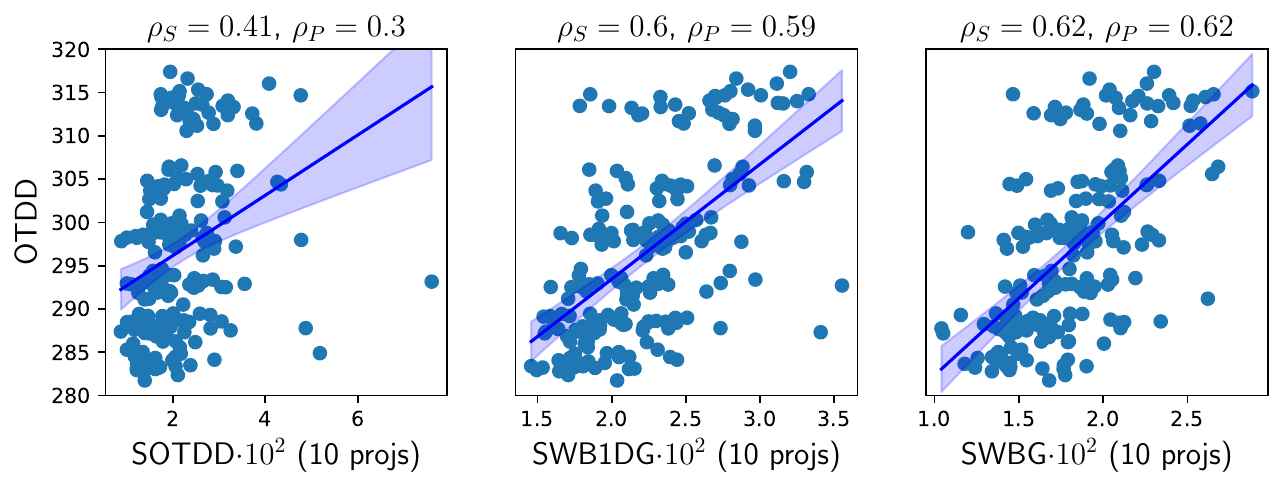}
    \includegraphics[width=0.45\linewidth]{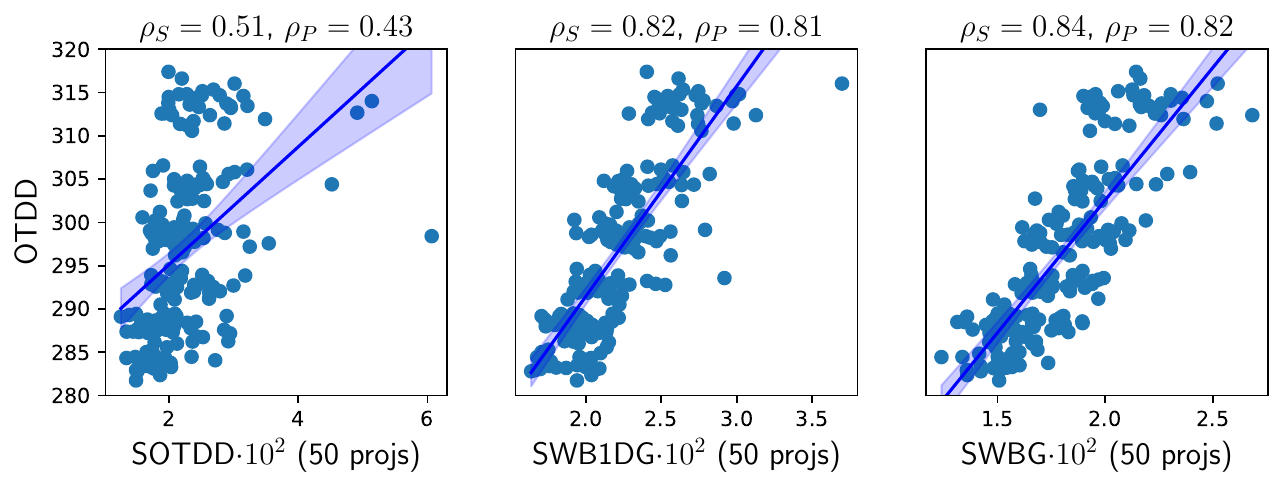}
    \includegraphics[width=0.45\linewidth]{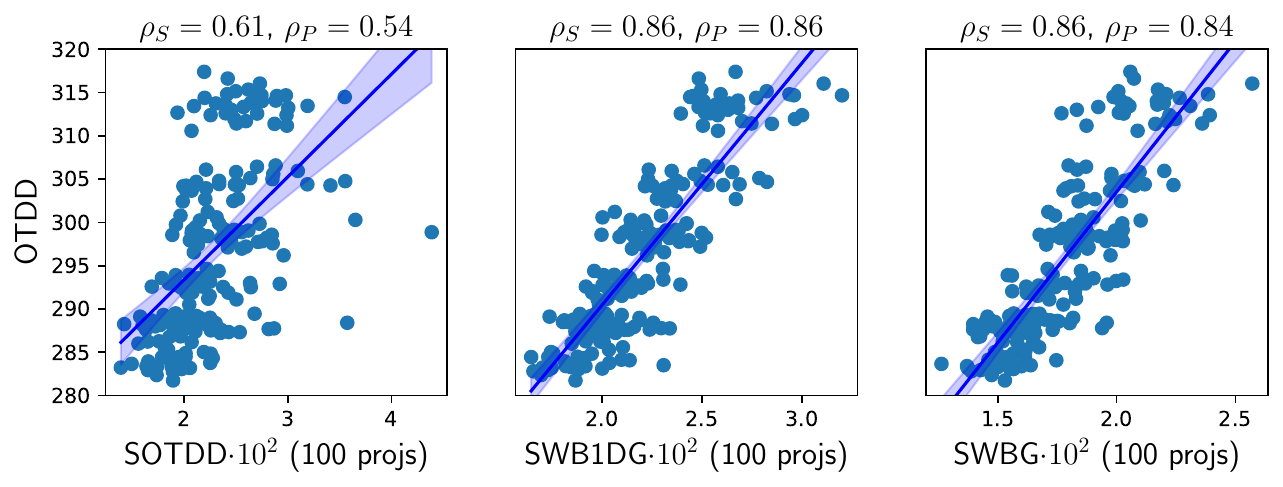}
    \includegraphics[width=0.45\linewidth]{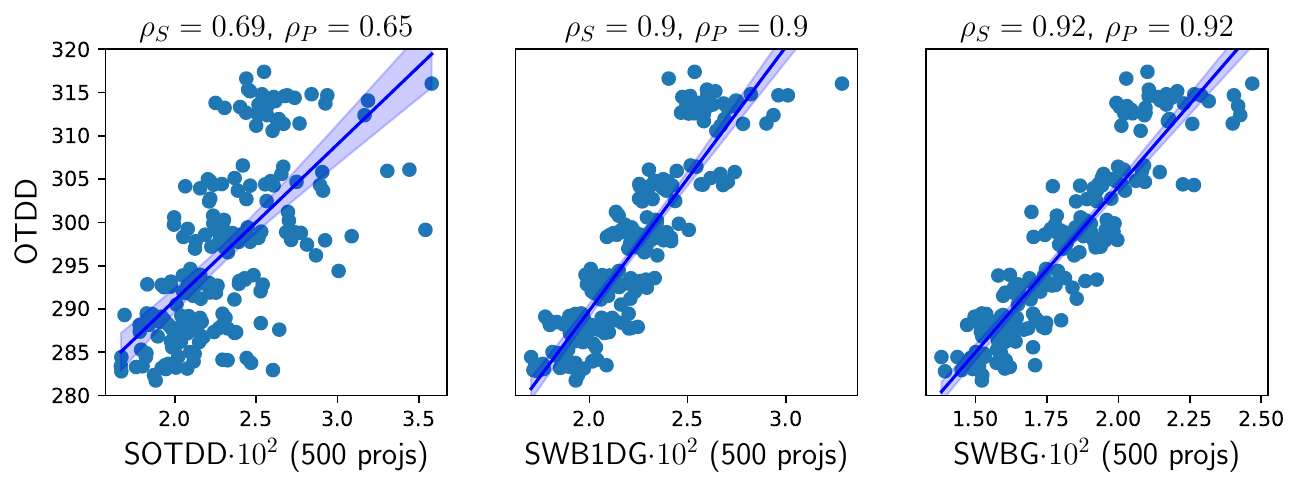}
    \includegraphics[width=0.45\linewidth]{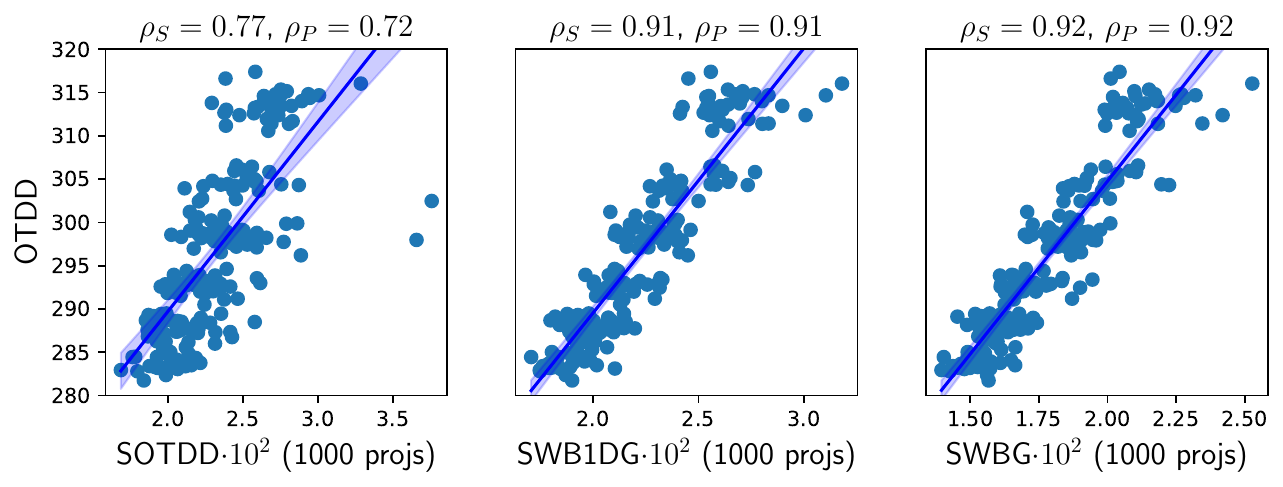}
    \includegraphics[width=0.45\linewidth]{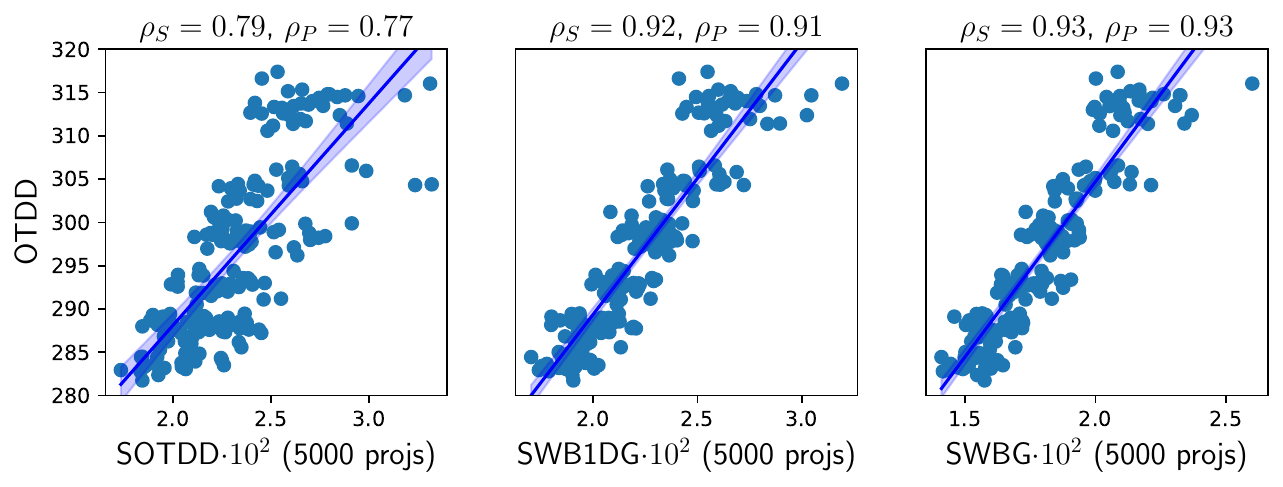}
    \caption{Distance correlation between s-OTDD, SWB1DG, SWBG and OTDD (exact) between subdatasets of MNIST.}
    \label{fig:correlation_mnist_full}
\end{figure}

\begin{figure}[H]
    \centering
    \includegraphics[width=0.45\linewidth]{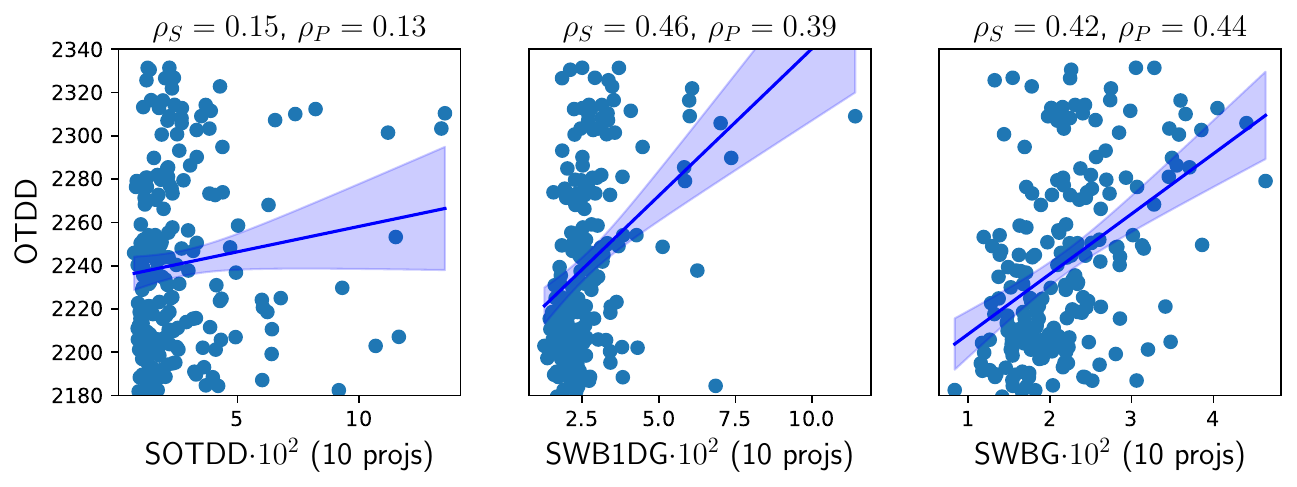}
    \includegraphics[width=0.45\linewidth]{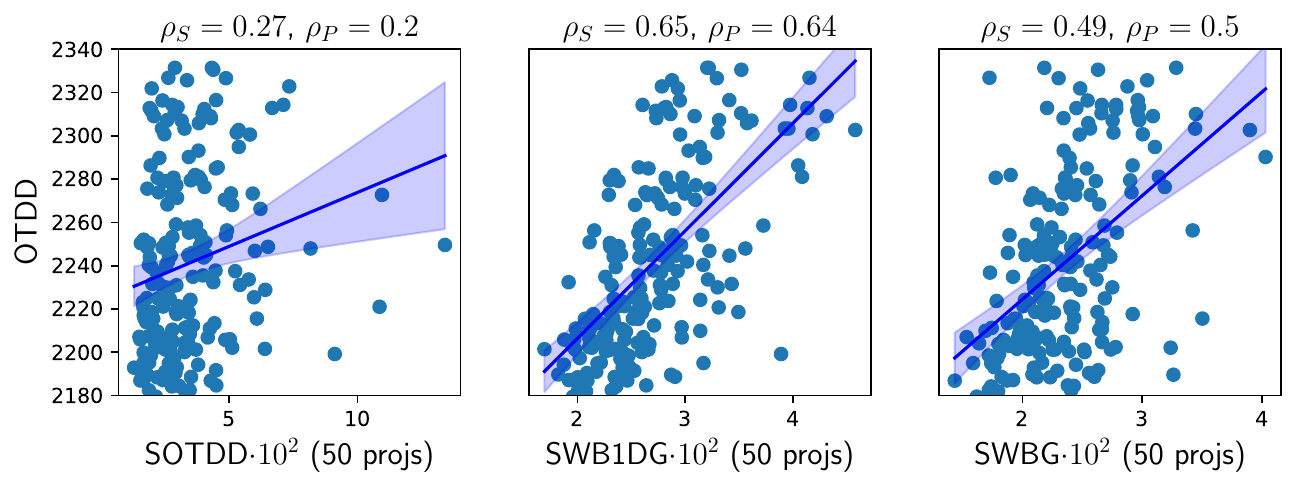}
    \includegraphics[width=0.45\linewidth]{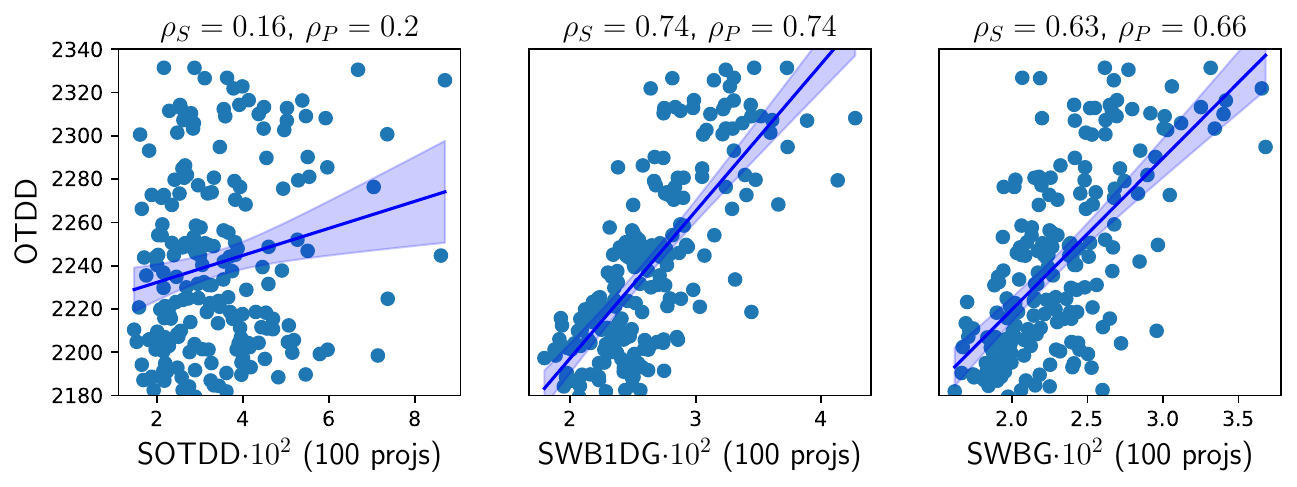}
    \includegraphics[width=0.45\linewidth]{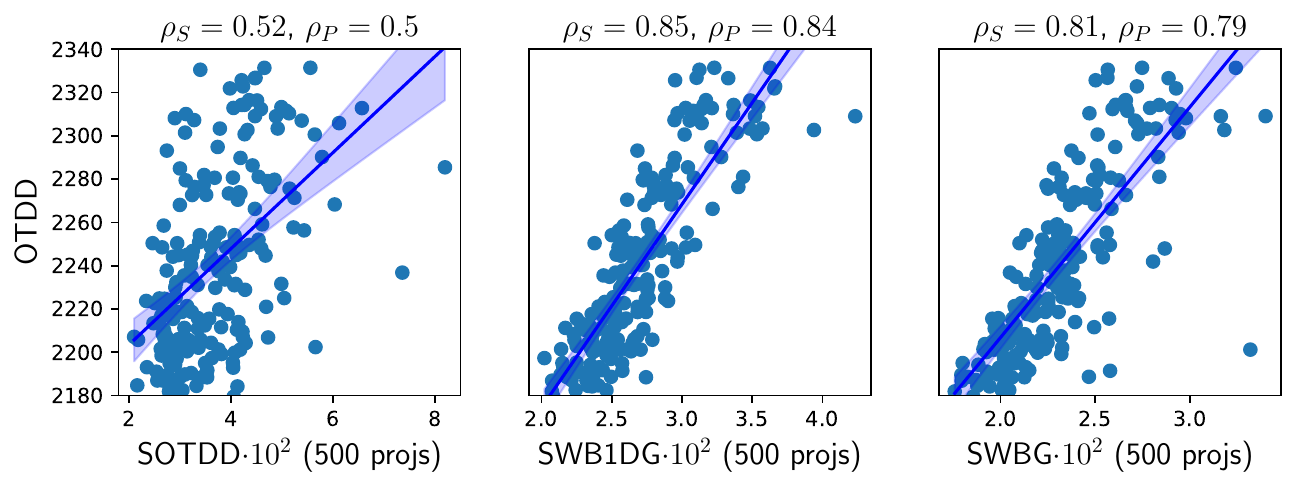}
    \includegraphics[width=0.45\linewidth]{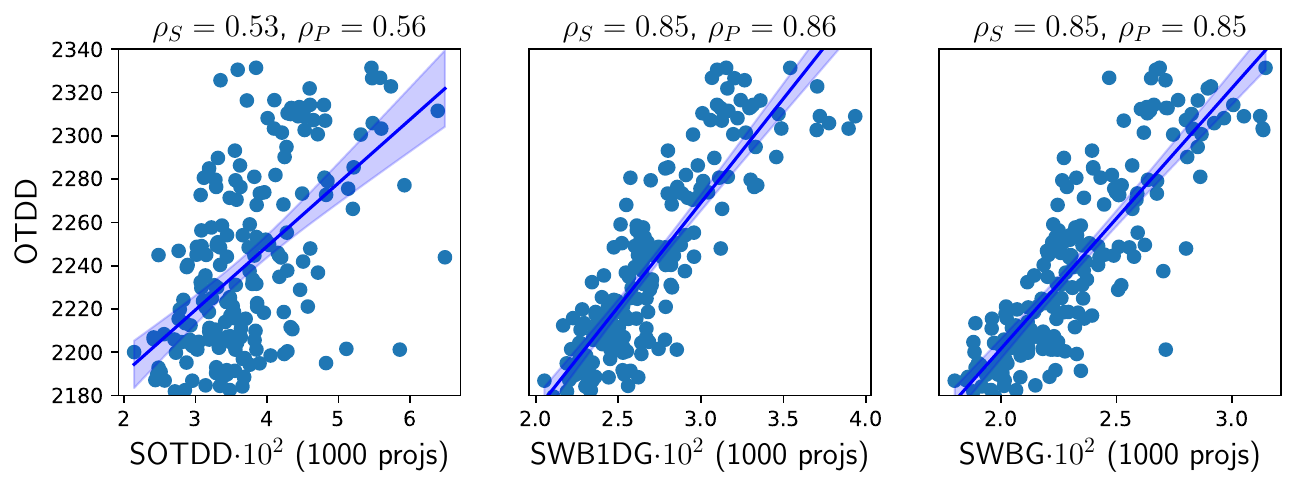}
    \includegraphics[width=0.45\linewidth]{Figures/Correlation/correlation_CIFAR10_5000projs.pdf}
    \caption{Distance correlation between s-OTDD, SWB1DG, SWBG and OTDD (exact) between subdatasets of CIFAR10.}
    \label{fig:correlation_cifar10_full}
\end{figure}

\begin{table*}[t]
    \centering
    \caption{Correlation for different number of projections averaged over 10 datasets of 50 bootstrapped pairs.}
    \resizebox{\linewidth}{!}{
        \begin{tabular}{ccccccccccccccccc}
             Projections & & \multicolumn{7}{c}{MNIST} & & \multicolumn{7}{c}{CIFAR10} \\ 
             & & \multicolumn{3}{c}{Spearman correlation $(\rho_S)$} & & \multicolumn{3}{c}{Pearson correlation ($\rho_P$)} & & \multicolumn{3}{c}{Spearman correlation $(\rho_S)$} & & \multicolumn{3}{c}{Pearson correlation ($\rho_P$)}  \\
             & & SOTDD & SWB1DG & SWBG & & SOTDD & SWB1DG & SWBG & & SOTDD & SWB1DG & SWBG & & SOTDD & SWB1DG & SWBG \\ \toprule
             10 & & $44.4_{\pm 6.4}$ & $57.8_{\pm 11.3}$ & $\textbf{61.5}_{\pm 10.4}$ & & $36.2_{\pm 12.2}$ & $60.2_{\pm 12.2}$ & $\textbf{61.3}_{\pm 13.1}$ & & $14.0_{\pm 11.3}$ & $\textbf{44.3}_{\pm 10.8}$ & $40.2_{\pm 12.2}$ & & $16.0_{\pm 12.9}$ & $38.6_{\pm 14.6}$ & $\textbf{42.7}_{\pm 9.5}$ \\
             50 & & $42.5_{\pm 8.9}$ & $81.6_{\pm 4.5}$ & $\textbf{83.4}_{\pm 2.4}$ & & $39.8_{\pm 11.5}$ & $81.7_{\pm 4.2}$ & $\textbf{82.8}_{\pm 2.0}$ & & $30.5_{\pm 12.9}$ & $\textbf{62.6}_{\pm 6.4}$ & $40.4_{\pm 9.8}$ & & $25.2_{\pm 11.4}$ & $\textbf{63.6}_{\pm 6.3}$ & $42.8_{\pm 8.5}$ \\
             100 & & $62.6_{\pm 7.4}$ & $84.0_{\pm 6.2}$ & $\textbf{84.3}_{\pm 3.5}$ & & $55.9_{\pm 9.5}$ & $\textbf{86.0}_{\pm 3.8}$ & $83.7_{\pm 3.4}$ & & $15.5_{\pm 11.8}$ & $\textbf{71.9}_{\pm 6.4}$ & $68.1_{\pm 7.2}$ & & $21.0_{\pm 11.4}$ & $\textbf{73.9}_{\pm 5.5}$ & $72.8_{\pm 5.4}$ \\
             500 & & $67.0_{\pm 7.5}$ & $90.4_{\pm 1.7}$ & $\textbf{91.2}_{\pm 1.6}$ & & $64.6_{\pm 7.0}$ & $90.4_{\pm 2.1}$ & $\textbf{91.6}_{\pm 2.0}$ & & $52.1_{\pm 8.1}$ & $\textbf{82.3}_{\pm 2.2}$ & $78.4_{\pm 6.0}$ & & $54.6_{\pm 8.8}$ & $\textbf{83.5}_{\pm 2.1}$ & $79.4_{\pm 7.7}$ \\
             1000 & & $77.7_{\pm 4.1}$ & $89.6_{\pm 1.5}$ & $\textbf{91.0}_{\pm 1.5}$ & & $75.6_{\pm 8.3}$ & $91.5_{\pm 1.3}$ & $\textbf{92.1}_{\pm 1.5}$ & & $52.0_{\pm 10.9}$ & $83.6_{\pm 4.8}$ & $\textbf{83.7}_{\pm 5.0}$ & & $53.1_{\pm 11.3}$ & $\textbf{85.6}_{\pm 3.5}$ & $84.9_{\pm 4.8}$ \\
             5000 & & $78.8_{\pm 6.2}$ & $91.4_{\pm 1.8}$ & $\textbf{92.6}_{\pm 1.6}$ & & $77.8_{\pm 5.8}$ & $91.4_{\pm 1.5}$ & $\textbf{93.0}_{\pm 1.6}$ & & $72.2_{\pm 7.5}$ & $88.5_{\pm 4.8}$ & $\textbf{89.3}_{\pm 3.8}$ & & $75.4_{\pm 5.5}$ & $87.8_{\pm 2.8}$ & $\textbf{89.0}_{\pm 2.4}$ \\
             10000 & & $78.7_{\pm 3.9}$ & $89.8_{\pm 0.2}$ & $\textbf{91.3}_{\pm 1.5}$ & & $78.9_{\pm 4.3}$ & $91.1_{\pm 2.0}$ & $\textbf{92.6}_{\pm 1.7}$ & & $72.6_{\pm 6.1}$ & $82.7_{\pm 4.8}$ & $\textbf{86.7}_{\pm 3.0}$ & & $77.1_{\pm 4.3}$ & $87.3_{\pm 2.8}$ & $\textbf{90.2}_{\pm 2.3}$ \\
             \bottomrule
        \end{tabular}
    }
    \label{tab:projs_vs_corr}
\end{table*}

\subsection{Gradient Flows on Rings} \label{appendix:rings}

The ring dataset is composed of 3 rings, where each ring is seen as a class of 80 samples. Thus, the target is $\bQ=\frac13 \delta_{\hat{\nu}_1} + \frac13\delta_{\hat{\nu}_2}+\frac13\delta_{\hat{\nu}_3}$ with $\hat{\nu}_c = \frac{1}{n}\sum_{i=1}^n \delta_{x_{i,c}}$ and $n=80$. On \Cref{fig:flow_rings}, we evenly sample the points on each ring, and thus have always the same target. On \Cref{fig:cv_rings}, we sampled 80 points uniformly on each ring, which means they may not be evenly spaced. In addition, we averaged the results over 100 different samples of the target.

We compare on \Cref{fig:cv_rings} the convergence of the WoW gradient flows of $\sotdd$, $\swbdg$ and $\swbg$ towards the target. These flows are approximated by performing a WoW gradient descent (see Appendix \ref{appendix:bg_wow} for details) over $\bP=\frac13\delta_{\mu_1}+\frac13\delta_{\mu_2}+\frac13\delta_{\mu_3}$ with each $\mu_c$ of the form $\mu_c=\frac1n\sum_{i=1}^n \delta_{x_{i,c}}$ for $n=80$, with 1000 iterations, and a step size of $\tau=1$.

We also tried minimizing $\sotdd$, $\swbdg$ and $\swbg$ with $\alpha_1=0$, \emph{i.e.} comparing the measures on $\cPP{\R^d}$ instead of on $\cPProd{\R^d}$. However, while it may be enough to compare the distributions, the sliced distances were not capable in this case to flow the points towards the rings. The flows only recovered roughly the means, and the first moments, but the particles did not exactly match the target rings. Thus, it is important to use the representations on $\cPProd{\R^d}$ to flow datasets with these distances.

\begin{table}[t]
    \centering
    \caption{Best hyperparameters on the $k$-shot transfer learning experiments for $\swbdg$ and $\sotdd$.}
    \resizebox{\linewidth}{!}{
        \begin{tabular}{ccccccccccccccccccccccc}
            & & \multicolumn{9}{c}{$\swbdg$} & & & \multicolumn{9}{c}{$\sotdd$} \\
            & & \multicolumn{4}{c}{MNIST to FMNIST} & & \multicolumn{4}{c}{MNIST to USPS} & & &\multicolumn{4}{c}{MNIST to FMNIST} & & \multicolumn{4}{c}{MNIST to USPS} \\
            & & $k=1$ & $k=5$ & $k=10$ & $k=100$ & & $k=1$ & $k=5$ & $k=10$ & $k=100$ & & & $k=1$ & $k=5$ & $k=10$ & $k=100$ & & $k=1$ & $k=5$ & $k=10$ & $k=100$ \\ \toprule
            Projections & & 5K & 5K & 1K & 10K & & 10K & 10K & 1K & 10K & & & 10K & 5K & 10K & 1K & & 5K & 10K & 10K & 10K \\
            Gradient steps & & 1K & 10K & 10K & 20K & & 1K & 20K & 20K & 20K & & & 10K & 10K & 10K & 20K & & 1K & 10K & 10K & 20K \\
            \bottomrule
        \end{tabular}
    }
        \label{tab:hyperparams_tf_swbdg}
\end{table}

\subsection{$k$-shot Learning} \label{appendix:xp_transfer_learning}

We provide the details of the $k$-shot learning experiment, inspired from \citep{alvarez2021dataset}. In this task, we want to learn a classifier on a dataset from which we have access to $k$ samples by class, where $k$ is typically small. \citet{alvarez2021dataset} proposed to solve this task by flowing a source dataset, from which we have access to more samples by class, towards the dataset of interest, hence augmenting each class with new additional images. 

Let $\cD^*$ be the dataset of interest and $C\in \mathbb{N}^*$ its number of classes. Let us denote $\bQ=\frac{1}{C}\sum_{c=1}^C \delta_{\nu_c^k} = \psi(\mu_{\cD^*})$ with $\nu_c^k=\frac1k \sum_{\ell=1}^k \delta_{y_\ell^{c}}$ the target dataset, considered here on $\cPP{\R^d}$. \citet{alvarez2021dataset} proposed to flow the datasets by minimizing $\bF(\bP)=\otdd(\bP,\bQ)$ starting from $\bP_0=\psi(\mu_{\cD_0})$ with $\mu_{\cD_0}$ an initial dataset with $n\gg k$ samples by class, \emph{i.e.} $\bP_0 = \frac1C\sum_{c=1}^C \delta_{\mu_c^{n,0}}$ and $\mu_c^{n,0} = \frac1n \sum_{i=1}^n \delta_{x_i^{c,0}}$. \citet{alvarez2021dataset} proposed several strategies to minimize $\otdd$. For instance, approximating the label distributions by Gaussian, they minimized it using gradient flows on the product space $\R^d \times \R^d \times S_d^{++}(\R)$, by flowing simultaneously the samples, the means and the covariances in a decouple way.

Following \citep{alvarez2021dataset}, several works proposed to solve this task. In particular, \citet{hua2023dynamic} solved it by minimizing the MMD with a Gaussian kernel over the product space $\R^d \times \R^p \times S_p^{++}(\R)$, using a dimension reduction to obtain embedding of the labels in a space of dimension $p\ll d$, and using a Riemannian gradient descent on the Bures-Wasserstein space for the covariance part. More recently, \citet{bonet2025flowing} proposed to minimize an MMD on $\cPP{\R^d}$ with kernels on $\cP_2(\R^d)$. In particular, they used the SW-Riesz kernel $K(\mu,\nu)=-\sw_2(\mu,\nu)$, and used a Wasserstein over Wasserstein Gradient descent to minimize it, endowing $\cPP{\R^d}$ with $\Ww$.

In this work, we also perform the gradient descent in $(\Ww, \cPP{\R^d})$ following the theory derived in \citep{bonet2025flowing}, and minimize $\sotdd$ and $\swbdg$. The scheme to minimize $\bF(\bP)=\mathrm{D}(\bP,\bQ)$ with $\mathrm{D}$ any differentiable divergence on $\cPP{\R^d}$ is, for any $\ell\ge 0$, $c \in \{1,\dots,C\}$ and $i\in \{1,\dots,n\}$, and step size $\tau>0$,
\begin{equation}
    x_{i}^{c, \ell+1} = x_i^{c,\ell} - \tau \gWw \bF(\bP^\ell)(\mu_c^{n,\ell})(x_i^{c,\ell}),
\end{equation}
where the Wasserstein over Wasserstein gradient $\gWw \bF(\bP^\ell)(\mu_c^{n,\ell})(x_i^{c,\ell})$  is obtained by rescaling the Euclidean gradient of $F(\textbf{x}^\ell) = \bF(\bP^\ell)$ for $\textbf{x}_c^\ell = (x_i^{c,\ell})_{i,c}$, \emph{i.e.} $\gWw \bF(\bP^\ell)(\mu_c^{n,\ell})(x_i^{c,\ell}) = nC\nabla F(\textbf{x}_c^\ell)$ \citep[Proposition B.7]{bonet2025flowing}. In practice, $\nabla F$ is obtained by backpropagation.

\begin{figure}[t]
    \centering
    \hspace*{\fill}
    \subfloat{\includegraphics[width=0.2\linewidth]{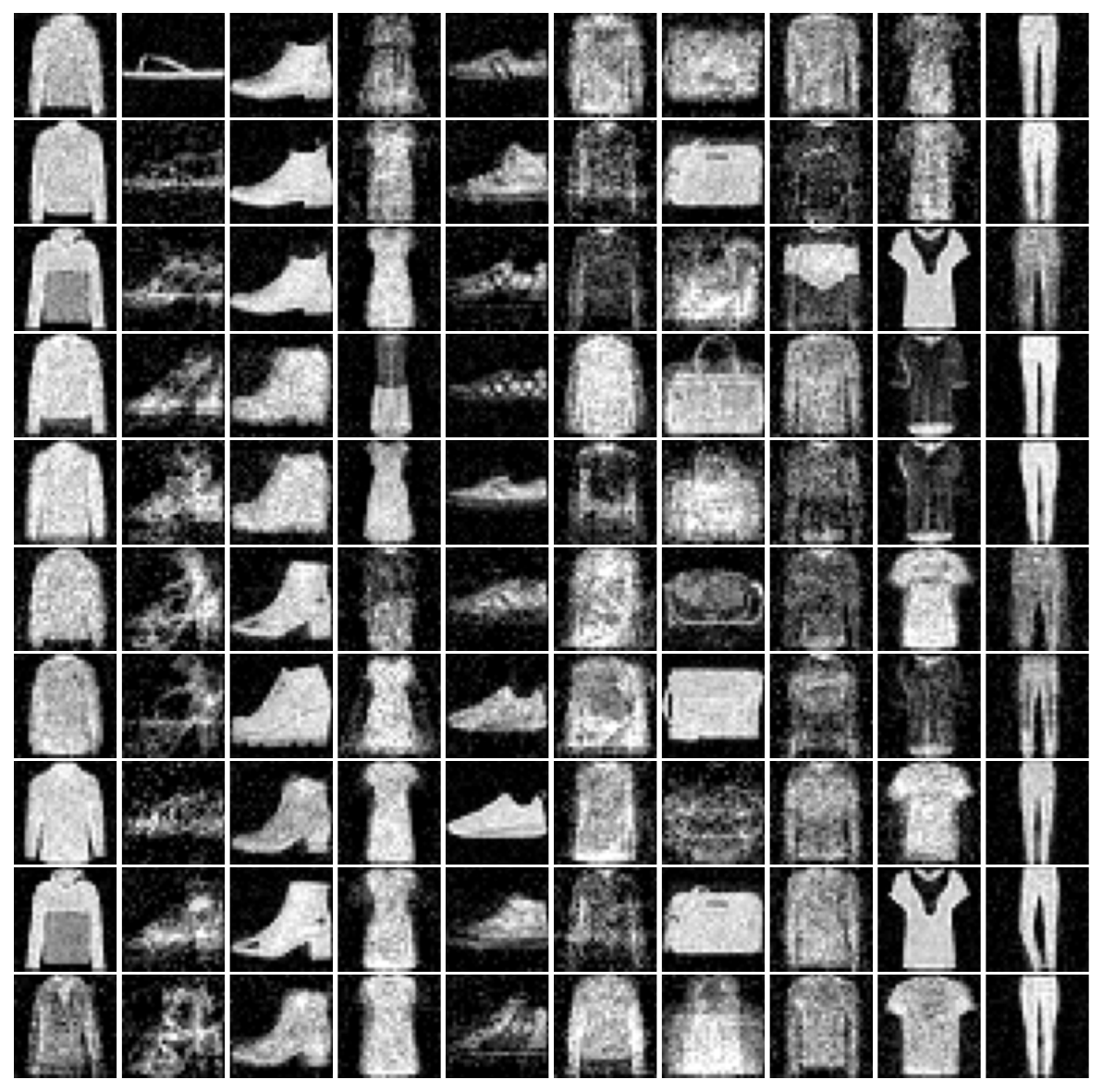}} \hfill
    \subfloat{\includegraphics[width=0.2\linewidth]{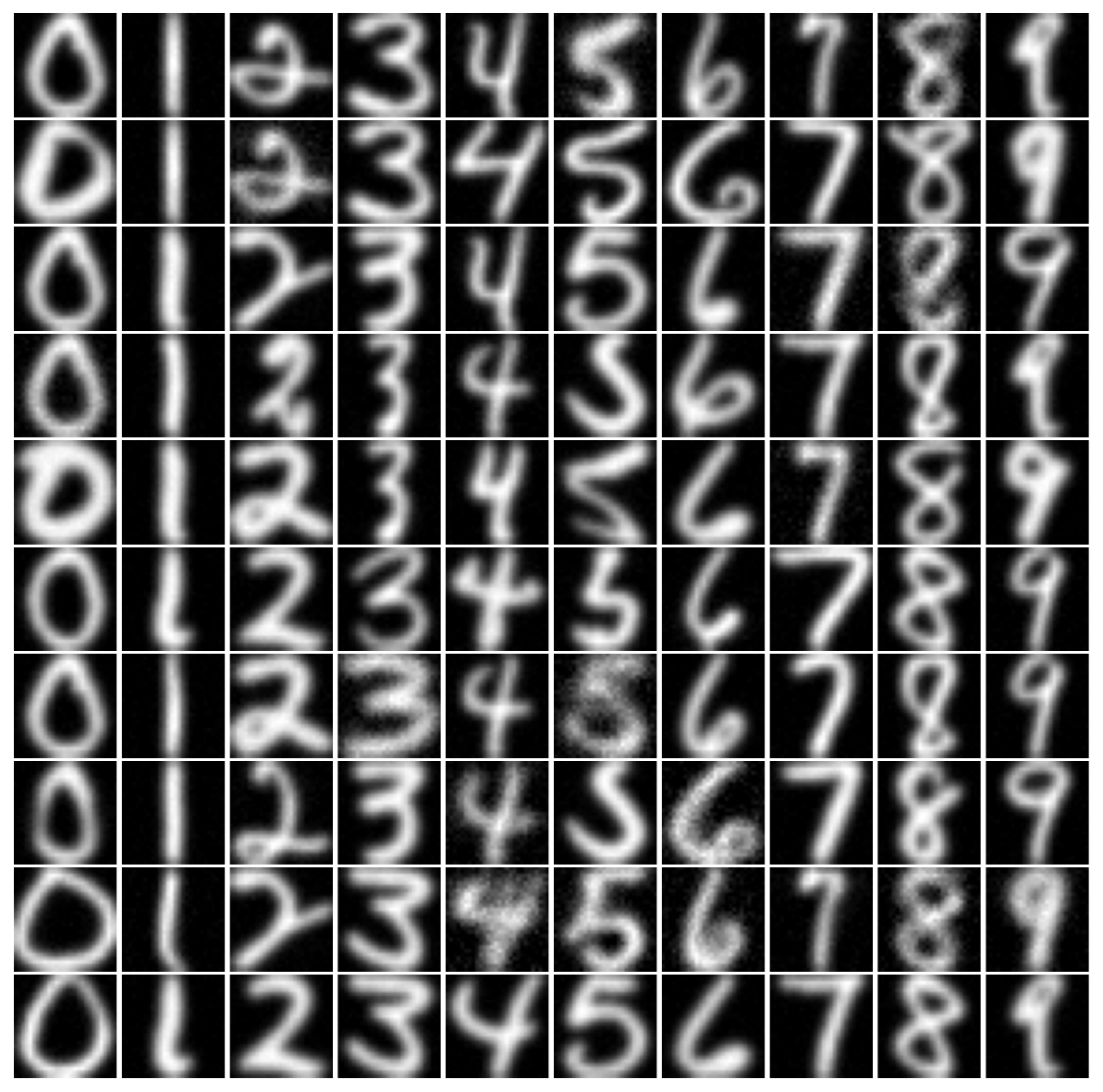}}
    \hfill
    \subfloat{\includegraphics[width=0.2\linewidth]{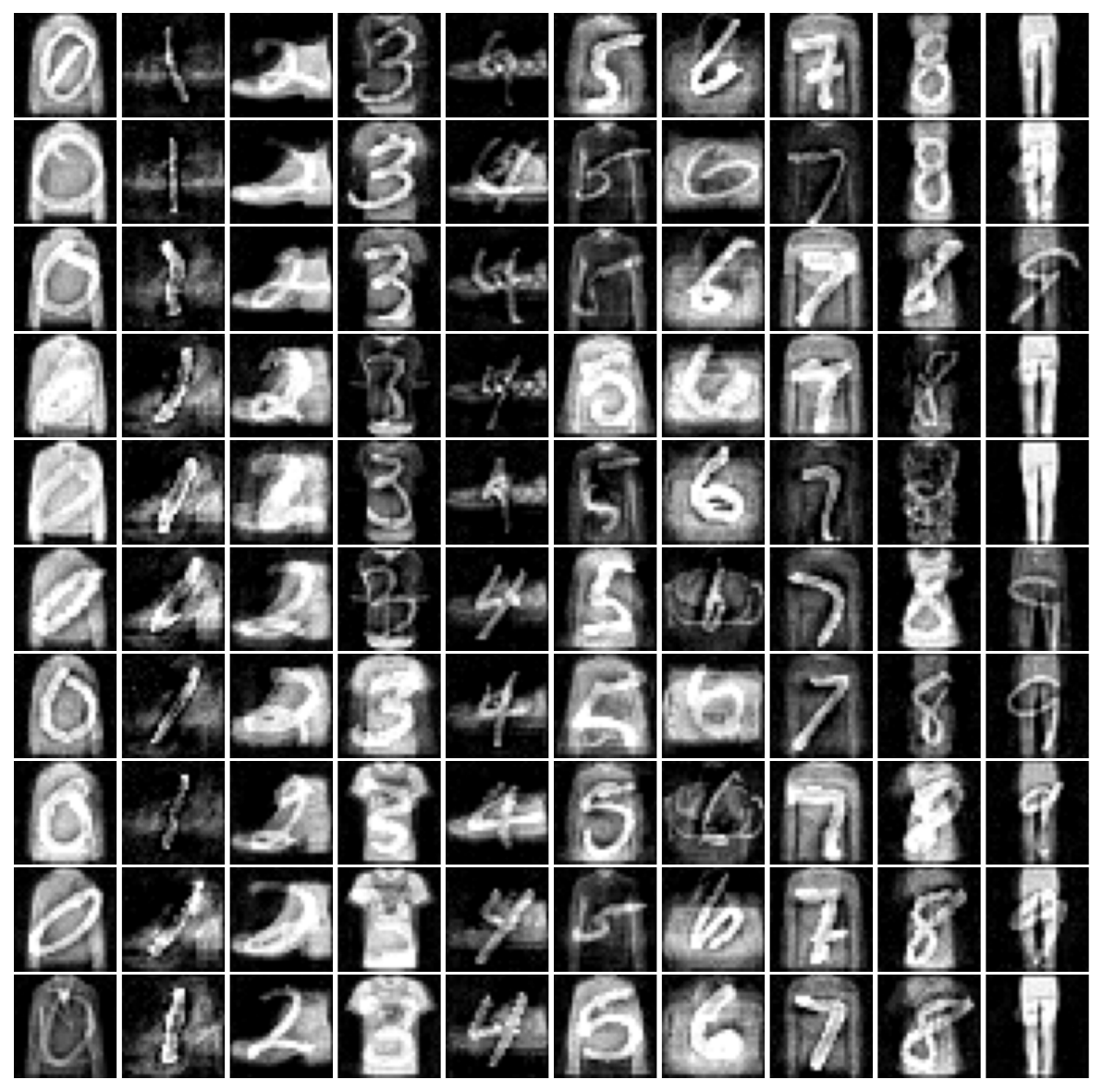}} \hfill
    \subfloat{\includegraphics[width=0.2\linewidth]{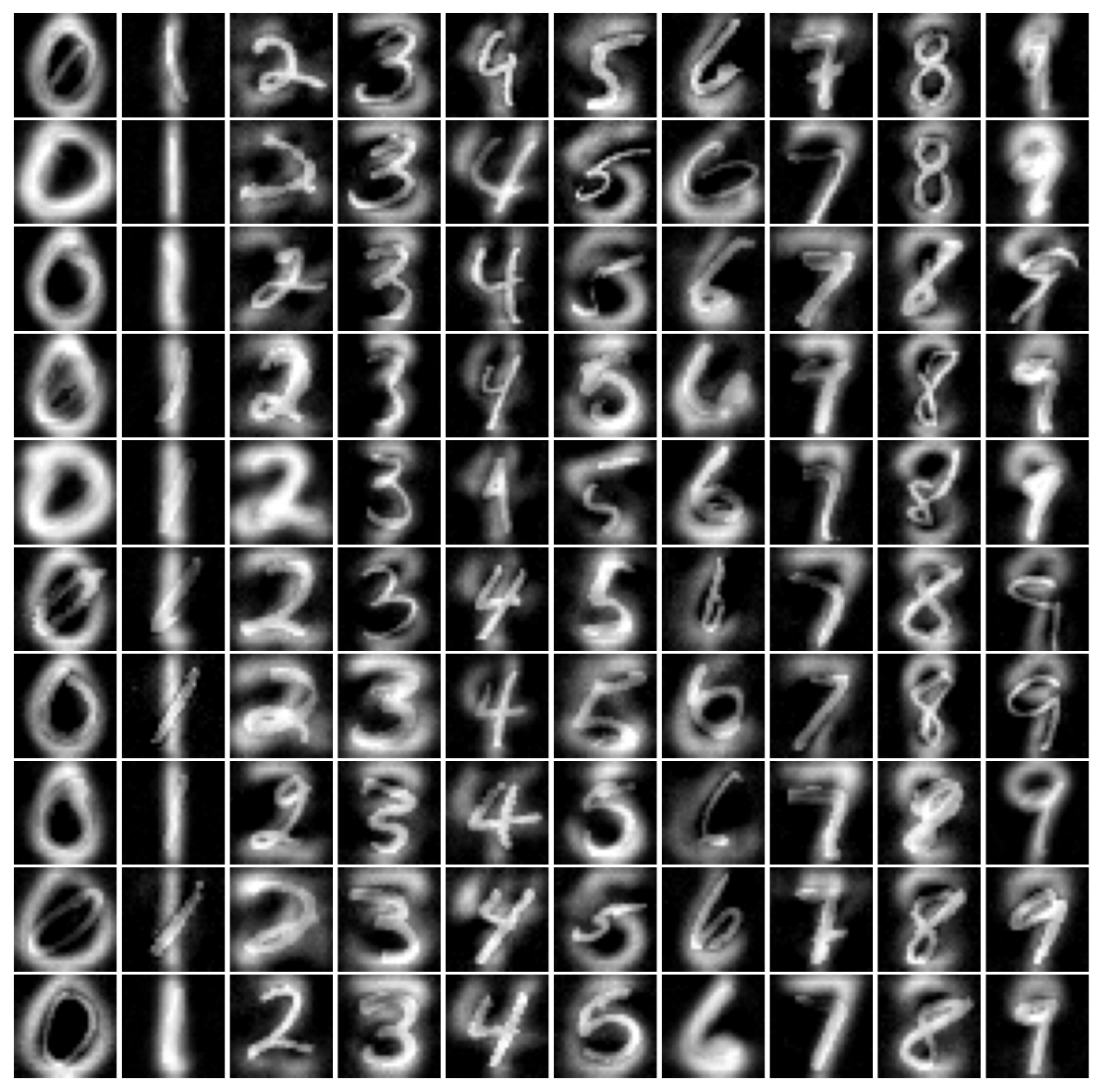}}
    \hspace*{\fill}
    \hspace*{\fill}
    \caption{Examples of images output by the flows for the transfer learning task on Fashion MNIST and USPS with $k=10$ and the best performing hyperparameters, for $\swbdg$ (\textbf{Left}) and $\sotdd$ (\textbf{Right}).}
    \label{fig:imgs_tf_swbdg_best}
\end{figure}

\begin{figure}[t]
    \centering
    \hspace*{\fill}
    \subfloat{\includegraphics[width=0.2\linewidth]{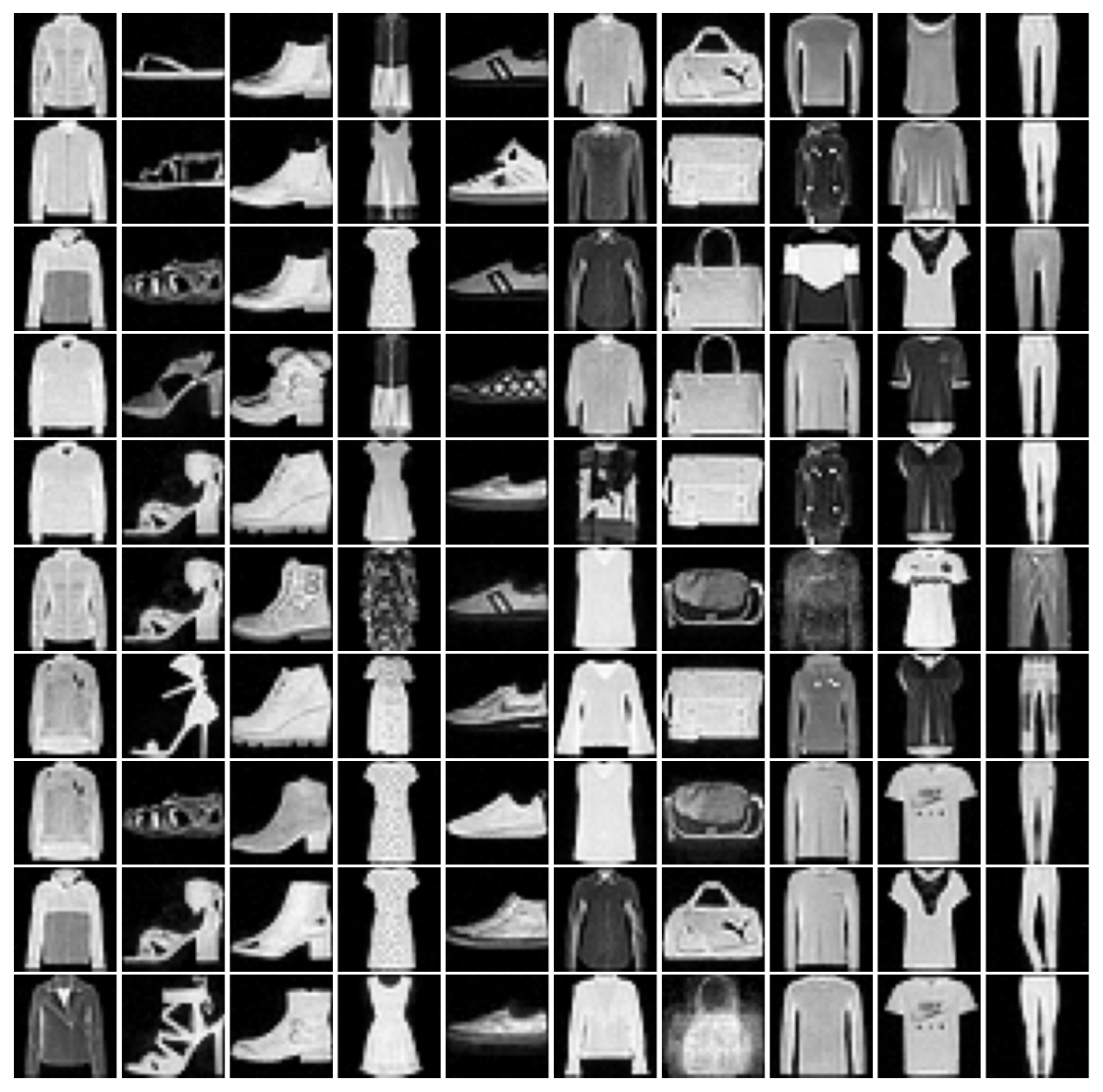}} \hfill
    \subfloat{\includegraphics[width=0.2\linewidth]{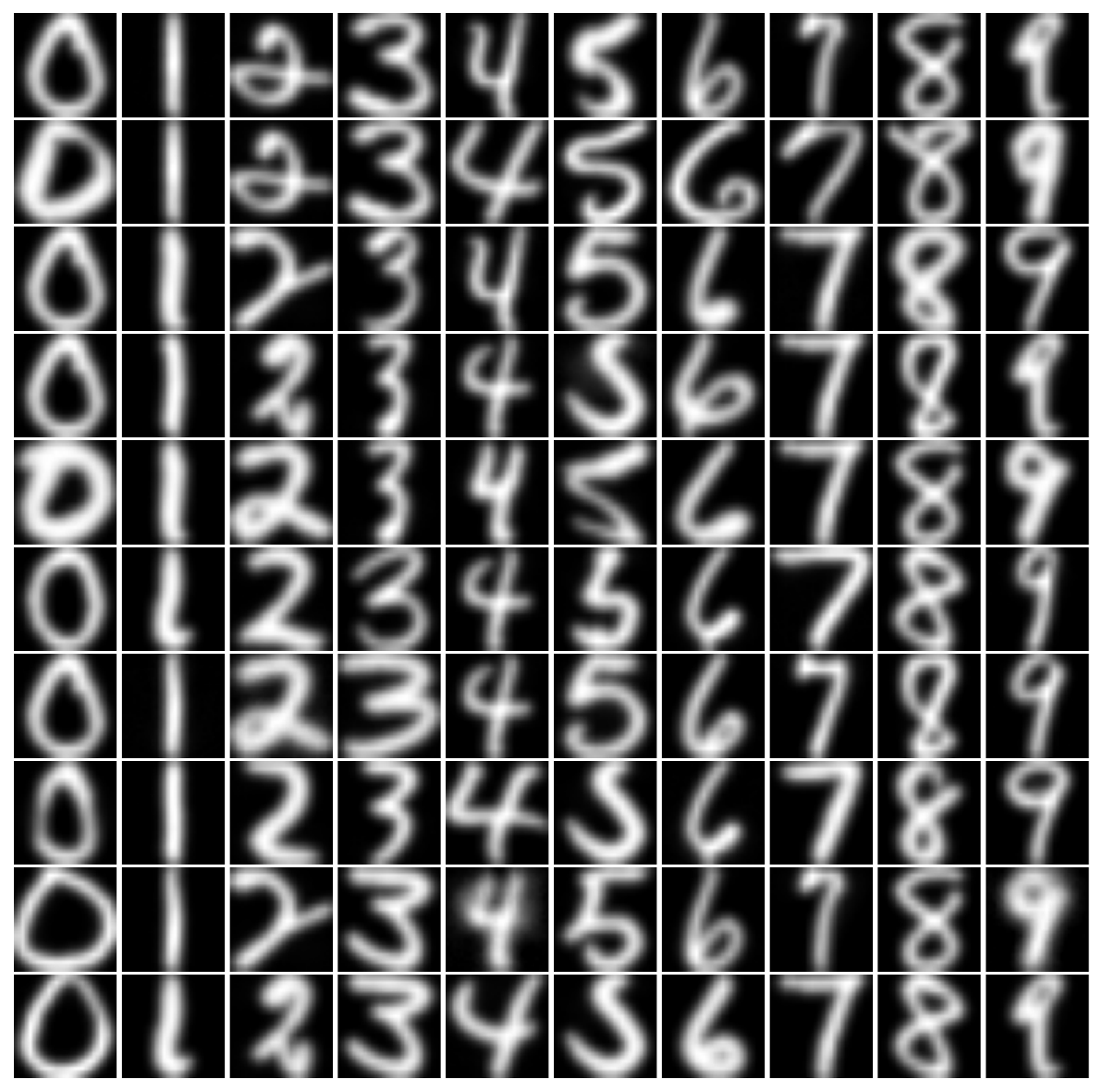}}
    \hfill
    \subfloat{\includegraphics[width=0.2\linewidth]{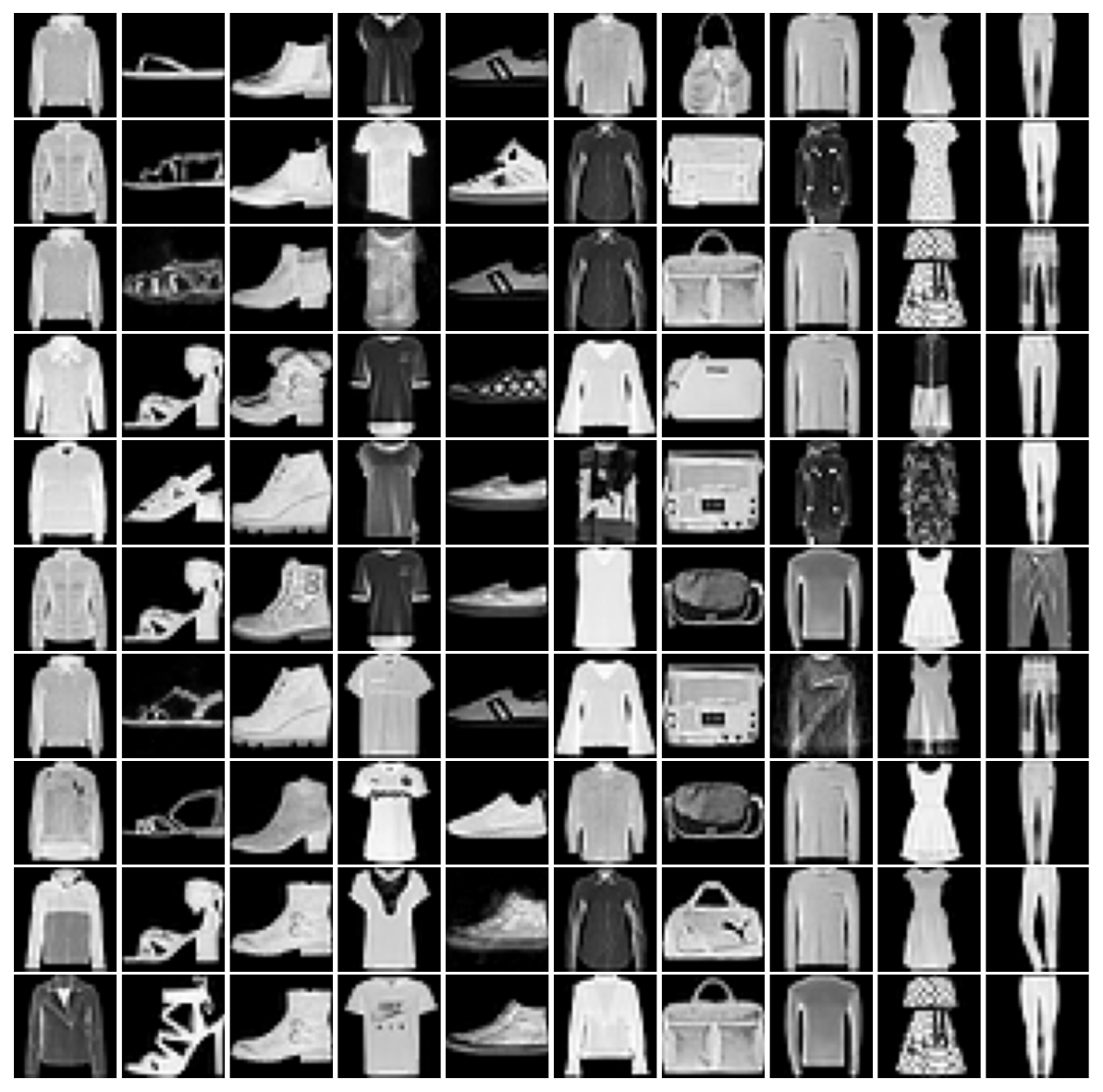}} \hfill
    \subfloat{\includegraphics[width=0.2\linewidth]{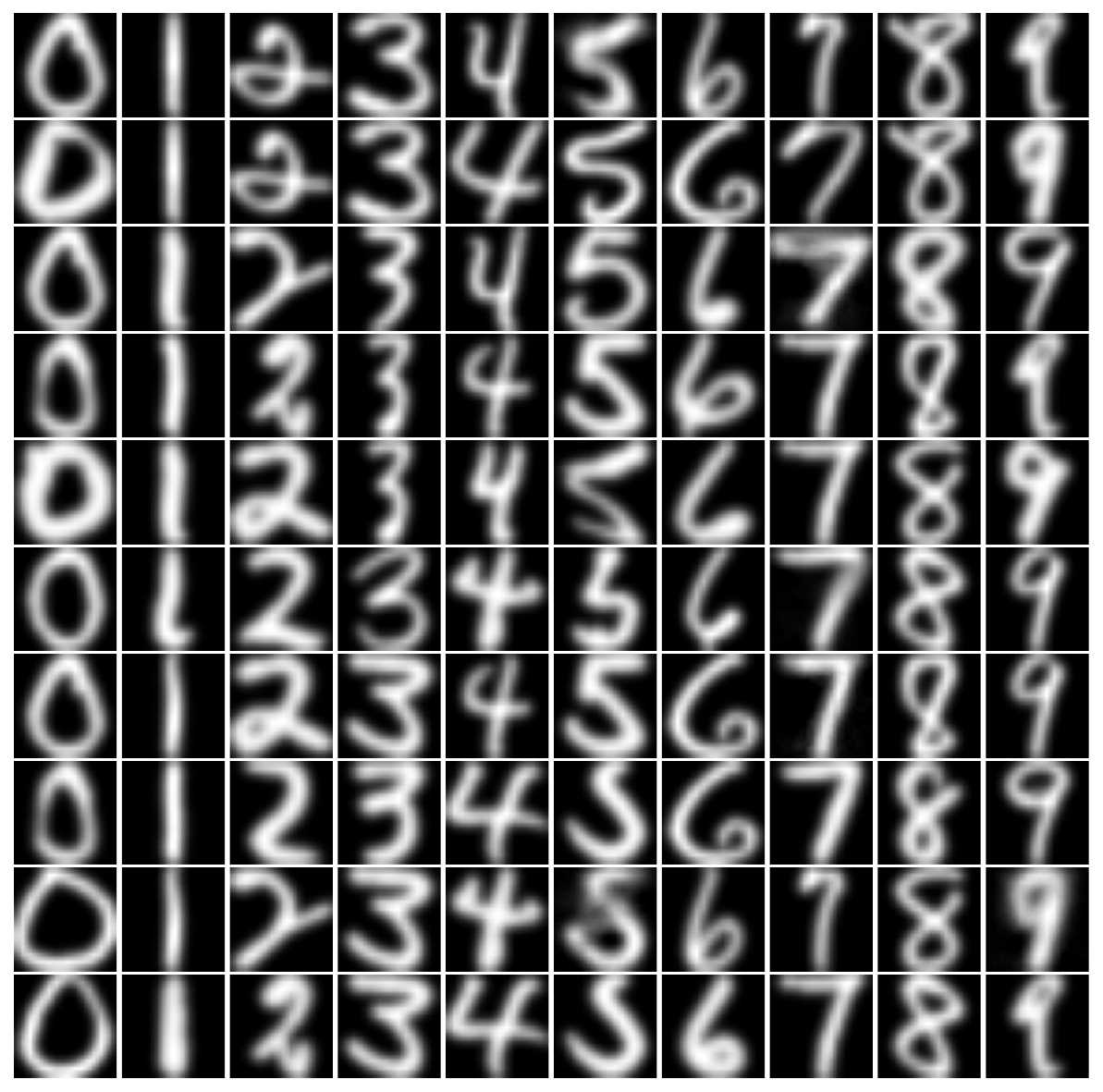}}
    \hspace*{\fill}
    \caption{Examples of images output by the flows for the transfer learning task on Fashion MNIST and USPS with $k=10$, 100K epochs and 10K projections, for $\swbdg$ (\textbf{Left}) and $\sotdd$ (\textbf{Right}).}
    \label{fig:imgs_tf_swbdg}
\end{figure}

\paragraph{Details of the Experiments.} 

\looseness=-1 The datasets of interest on which we learn classifiers are Fashion-MNIST and USPS. Thus, the number of class is always $C=10$, and we use $k\in \{1,5,10,100\}$. For the source dataset $\bP_0$, we always use MNIST with $n=200$ samples by class. In \Cref{tab:results_tf_grid_seach}, we report the accuracy obtained by training a LeNet-5 neural network for 50 epochs, with a AdamW optimizer and a learning rate of $3\cdot 10^{-4}$. We also average the results for 5 trainings of the neural network, and 3 outputs of the flows. The code to set-up the experiment is taken from the github of \citep{bonet2025flowing} available at \url{https://github.com/clbonet/Flowing_Datasets_with_WoW_Gradient_Flows}. We compare the results between the baseline where the neural network is trained directly on the target dataset $\cD^*$ with $k$ samples by class, and minimizing OTDD. These results are taken from \citep[Table 2]{bonet2025flowing}.

For the minimization of $\sotdd$ and $\swbdg$, we choose a step size of $\tau=1$, a momentum of $m=0.9$, and do a grid search over the number of gradient steps $T\in \{100,200,500,1000,10000,20000\}$ and the number of projections $L\in\{500,1000,5000,10000\}$ to approximate the integrals. We report in \Cref{tab:results_tf_grid_seach} the best results over this grid search, and on \Cref{tab:hyperparams_tf_swbdg} the values of the hyperparameters giving the best results. We also show on \Cref{fig:imgs_tf_swbdg_best} examples of images in each class obtained after minimizing respectively $\swbg$ and $\sotdd$, for $k=10$. We observe that these images are not always very clean. Thus, to show that minimizing these distances allows to obtain good looking images, we also report on \Cref{fig:imgs_tf_swbdg} results for $k=10$, $L=10K$ and $T=100K$, which however gave worse results than the ones reported in \Cref{tab:results_tf_grid_seach}. We hypothesize that the presence of noise can help the classifier to better generalize and thus to improve the accuracy results.

\section{EXPERIMENTS ON GAUSSIAN MIXTURES} \label{appendix:xps_gmms}

We show how the Gaussian Busemann Sliced-Wasserstein distances compare with $\W_{\bw}$, $\W_{\sw}$ and $\mathrm{DSMW}$ on the tasks of clustering detection and gradient flows.

\subsection{Clustering Detection}

\begin{figure}[t]
    \centering
    \includegraphics[width=\linewidth]{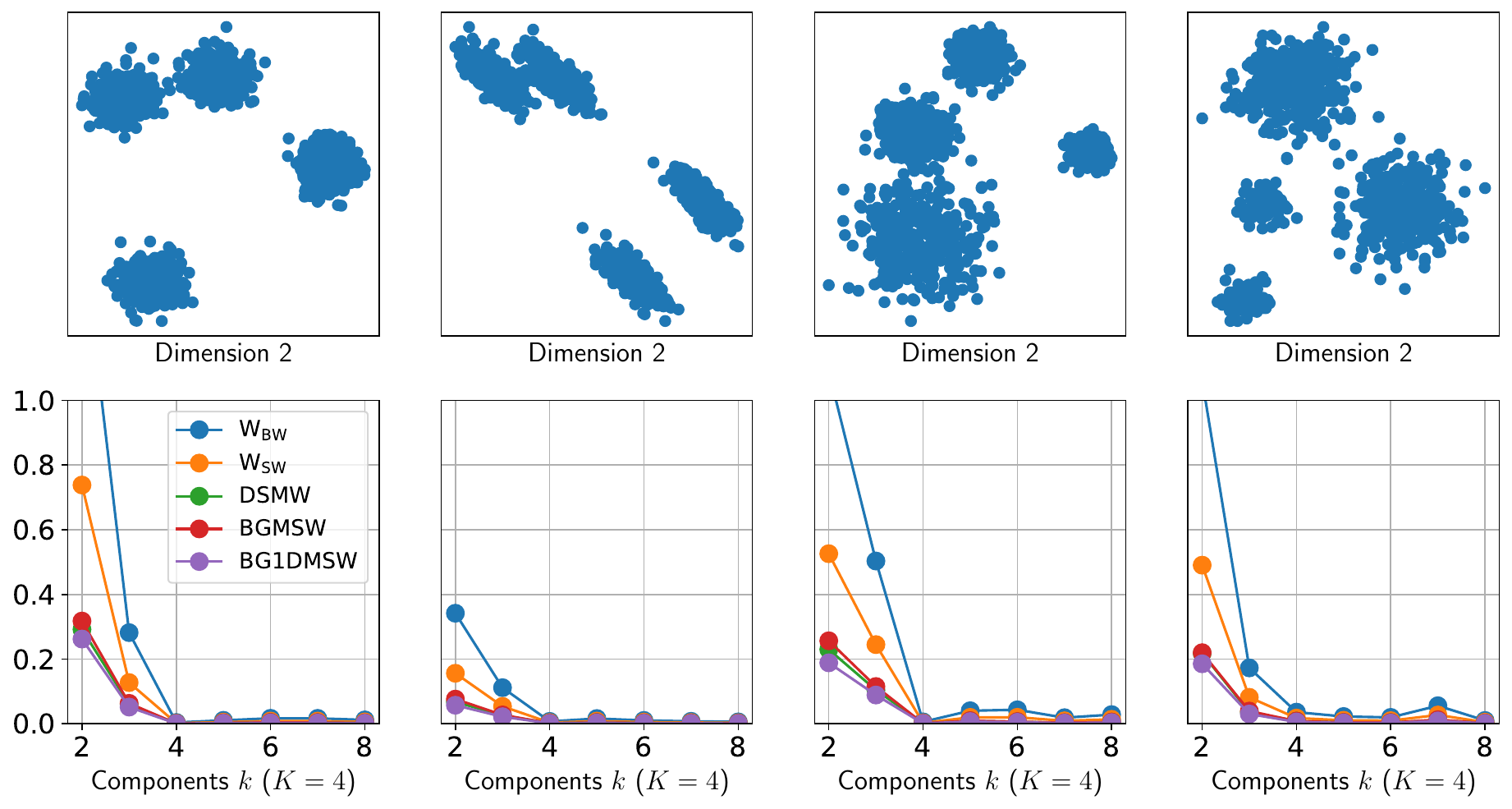}
    \caption{Detection of the number of clusters in dimension $d=2$.}
    \label{fig:cluster_detection_d2}
\end{figure}

\begin{figure}[t]
    \centering
    \includegraphics[width=\linewidth]{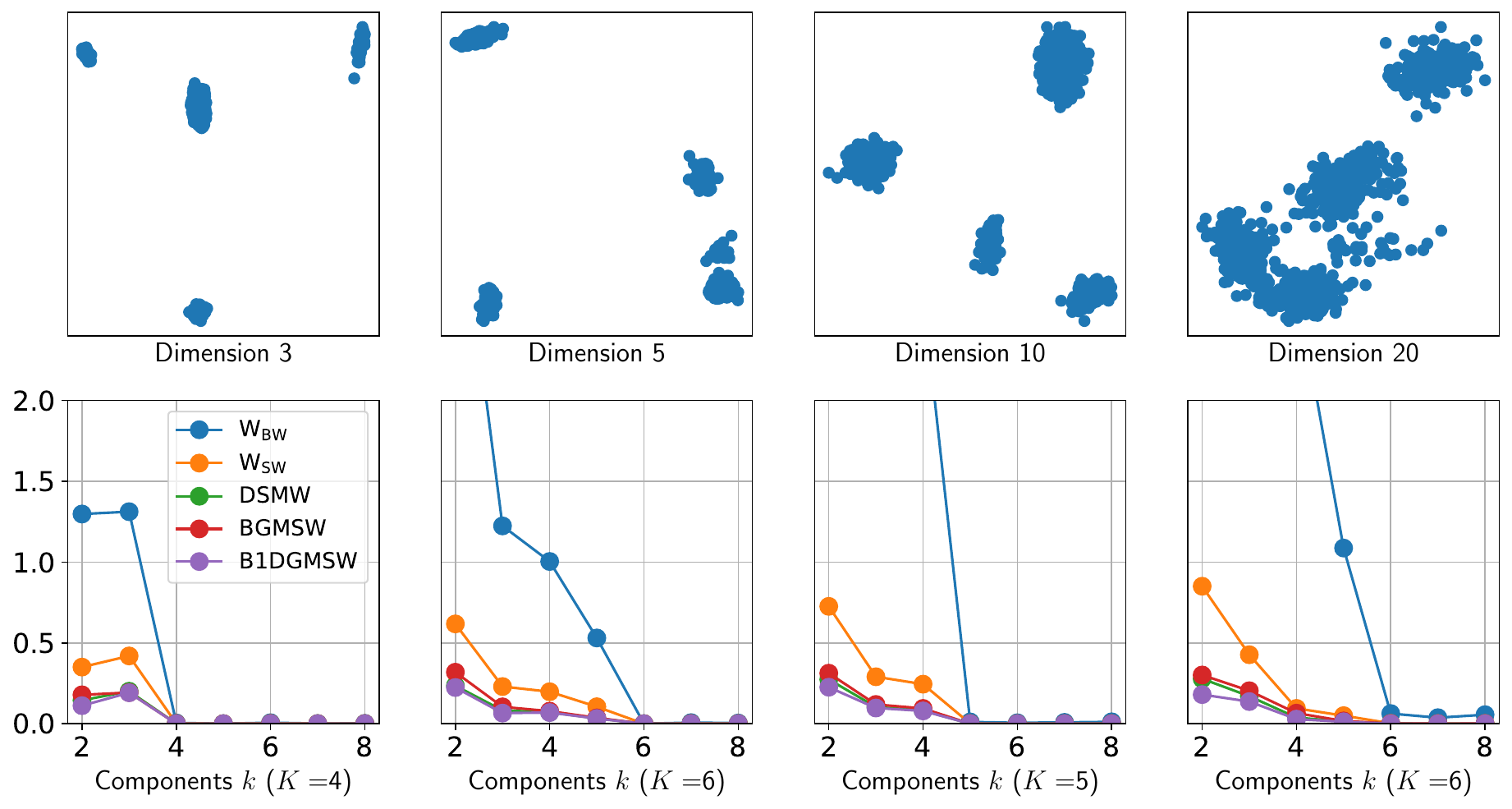}
    \caption{Detection of the number of clusters in dimension $d>2$.}
    \label{fig:cluster_detection_higher_dim}
\end{figure}

We perform the same experiment as in \citep[Section 4.3]{piening2025slicing}. We want to fit a Gaussian mixture on a dataset. A common technique is to use an EM. However, the number of clusters in the mixture needs to be specified and is not necessarily known beforehand. For $K$ components, denote $\bP_K\in\cPBW$ the corresponding mixture. \cite{piening2025slicing} propose to compute the distances between $\bP_k$ and $\bP_{k+1}$ for $k\ge 1$, and to find the smallest $k\in \mathbb{N}^*$ such that adding more components does not change the fitted model. To do so, they increase $k$ until the distances vanish.

We report on \Cref{fig:cluster_detection_d2} the results using the same setting as \citep{piening2025slicing}, \emph{i.e.} we have 1500 samples of a mixture in dimension $d=2$ with 4 clusters. On \Cref{fig:cluster_detection_higher_dim}, we add an experiment in dimensions $d\in\{3,5,10,20\}$ with $K\in\{4,5,6\}$ components, with means distributed uniformly on a sphere with radius $r=100$ and random covariance matrices. In every cases, we observe that the results between DSMW, $\smwbg$ and $\smwbdg$ are very close.

For DMSW, we use the code of \citep{piening2025slicing}, available at \url{https://github.com/MoePien/sliced_OT_for_GMMs} and for $\W_{\bw}$, we use the implementation of \texttt{POT} \citep{flamary2021pot}.

\subsection{Flows}

\begin{figure}[t]
    \centering
    \includegraphics[width=\linewidth]{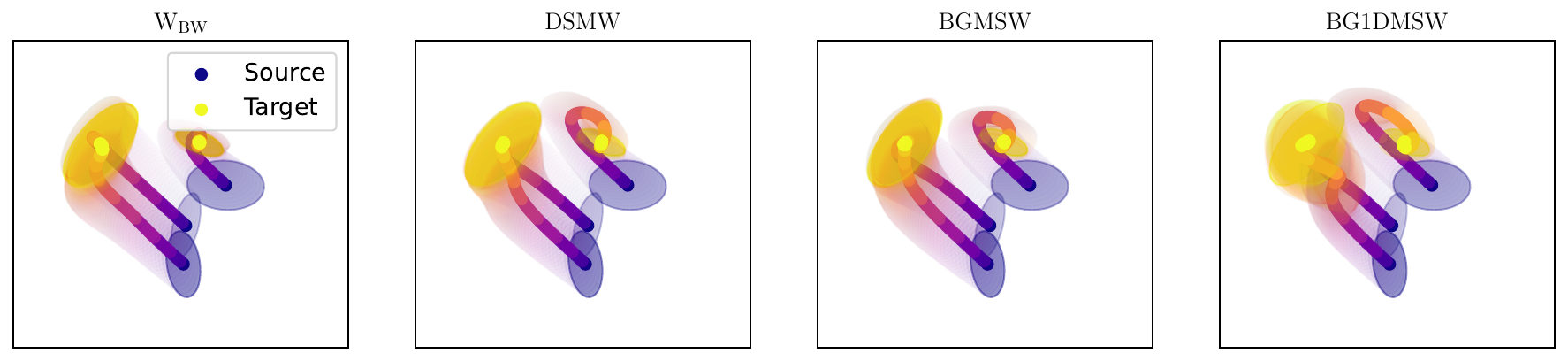}
    \caption{Flows of Gaussian Mixtures minimizing $\W_{\bw}$, $\mathrm{DMSW}$, $\smwbg$ or $\smwbdg$.}
    \label{fig:flow_gmm}
\end{figure}

As a proof of concept, we show on \Cref{fig:flow_gmm} the trajectories of flows minimizing $\W_{\bw}$, $\mathrm{DMSW}$, $\smwbg$ and $\smwbdg$ over $\cPBW$. We use the code from \texttt{POT} \citep{flamary2021pot} of an example minimizing $\W_{\bw}$ and available at \url{https://pythonot.github.io/auto_examples/gaussian_gmm/plot_GMM_flow.html#sphx-glr-auto-examples-gaussian-gmm-plot-gmm-flow-py}.

In this experiment, we start from a mixture with 3 Gaussian, with weights, means, and covariances randomly sampled. The target is a mixture with 2 Gaussian with also weights, means and covariances randomly sampled. Then, we optimize over the weights, means and covariances using the Adam optimizer and with projection steps to stay on the space of Gaussian mixtures, \emph{i.e.} we use a softmax to project the weights and clip the eigenvalues of the matrices to project the covariances in the space of positive definite matrices. The gradients are obtained using backpropagation. Note that this way of minimizing does not correspond to a gradient descent in $\cPBW$, which would not allow to change the weights, see \emph{e.g.} \citep[Appendix D.7]{bonet2025flowing} for a discussion on how to handle different number of components in the mixture.

\end{document}